\newcommand{\ourmethod}{EFGNN\xspace}
\definecolor{fgreen}{RGB}{177,207,149}
\definecolor{fred}{RGB}{234,179,138}
\definecolor{firstcolor}{RGB}{20,128,85}
\definecolor{secondcolor}{RGB}{20,104,168}
\definecolor{thirdcolor}{RGB}{236,84,20}
\newcommand{\first}[1]{\textcolor{firstcolor}{\underline{\mathbf{#1}}}}
\newcommand{\second}[1]{\textcolor{secondcolor}{\underline{#1}}}
\newcommand{\third}[1]{\textcolor{thirdcolor}{\underline{#1}}}
\newtheorem{proposition}{Proposition}
\begin{document}

\title{Uncertainty-Aware Graph Neural Networks: A Multi-Hop Evidence Fusion Approach}

\author{Qingfeng Chen, Shiyuan Li, Yixin Liu, Shirui Pan,~\IEEEmembership{Senior Member,~IEEE}, Geoffrey I. Webb,~\IEEEmembership{Fellow,~IEEE}, Shichao Zhang,~\IEEEmembership{Senior Member,~IEEE}% 
\thanks{Q. Chen and S. Li contributed equally to this work.}% 
\thanks{This work was partially supported by the Specific Research Project of Guangxi for Research Bases and Talents GuiKe AD24010011, and the Innovation Project of Guangxi Graduate Education YCSW2024138.}% <-this % stops a space
\thanks{Manuscript received April 19, 2021; revised August 16, 2021.}}

% The paper headers
\markboth{Journal of \LaTeX\ Class Files,~Vol.~14, No.~8, August~2021}%
{Shell \MakeLowercase{\textit{et al.}}: Uncertainty-Aware Graph Neural Networks: A Multi-Hop Evidence Fusion Approach}

% \IEEEpubid{0000--0000/00\$00.00~\copyright~2021 IEEE}
% Remember, if you use this you must call \IEEEpubidadjcol in the second
% column for its text to clear the IEEEpubid mark.

\maketitle

\begin{abstract}
Graph neural networks (GNNs) excel in graph representation learning by integrating graph structure and node features. Existing GNNs, unfortunately, fail to account for the uncertainty of class probabilities that vary with the depth of the model, leading to unreliable and risky predictions in real-world scenarios. To bridge the gap, in this paper, we propose a novel \underline{\textbf{E}}vidence \underline{\textbf{F}}using \underline{\textbf{G}}raph \underline{\textbf{N}}eural \underline{\textbf{N}}etwork (\ourmethod for short) to achieve trustworthy prediction, enhance node classification accuracy, and make explicit the risk of wrong predictions. In particular, we integrate the evidence theory with multi-hop propagation-based GNN architecture to quantify the prediction uncertainty of each node with the consideration of multiple receptive fields. Moreover, a parameter-free cumulative belief fusion (CBF) mechanism is developed to leverage the changes in prediction uncertainty and fuse the evidence to improve the trustworthiness of the final prediction. To effectively optimize the \ourmethod model, we carefully design a joint learning objective composed of evidence cross-entropy, dissonance coefficient, and false confident penalty. The experimental results on various datasets and theoretical analyses demonstrate the effectiveness of the proposed model in terms of accuracy and trustworthiness, as well as its robustness to potential attacks. The source code of \ourmethod is available at \href{https://github.com/Shiy-Li/EFGNN}{https://github.com/Shiy-Li/EFGNN}.
\end{abstract}

\begin{IEEEkeywords}
Uncertainty, Graph neural networks, Trustworthy, Cumulative belief fusion.
\end{IEEEkeywords}

\section{Introduction}\label{sec:introduction}
\IEEEPARstart{R}{eal-world} data is characterized by uncertainty, which stems from its incompleteness, inconsistency, and noise. Graphs, as a structure representing non-Euclidean data, naturally inherit this uncertainty\cite{akcora2019graphboot}. To handle learning problems on graph data, graph neural networks (GNNs) are efficient message passing-based neural network models that excel in numerous domains, including recommendation systems, drug discovery, and protein design~\cite{velivckovic2017graph,liu2022graph,bronstein2021geometric,baek2021accurate,seo2022siren,bai2024haqjsk}. However, the majority of GNNs often neglect the uncertainty inherent in the data, which hampers their applicability in safety-critical domains~\cite{amodei2016concrete,zhang2022trustworthy}. 
For instance, in a co-purchase network, unreliable predictions by the model may lead to incorrect recommendations. This could result in customers missing out on relevant products or being recommended inappropriate items, ultimately affecting customer satisfaction and sales performance.
This inspired us to develop novel graph learning models to generate sufficiently trustworthy decisions.

\begin{figure}[tbp]
\centerline{\includegraphics[width=\columnwidth]{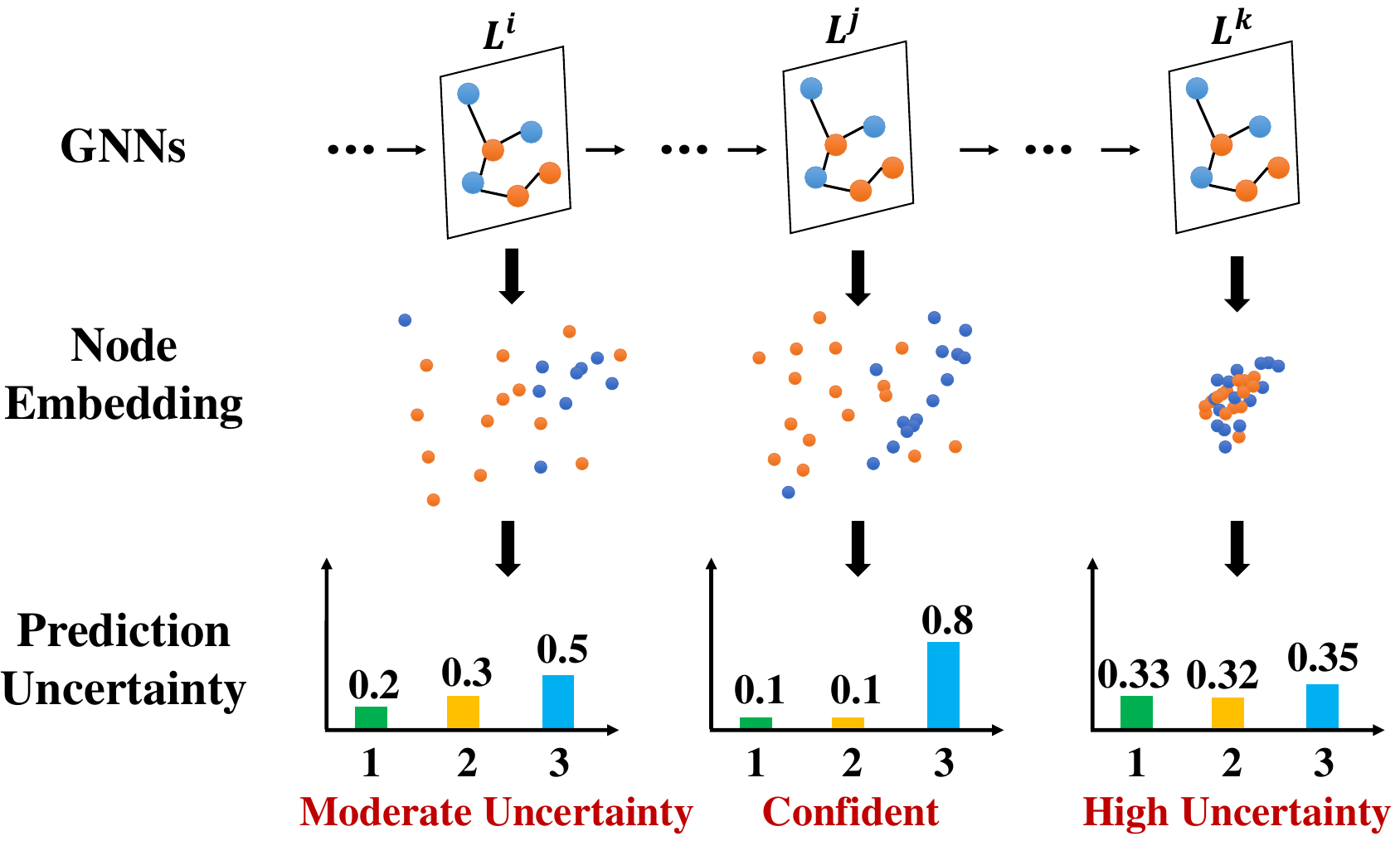}}
\caption{Node embeddings from different hops of neighborhoods may exhibit varying levels of prediction uncertainty.}
\label{fig:sim_intro}
\end{figure}

A trustworthy decision requires the model to not only give the decision outcome but also answer the question, ``How reliable is this decision?'' To this end, the model needs to provide an accurate confidence level for the prediction, i.e., prediction uncertainty~\cite{guo2017calibration}, to enhance the reliability of complex machine learning systems~\cite{charpentier2020posterior}. 
To expose the uncertainty of predictions, existing Bayesian methods~\cite{malinin2018predictive,zhang2019bayesian,neal2012bayesian} partition uncertainty into aleatoric uncertainty (AU) and epistemic uncertainty (EU), and seek to quantify uncertainty by establishing distributions of learnable weight parameters. Meanwhile, subjective logic (SL)-based methods~\cite{han2022trusted, sensoy2018evidential, han2021trusted} concentrate more on modeling the degree of trust in an event or proposition and how to reason based on evidence, viewing vacuity (lack of evidence) as uncertainty in subjective opinions. 
To establish trustworthy graph learning, a series of approaches, termed uncertainty-aware GNNs~\cite{zhao2020uncertainty, wang2021confident, liu2020uncertainty, stadler2021graph}, aim to integrate uncertainty into the architecture of GNNs.
For example, Zhao et al. first revealed the intrinsic relationship in GNNs between various uncertainties, including AU, EU, and vacuity~\cite{zhao2020uncertainty}. CaGCN~\cite{wang2021confident}, differently, employs a confidence calibration strategy with topological perception to solve the low confidence problem in GNNs.

Despite their improvement, the existing uncertainty-aware GNN models only assess the prediction uncertainty of the output layer of shallow GNNs, neglecting its variations with model depth, which can amplify misclassification risks. As shown in Figure \ref{fig:sim_intro}, when multiple GNN layers are stacked to mine higher-order structure information, the quality of node embeddings changes, consequently affecting the prediction uncertainty. The nonnegligible impact of model depth on uncertainty, unexpectedly, can lead to high-risk erroneous predictions. To gain an insightful understanding and quantify this impact, a natural research question arises: \textbf{\textit{How does prediction uncertainty change at different depths of GNNs?}}
 
To answer this question, we conduct a series of empirical analyses to assess the distribution of confidence for nodes of various classes at varying depths (as detailed in Section \ref{DESIGN}). Our empirical research reveals that the impact of changes in model depth on the confidence associated with different class labels is inconsistent and, in some cases, diametrically opposed. Furthermore, the optimal model depth for achieving the highest confidence varies across different classes. Therefore, to acquire confident predictions for each node, it is of great necessity to provide complete neighborhood information via deepening GNN models. 
Nevertheless, our experiments also highlight that with increased model depth, there is a greater proportion of high-risk predictions, which in turn leads to performance degradation. 
In this case, a follow-up question is: \textbf{\textit{How can we further leverage this change in uncertainty to generate more accuracy predictions?}}

To handle the aforementioned research question, in this paper, we propose a novel uncertainty-aware GNN termed \ourmethod. Our theme is to learn confident predictions by exploiting and fusing uncertainty results across different depths of the GNN model. More specifically, to provide predictions with uncertainty for each node considering different receptive fields, we devise a multi-hop evidence graph learning module. The proposed module generates the corresponding evidence based on information subsets of nodes in each neighborhood and derives prediction results with uncertainty using subjective logic theory. Meanwhile, recognizing that the optimal depth varies for different nodes, we introduce a cumulative belief fusion (CBF) mechanism that leverages all available evidence to produce the most trustworthy prediction. Finally, we design a loss function incorporating three loss terms, including evidence cross-entropy loss, dissonance coefficient loss, and Kullback-Leibler (KL) divergence loss, to effectively optimize \ourmethod. We delve into theoretical discussions and conduct empirical experiments to validate the efficacy of the proposed approach.
In summary, the contributions of this paper are: 

\begin{itemize}
    \item \textbf{Finding.} We identify the impact of model depth on the uncertainty predicted by GNNs, and provide a comprehensive analysis to investigate the patterns of the impact.
    \item \textbf{Method.} We propose a novel uncertainty-aware GNN, \ourmethod, that exploits the uncertainty results across different depths to generate trustworthy predictions via multi-hop evidence learning and CBF-based evidence fusion. 
    \item \textbf{Experiments.} We conduct theoretical analysis and extensive experiments to verify the effectiveness of \ourmethod. Experimental results on multiple datasets prove the trustworthiness and robustness of the proposed model.
\end{itemize}

\section{Related Work}
\subsection{Graph Neural Networks}
\noindent Graph neural networks (GNNs) have emerged as a solution to the limitations of traditional deep neural networks in handling non-Euclidean data spaces and have gained widespread use for capturing relationships or interactions in data~\cite{liu2021combining,monti2017geometric,gilmer2017neural,wang2024unifying}. They work by iteratively aggregating diverse neighborhood information to the target node through two fundamental operations: embedding propagation (EP) and embedding transformation (ET). One of the most notable GNN models is  the graph convolutional network (GCN), which employs Laplacian-symmetric normalized adjacency matrices for spectral-domain convolution operations, delivering outstanding results in node classification tasks~\cite{kipf2016semi}. In addition, other popular GNN models, such as GraphSAGE~\cite{hamilton2017inductive} and GAT~\cite{velivckovic2017graph}, have designed specific EP (i.e., sampling and attention) to improve the model. Except the approach that couples EP and ET operations, another type of GNNs has decoupled architecture where two operations are executed separately~\cite{liu2020towards,liu2024arc,li2024noise}. In these GNN models, EP/ET is performed multiple times before switching to the other operation, as seen in models like APPNP~\cite{gasteiger2018predict}, AP-GCN~\cite{spinelli2020adaptive}, SGC~\cite{wu2019simplifying}, and SIGN~\cite{frasca2020sign}. 

Recent studies have also addressed the challenges of oversmoothing and robustness in GNNs. For instance, Luo et al.~\cite{luo2021learning} propose an adaptive edge dropping mechanism guided by topological denoising, which effectively alleviates the adverse effects of noisy or redundant graph connections. Similarly, Zheng et al.~\cite{zheng2020robust} tackle the oversmoothing problem by employing neural sparsification to selectively prune less informative connections, thus enhancing the discriminative power and robustness of the learned representations.
On the other hand, Huang et al. proposed GCN-RW~\cite{huang2022graph}, enhancing training efficiency via random filters and a regularized least squares loss, which delivers accuracy comparable to state-of-the-art methods with reduced computational costs.
In addition, GNNs have been applied in a wide range of applications, including but not limited to neuromorphic vision, industrial fault diagnosis, and knowledge management~\cite{zheng2023auto,alkendi2022neuromorphic,chen2021interaction,li2024guest,liu2024self,wang2024goodat,pan2025label}.

\subsection{Uncertainty-aware Neural Networks}
\noindent In order to build more trustworthy AI models, a growing number of researchers are working to understand and quantify the uncertainty in neural network predictions. Depending on the different types and sources of uncertainty, various methods have been proposed to measure and quantify uncertainty in neural networks~\cite{tan2020explainable,gawlikowski2023survey}, such as Bayesian neural networks (BNNs)~\cite{posch2020correlated,gal2016dropout,denker1990transforming,kristiadi2022being}, and subjective logic (SL)~\cite{han2022trusted,sensoy2018evidential,han2021trusted}. 

In the field of graph machine learning, there is a branch of methods termed uncertainty-aware GNNs that aims to measure and reduce the uncertainty of GNN models. Among them, BGCNN is a representative method based on Bayesian graph convolutional neural networks for semi-supervised classification. This method realizes the effective utilization of a small amount of label data by inferring the joint posteriori of random graph parameters and node (or graph) labels~\cite{zhang2019bayesian}. Meanwhile, UaGGP is a graph Gaussian process method based on uncertainty perception, which utilizes prediction uncertainty and label smoothing regularization to guide mutual learning\cite{liu2020uncertainty}. In addition, Stadler et al. fully discuss the uncertainty estimation of non-independent node-level prediction, and propose GPN that can clearly perform Bayesian posterior update on the prediction of interdependent nodes~\cite{stadler2021graph}. In contrast to BNNs that infer prediction uncertainty indirectly through weight uncertainty, SL can express the uncertainty of subjective opinion in a simple and intuitive way. Then, Zhao et al. first provided the theoretical proof to explain the relationship between different types (i.e., Bayesian and SL) of uncertainty~\cite{zhao2020uncertainty}.

Although uncertainty-aware GNNs have achieved success, prior research has often overlooked the variations in prediction uncertainty at different depths. In contrast to previous approaches, we directly assess the prediction uncertainty of nodes in diverse neighborhoods using evidence theory and employ a cumulative belief fusion operator with theoretical assurance to generate the most trustworthy predictions.

\section{Preliminaries}
\begin{figure}[tbp]
\centerline{\includegraphics[width=0.5\textwidth]{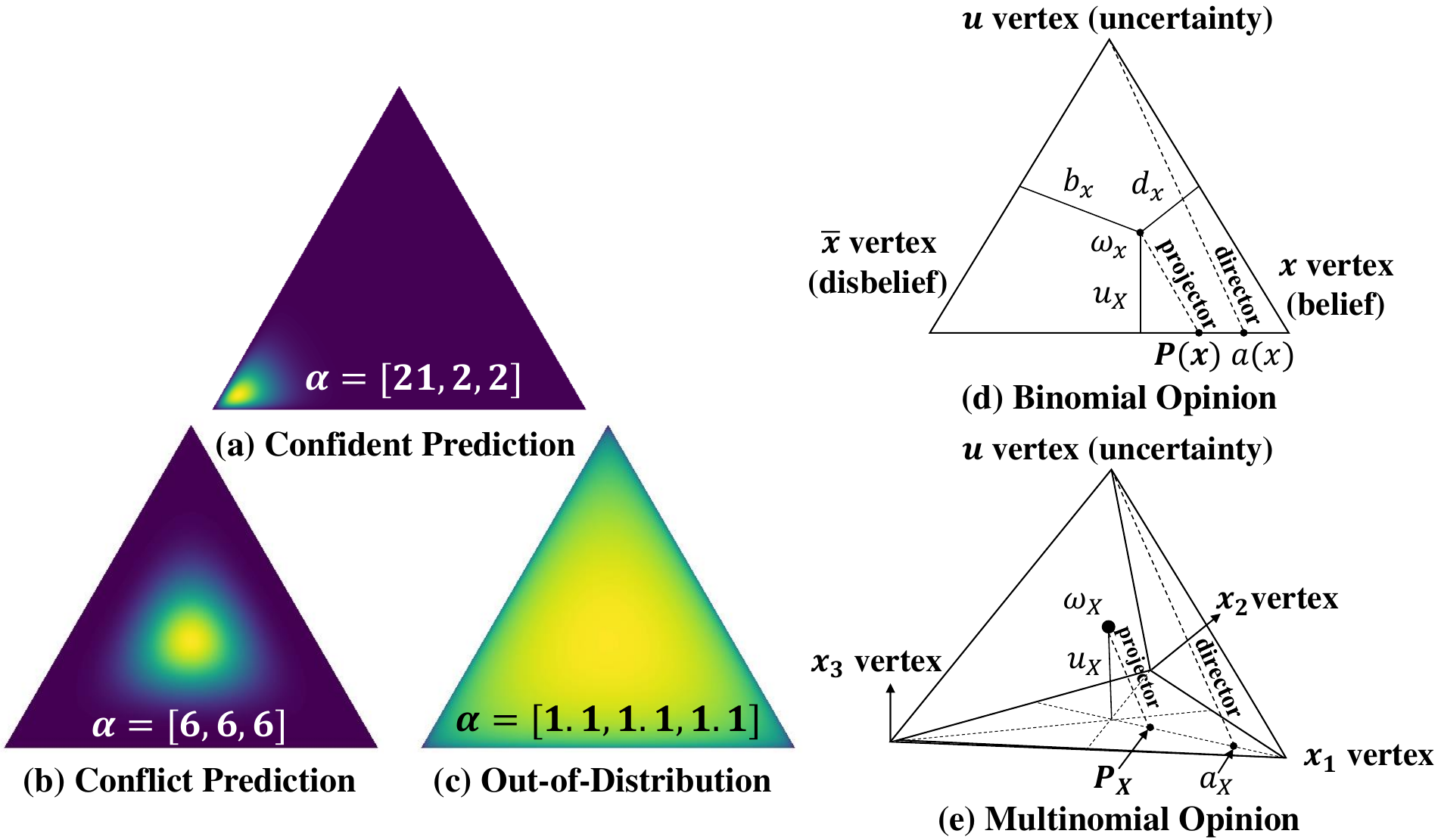}}
\caption{The typical example of Dirichlet distribution and multinomial opinion. (a), (b), (c): Dirichlet distribution with different uncertainty. (d): barycentric triangle visualization of binomial opinion. (e) barycentric tetrahedron visualization of trinomial opinion.}
\label{fig:Opinion_example}
\end{figure}

\subsection{Notation and Problem Definition}
\noindent An attributed and undirected graph can be represented by $G=(V, E)$, where $V=\left\{v_1, v_2, \ldots, v_n\right\}$ is the set of nodes, $E$ is the set of edges in the graph, and $n$ represents the total node amount. $\mathbf{A} \in \mathbb{R}^{n \times n}$ represents the adjacency matrix of $G$, where the $i$, $j$-th entry $a_{i j}=1$ if and only if there exists edge between $v_i$ and $v_j$ in $E$, otherwise $a_{i j}=0$. $\mathbf{X} \in \mathbb{R}^{n \times d}$ and $\mathbf{Y} \in \mathbb{R}^{n \times K}$ represent the node feature matrix and one-hot node label matrix of $G$ respectively, where $d$ is the feature dimension and $K$ is the number of classes. 

In this paper, we regard semi-supervised node classification, a conventional and widely used graph learning problem, as the prime focus of our research. In semi-supervised node classification, we have a set $\mathbf{Y}=\left\{\mathbf{Y}^{UL}, \mathbf{Y}^L\right\}$ where $\mathbf{Y}^{UL}$ and $\mathbf{Y}^L$ represent the labels of unlabeled and labeled nodes, respectively. The objective is to establish a mapping between node features $\mathbf{X}$ and node labels $\mathbf{Y}$. Specifically, the target is to train a prediction function $f\left(G, \mathbf{X}, \mathbf{Y}^L\right) \rightarrow \mathbf{Y}^{UL}$. 

To facilitate the interpretability and trustworthiness of graph learning models, uncertainty-aware GNNs aim to give the principal (i.e. model owner, user, etc.) an assessment of the trustworthiness of their own predictions, so that the principal can understand their predictions. 
Formally, we can reach the goal by the prediction function $f(G,\mathbf{X}, \mathbf{Y}^L) \rightarrow (\mathbf{Y}^{UL}, \mathbf{U})$, where $\mathbf{Y}^{UL}$ is the probability distribution of the prediction and $\mathbf{U}$ is the model's assessment of the uncertainty of the prediction. In practice, the model can be optimized by $\underset{\boldsymbol{\theta}}{\operatorname{argmin}} \sum_{i=1}^n \mathrm{err(\boldsymbol{\theta})+\mathbf{U(\boldsymbol{\theta})}}$, where $err(\cdot)$ and $\boldsymbol{\theta}$ indicates prediction error and learnable weights, respectively.

\subsection{Graph Neural Networks}
\noindent Current GNNs are generally built on the neural message passing framework, performing an embedding propagation (EP) and an embedding transformation (ET) operation at each layer. Taking GCN~\cite{kipf2016semi} as an example, the calculation formula of the ${\ell}$-th layer of GCN can be written by:

\begin{equation}
\mathbf{X}^{\ell}=\delta\left(\hat{\mathbf{A}} \mathbf{X}^{\ell-1} \Theta^{\ell}\right), \quad \hat{\mathbf{A}}=\widetilde{\mathbf{D}}^{-\frac{1}{2}} \tilde{\mathbf{A}} \widetilde{\mathbf{D}}^{-\frac{1}{2}},
\label{eq.1}
\end{equation}

\noindent where $\tilde{\mathbf{A}}=\mathbf{A}+\mathbf{I}$ represents the adjacency matrix of the undirected graph $G$ after adding self-loops matrix $\mathbf{I}$, $\widetilde{\mathbf{D}}$ is the degree matrix, $\mathbf{X}^{\ell}$ is the embedding matrix of nodes at the ${\ell}$-th layer, $\mathbf{X}^{0}=\mathbf{X}$ represents the original feature matrix of nodes. $\Theta^{\ell}$ is the trainable weight for ET, and $\delta$ refers to the activation function, $\hat{\mathbf{A}}$ denotes the adjacency matrix after Laplacian symmetric normalization. 
As shown in Eq.~\ref{eq.1}, EP aggregates the embeddings of node neighborhoods into itself through $\hat{\mathbf{A}} \mathbf{X}^{\ell-1}$, and ET implements nonlinear mapping of nodes embeddings based on $\delta (\mathbf{X}^{\ell-1}\Theta^{\ell})$.
Based on the different combination strategies of EP and ET, GNNs can be divided into coupled and decoupled GNNs. Among them, GCN~\cite{kipf2016semi} and GraphSAGE~\cite{hamilton2017inductive} belong to coupled GNNs, and they nest EP and ET each other, i.e., $EP \to ET \cdots EP \to ET$. For decoupled GNNs, on the one hand, SGC~\cite{wu2019simplifying} executes EP multiple times before executing ET, i.e., $EP \cdots EP \to ET \cdots ET$. APPNP~\cite{gasteiger2018predict}, on the other hand, executes ET and then EP multiple times, that is, $ET \cdots ET \to EP \cdots EP$.

\subsection{Subjective Logic}\label{sec:SL}

\noindent In fact, the uncertainty in GNNs can arise from various factors, including the inherent randomness in the data, errors during model training, and unknown test data from different distributions. These uncertainties are ultimately reflected in the model's predictions, such as probability distributions. The primary goal of this paper is to achieve reliable predictions, improve node classification accuracy, and clarify the risks associated with incorrect predictions. Therefore, we focus on quantifying uncertainty at a higher level (i.e., the model's predictions), rather than delving into specific sources of uncertainty. 
Subjective logic (SL) is an evidence-based formalism for uncertainty reasoning which can be integrated into end-to-end models with Dirichlet distribution~\cite{josang2016subjective}. Contrary to the Bayesian approach, SL allows for a simpler and more direct modeling of uncertainty, focusing on the vacuum of evidence in the predictions. For more details due to the length of the paper, please refer to the reference~\cite{josang2016subjective,han2022trusted}.

In the context of $K$-class classification, SL relates the parameter $\boldsymbol{\alpha}$ of the Dirichlet distribution $\operatorname{Dir}(\mathbf{p}|\boldsymbol{\alpha})$ to the belief mass of the opinion, where the Dirichlet distribution can be considered as the conjugate prior of the categorical distribution. $\mathbf{p}=\left[p_1, \ldots, p_K\right]^T$ represents the simplex of class assignment probabilities~\cite{bishop2006pattern}. Several typical Dirichlet distributions and multinomial opinions are shown in Figure~\ref{fig:Opinion_example}.
Specifically, $\boldsymbol{\alpha}$ can be computed by $\alpha_k=e_k+a_kW$, where $e_k$ refers to the amount of evidence collected from the neural network that supports the sample being classified into the $k$-th class, and $e_k\geq0$, $W$ represent the non-information weight of uncertain evidence and $\mathbf{a}$ refers to the base rate distribution. Without loss of generality, $a_1=\ldots=a_K=1$ and $W=K$, i.e., $\alpha_k=e_k+1$. And the expected probability distribution $\mathbb{E}\left[p_k\right]$ can be calculated by:

\begin{equation}
\mathbb{E}\left[p_k\right]=\frac{\alpha_k}{S}=\frac{e_k+1}{K+\sum_{k=1}^K e_k},
\label{eq.2}
\end{equation}

\noindent where $S=\sum_{k=1}^K\left(e_k+1\right)=\sum_{k=1}^K \alpha_k$ refers to the Dirichlet strength.

Additionally, in a $K$-class problem, the multinomial opinion of the sample is represented as $\boldsymbol{\omega}=(\mathbf{b}, u, \mathbf{a})$, $\mathbf{b}$ and $u$ satisfy the following equation:

\begin{equation}
u+\sum_{k=1}^K b_k=1, u\geq0, b_k\geq0,
\label{eq.3}
\end{equation}

\noindent where $u$ represents the uncertainty mass of the opinion, $\mathbf{b}=[b_1,\ldots,b_K]^T$ and $\mathbf{a}=[a_1,\ldots,a_K]^T$ represent belief mass distribution and base rate distribution, respectively. Then, the projected probability distribution of a multinomial opinion can be calculated by $p_k=b_k+a_ku$. On this basis, when the expected probability distribution of the Dirichlet distribution is equal to the opinion projected probability distribution, the mapping between the opinion and the Dirichlet distribution can be obtained by $\boldsymbol{\omega}=(\mathbf{b}, u, \mathbf{a}) \leftrightarrow \operatorname{Dir}(\mathbf{p}|\boldsymbol{\alpha})$, satisfying:

\begin{equation}
b_k=\frac{e_k}{S}=\frac{\alpha_{k}-1}{S}, u=\frac{K}{S}.
\label{eq.4}
\end{equation}

To sum up, with SL, we can realize the transformation of Dirichlet distribution to multinomial opinion and quantify the uncertainty of evidence.

\section{Motivation and Analysis}\label{DESIGN}
In this section, we expose the influence of \textbf{model depth} on the prediction uncertainty of uncertainty-aware GNNs. Firstly, we explore the optimal depth to acquire the most confident prediction for each class of nodes. Then, we investigate the impact of increasing network depth on prediction uncertainty. Drawing from analytical and empirical studies, we try to summarize key design principles for uncertainty-aware GNNs with the consideration of model depths.

\subsection{Optimal Class-Level Confidence at Different Depths}
\label{sec:class_moti}
In the existing uncertainty-aware GNNs, a common paradigm is to estimate the prediction uncertainty from the final output layer. In this case, the prediction uncertainties of all nodes are limited by the fixed number of layers. More specifically, when employing a constant number of embedding propagation (EP) steps, the receptive fields that govern the prediction uncertainties for individual nodes remain unchanged. Then, a natural question arises: ``\textit{Does the optimal depth to achieve the optimal confidence for each node remain constant?}''

The answer to this question, intuitively, is negative. In real-world scenarios, a reasonable hypothesis is that different nodes may depend on different numbers of neighboring hops to make optimal predictions with high confidence~\cite{wang2022uncovering}. Typically, nodes with different categorical labels would have different optimal depths to make confident predictions. Taking co-purchase networks as an example (as shown in Figure~\ref{fig:Structural_unfairness}), for several classes, it only requires 1-hop information propagation to obtain the confident prediction, such as the accessories - Class A in Figure~\ref{fig:Structural_unfairness}~(b); differently, for other classes (e.g., cell phones - Class B), the optimal GNN depths to make a reliable prediction can be larger, as demonstrated in Figure~\ref{fig:Structural_unfairness}~(c). 

To empirically verify the above hypothesis, we conduct a motivated experiment on real-world graph data. Specifically, we use the true class probability as the confidence measure of each sample~\cite{corbiere2019addressing, han2022multimodal}, and visualize the confidence distribution of 6 different classes of nodes at different depths on Citeseer dataset. In Figure~\ref{fig:Confi_Dis_Citeseer}, we can witness that the optimal depths to acquire the highest confidence for each class are quite different. 
For instance, in classes B, C, and F, an 8-layer GNN has the highest confidence. However, the highest confidence for classes D and E occurs when the depth is 4 and 10, respectively. For class A, differently, the best confidence is achieved by a shallow GNN with 2 layers.

\noindent\textbf{Discussion:} Through empirical analysis, we discover that the variation trend of the confidence distribution for each class does not follow a completely uniform pattern across different depths or neighborhoods. In addition, the depth of the best confidence will be different for each class. Hence, two essential criteria can be identified: 1) To achieve confident predictions for various nodes, it is advantageous to incorporate the GNN's output predictions at multiple depths. 2) To comprehensively obtain confident predictions for nodes across different classes, a suitably deep GNN (i.e., with an adequate number of EP steps) is required. 

\begin{figure}[tbp]
\centerline{\includegraphics[width=0.5\textwidth]{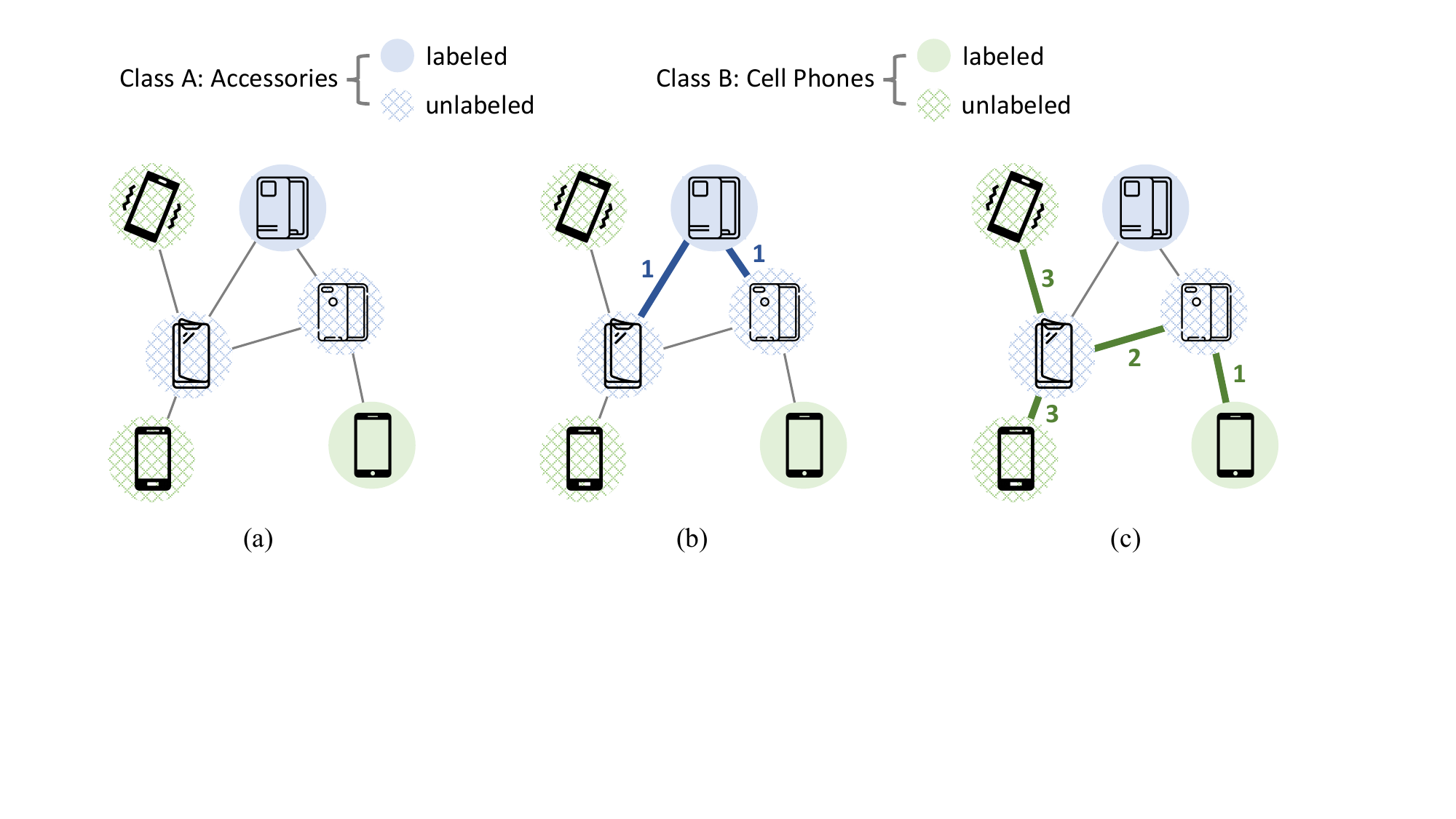}}
\caption{An example of different optimal class-level certainty at GNNs with different depths.}
\label{fig:Structural_unfairness}
\end{figure}

\begin{figure*} [tbp]
	\centering
	\subfloat[Class A\label{subfig:con_dis_citeseer_c0}]{
		\includegraphics[scale=0.15]{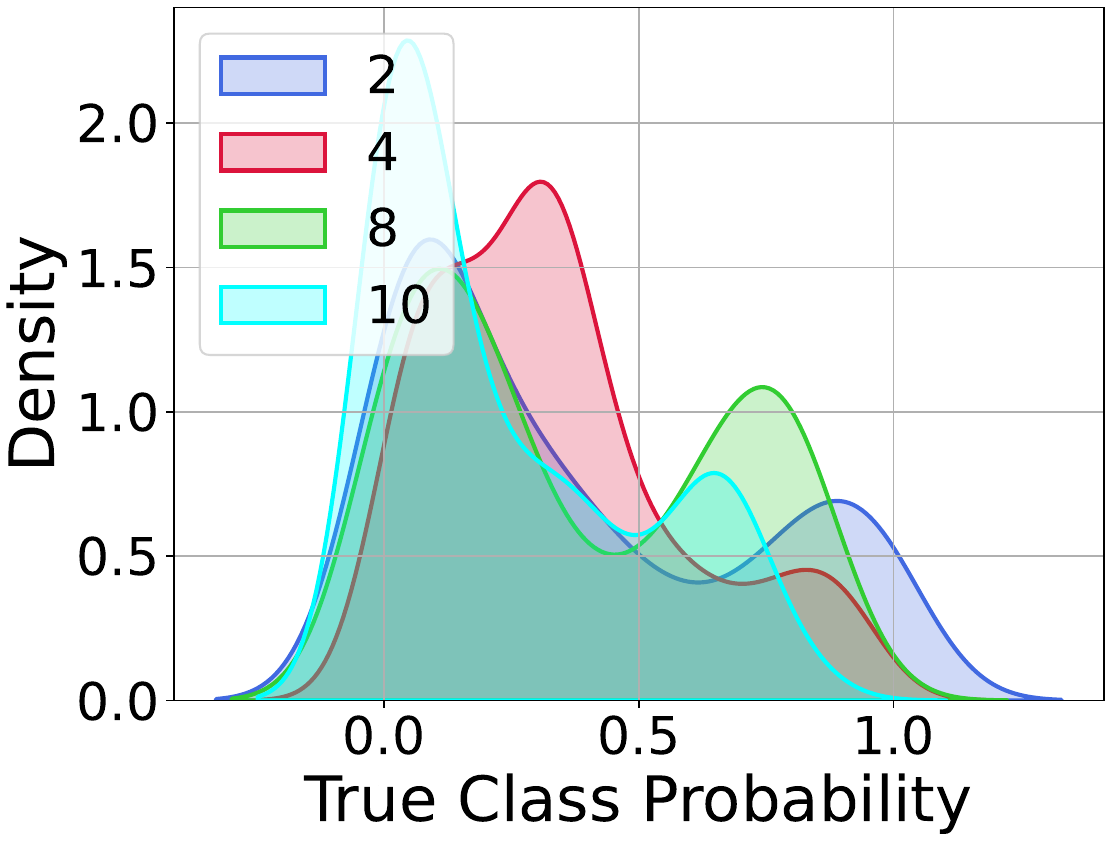}}\hfill
	\subfloat[Class B\label{subfig:con_dis_citeseer_c1}]{
		\includegraphics[scale=0.15]{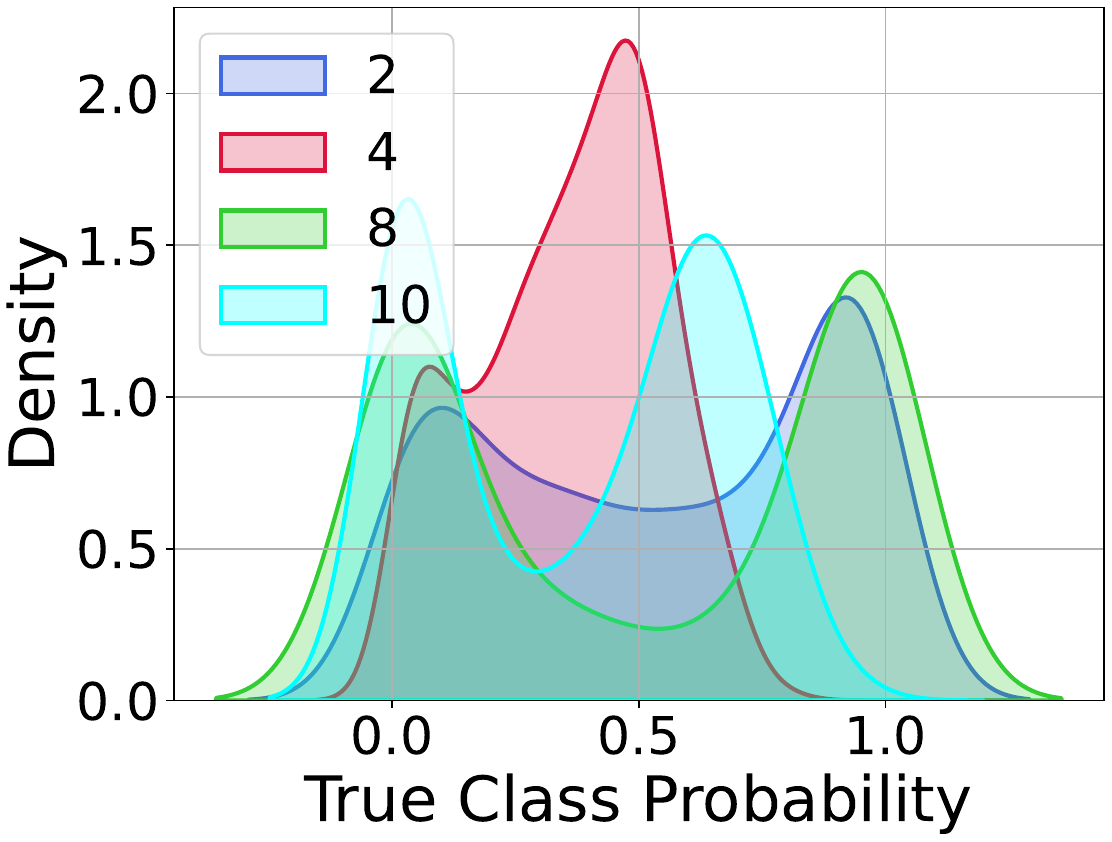}}\hfill
	\subfloat[Class C\label{subfig:con_dis_citeseer_c2}]{
		\includegraphics[scale=0.15]{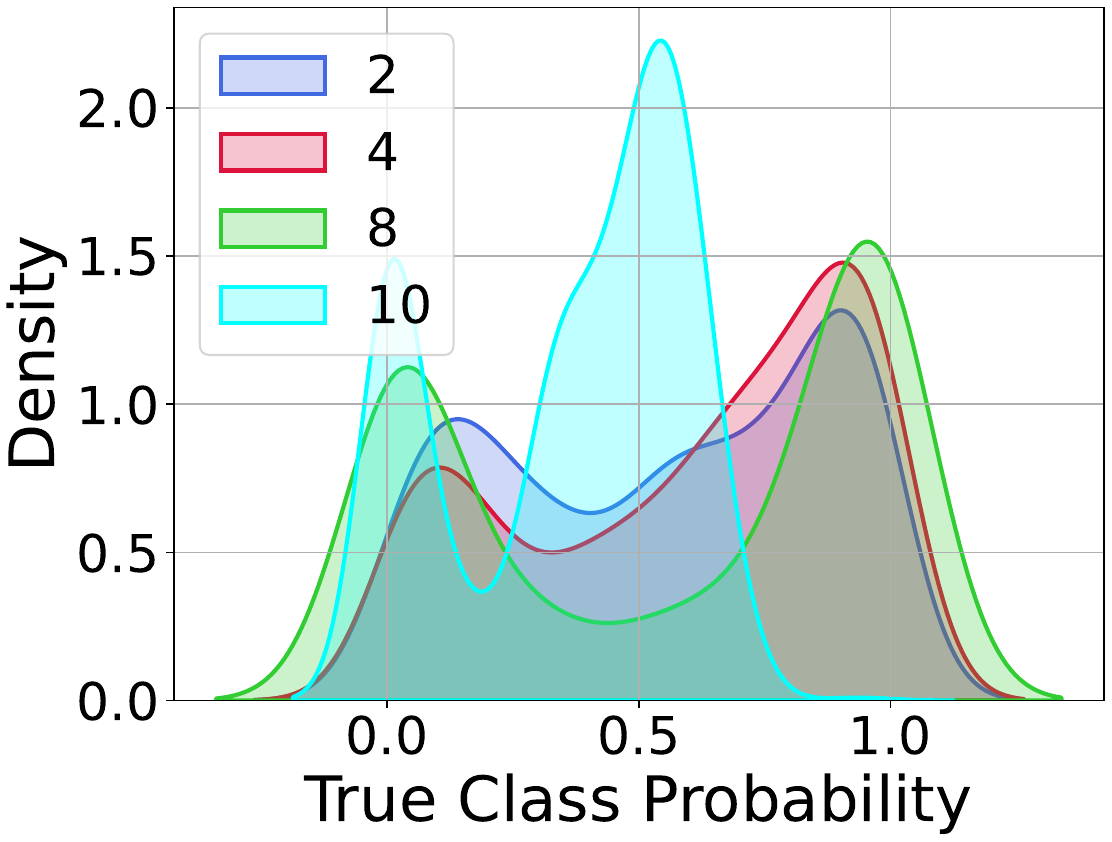}}\hfill
        \subfloat[Class D\label{subfig:con_dis_citeseer_c3}]{
		\includegraphics[scale=0.15]{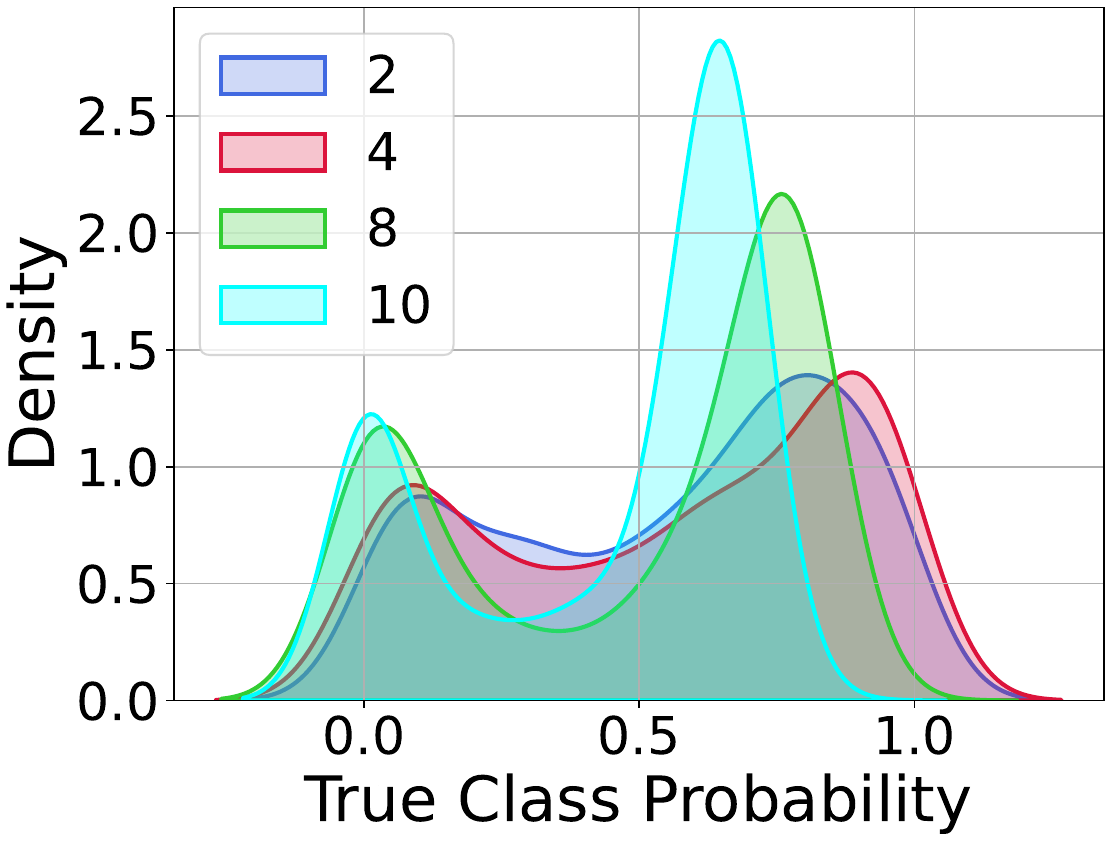}}\hfill
	\subfloat[Class E\label{subfig:con_dis_citeseer_c4}]{
		\includegraphics[scale=0.15]{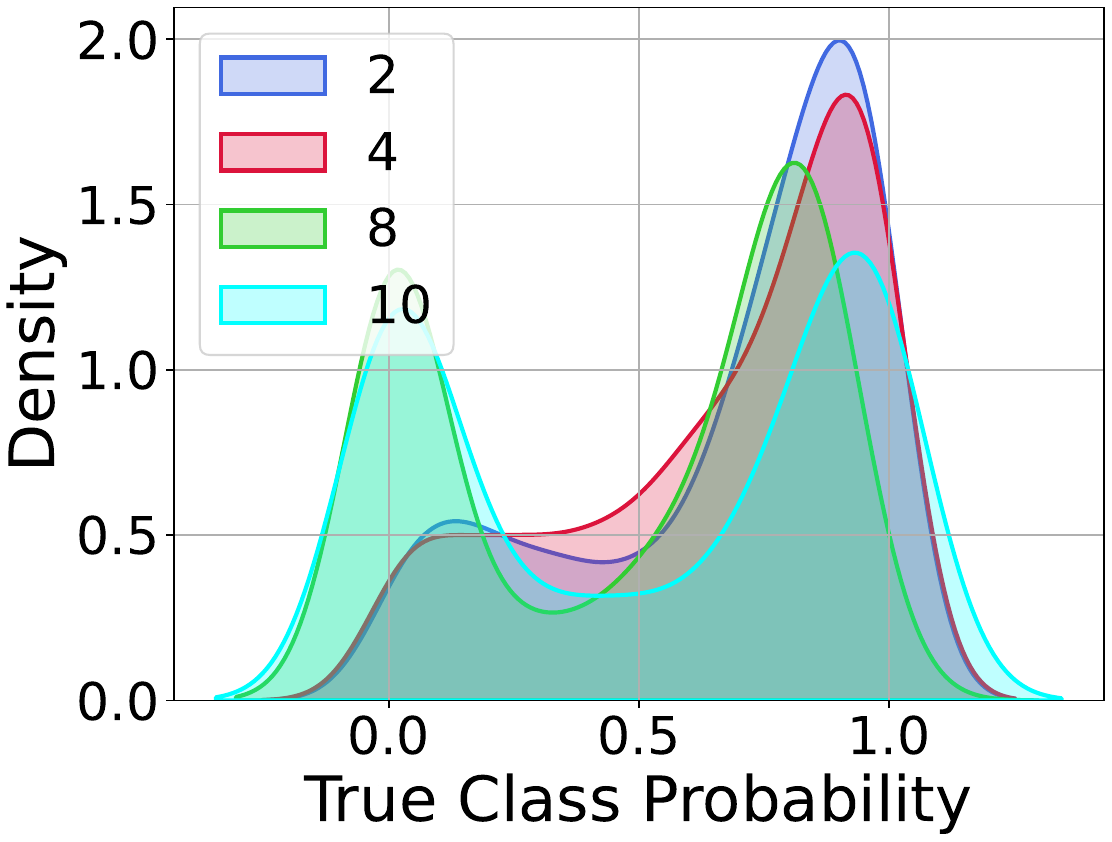}}\hfill
	\subfloat[Class F\label{subfig:con_dis_citeseer_c5}]{
		\includegraphics[scale=0.15]{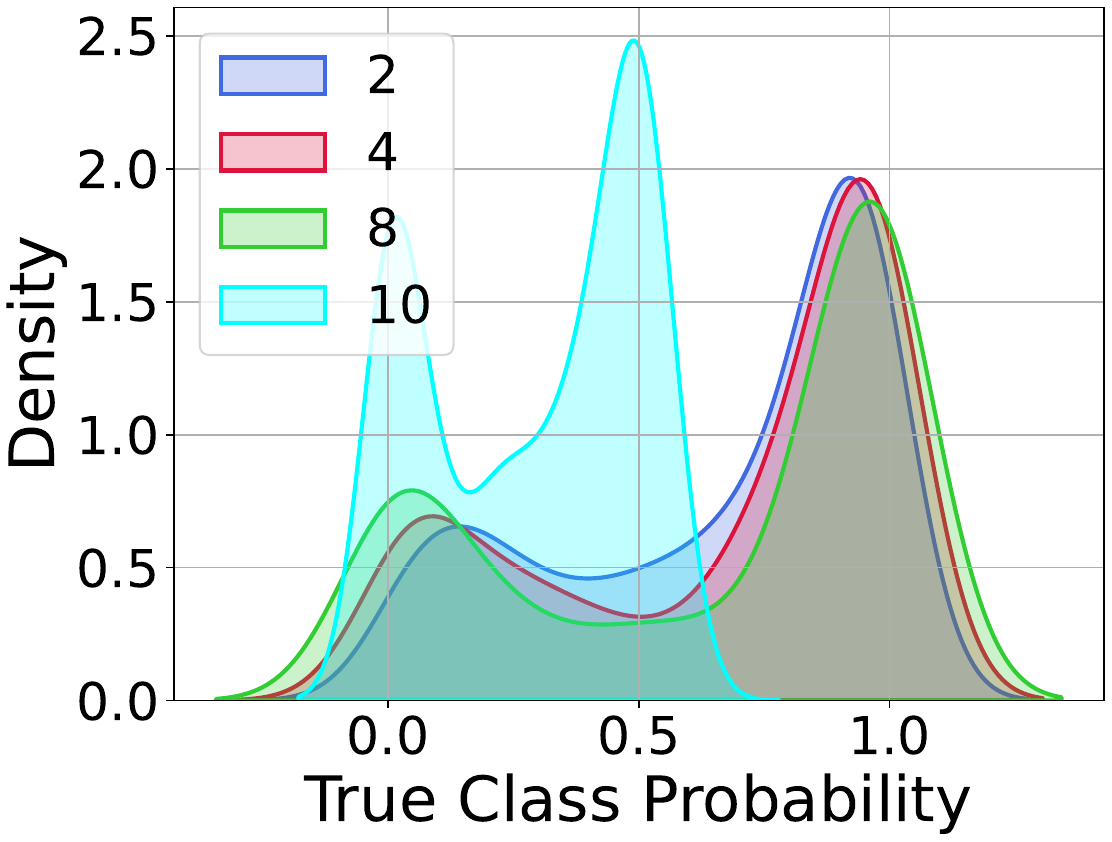}}
	\caption{The distributions of confidence w.r.t. true class probability for 6 classes (A-F) of Citeseer dataset at different depths.}
	\label{fig:Confi_Dis_Citeseer}
\end{figure*}

\begin{figure*} [tbp]
	\centering
	\subfloat[Cora\label{subfig:std_cora}]{
		\includegraphics[scale=0.18]{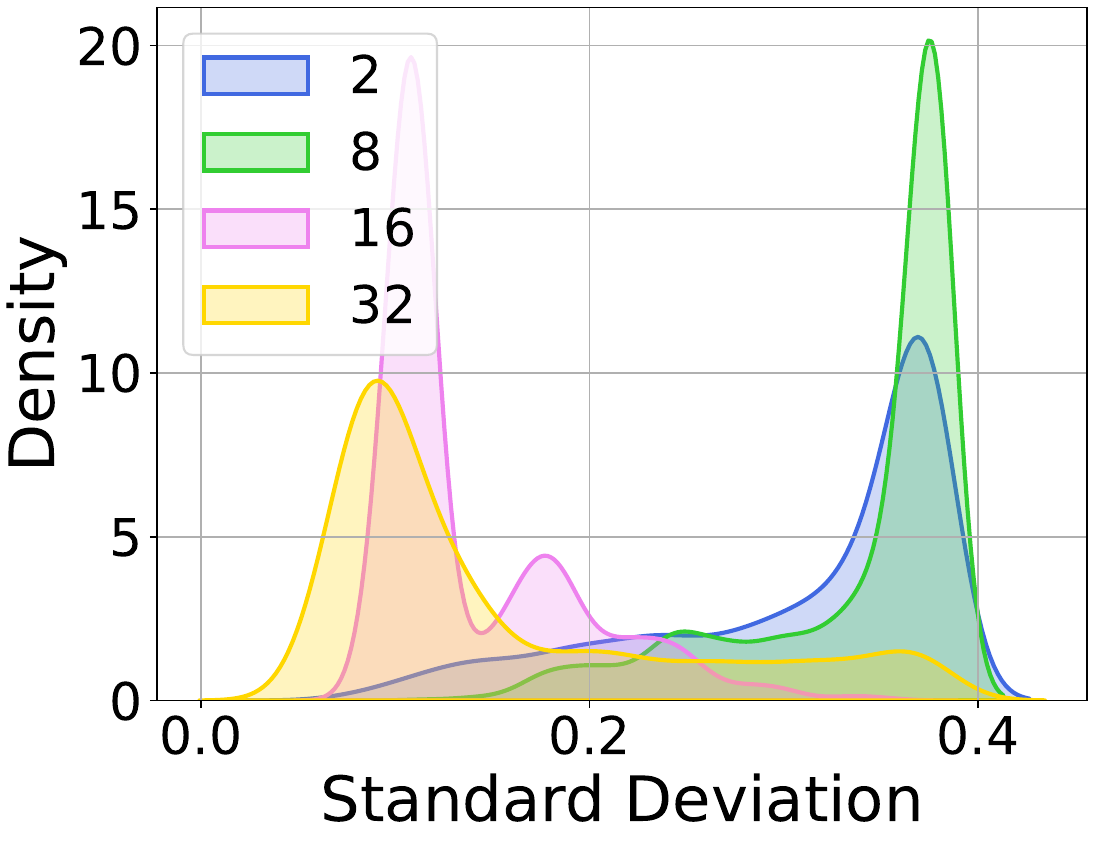}}\hfill
	\subfloat[Citeseer\label{subfig:std_citeseer}]{
		\includegraphics[scale=0.18]{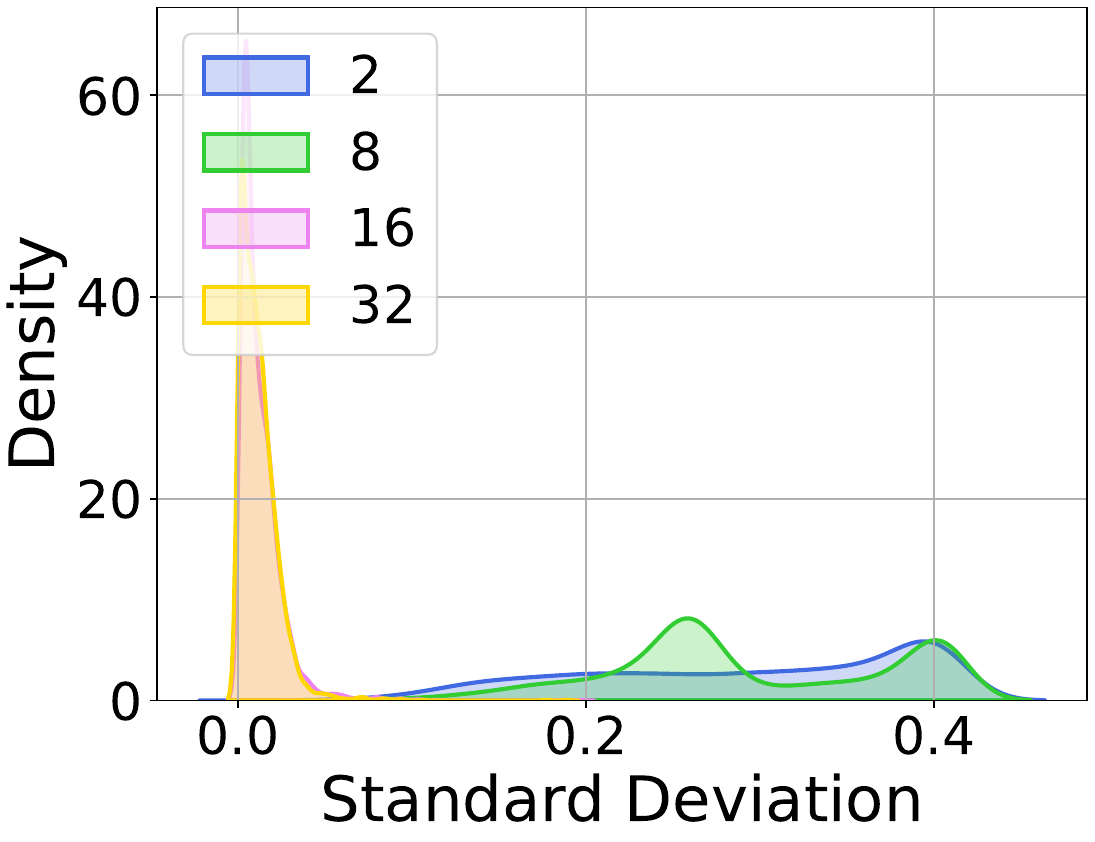}}\hfill
	\subfloat[Pubmed\label{subfig:std_pubmed}]{
		\includegraphics[scale=0.18]{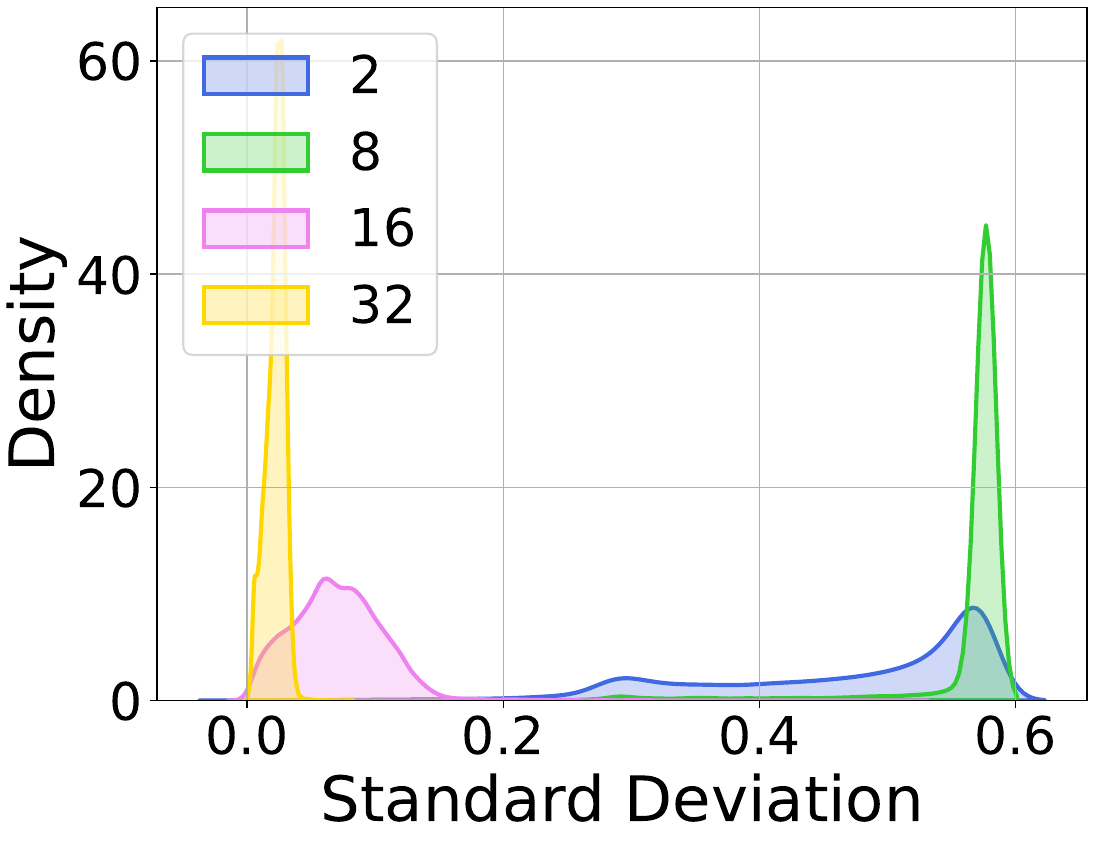}}\hfill
	\subfloat[Computers\label{subfig:std_computers}]{
		\includegraphics[scale=0.18]{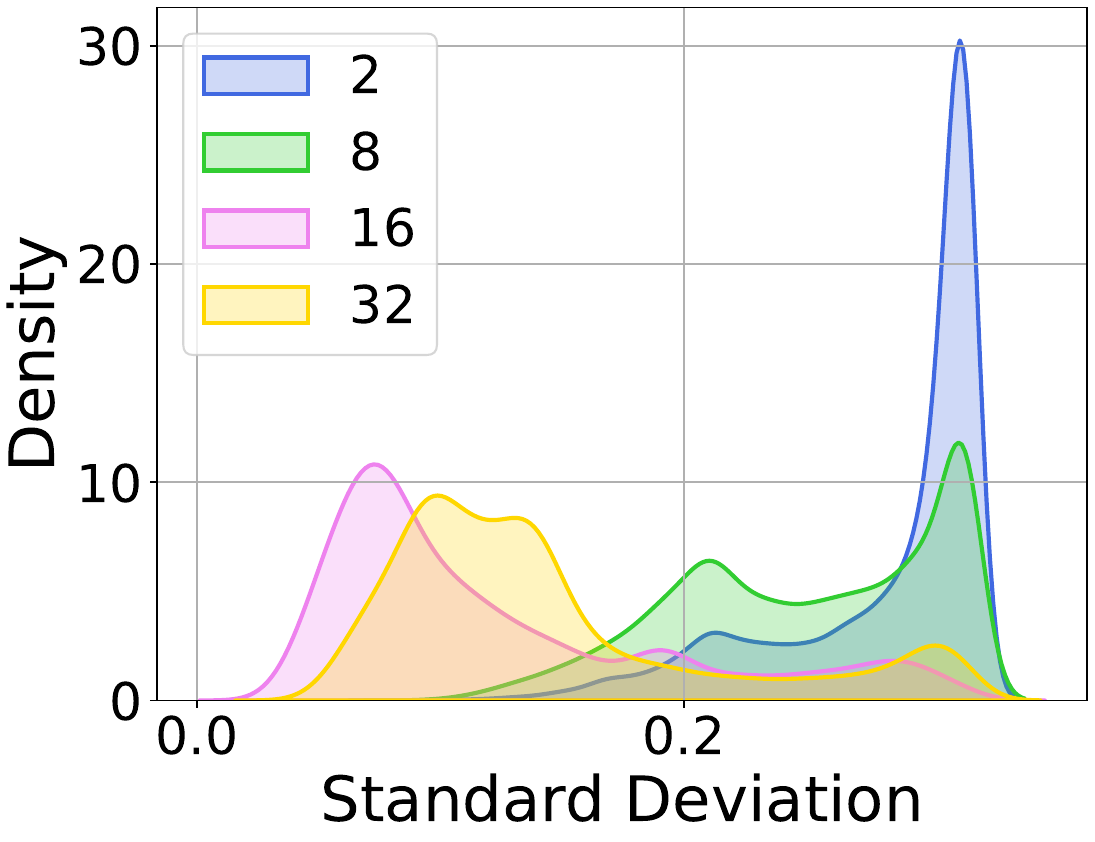}}\hfill
	\subfloat[Photo\label{subfig:std_photo}]{
		\includegraphics[scale=0.18]{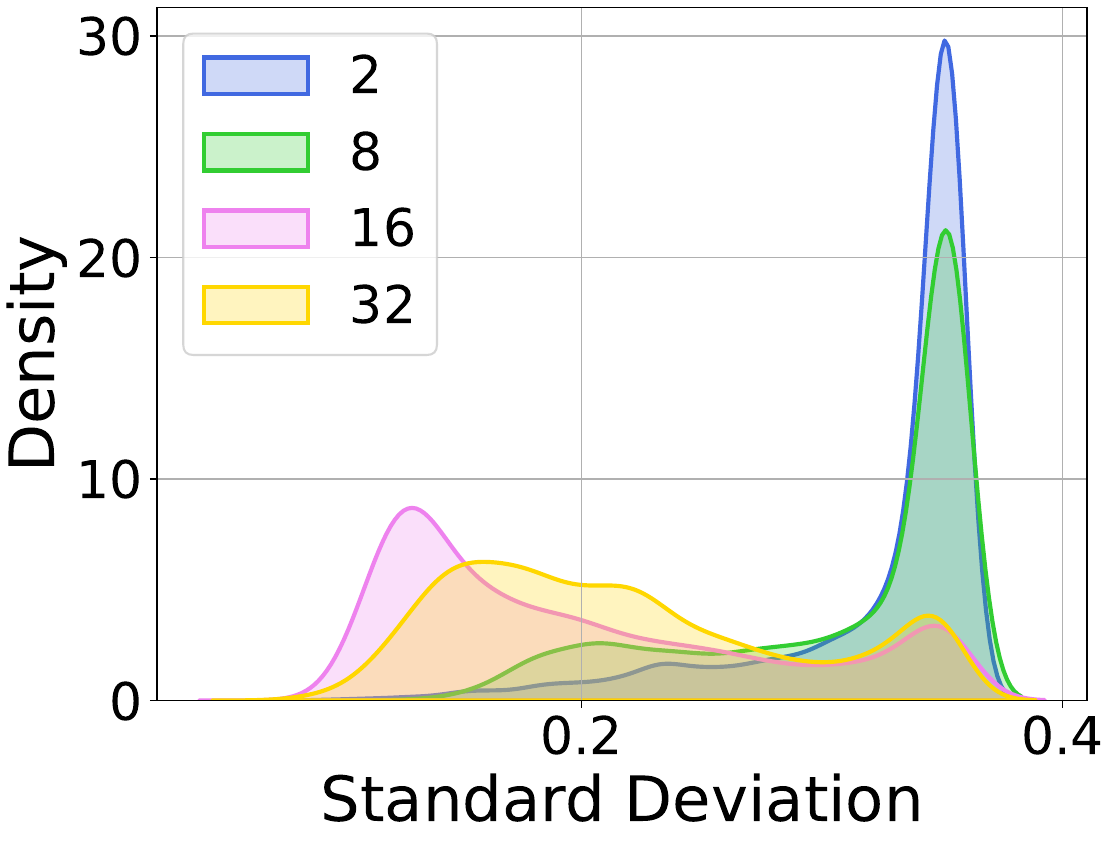}}
	\caption{The distributions of confidence w.r.t. standard deviation density of class probability at different depths on 5 datasets.}
	\label{fig:Std_Distribution}
\end{figure*}

\subsection{Low Confidence of Over-Deep GNNs}

From the above discussion, we require a deeper GNN with sufficient EP steps to cover the optimal depths for all nodes. However, \textit{is that always beneficial to deepen a uncertainty-aware GNN?} From the perspective of representation quality, GNNs with excessive EP layers usually suffer from the over-smoothing issue, hindering their prediction accuracy~\cite{chen2020measuring}. Therefore, we also have reason to conjecture 
that when the number of EP is too large, it may cause conflicts or out-of-distribution (OOD) predictions due to over-smoothing~\cite{rusch2023survey}.

To verify our hypothesis, we conducted an experiment investigating the impact of increasing the depth of the GNNs. Specifically, we set the depth of a GCN to 2, 8, 16, and 32 respectively in this empirical analysis~\cite{zhou2021dirichlet}. The standard deviation of the class probability distribution serves as the metric of uncertainty, and the visualization is shown in Figure~\ref{fig:Std_Distribution}. As we can see in the figure, for GCNs with a depth of 2, the standard deviation of the final class probability distribution of nodes is concentrated in a relatively higher area. This means that the probability values of different classes of nodes are well-separated, enabling the model to make confident and correct predictions. However, as the number of layers increases, the output node class probability distribution becomes less clear. Specifically, the standard deviation of the class probability distribution for 16-layer and 32-layer GCNs gradually shifts to a relatively smaller area. At this stage, the probabilities of each class of nodes are almost equal, and the model struggles to make confident predictions, potentially leading to unreliable predictions. In short, as the model gets deeper, the prediction uncertainty of GCN model shows an increasing trend, making it difficult for the model to provide highly reliable prediction, and eventually leading to significant performance degradation. In contrast, shallow GNNs, primarily focusing on lower-order neighborhoods, tend to produce more confident predictions. 

\noindent \textbf{Discussion:} According to the empirical analysis in Section~\ref{sec:class_moti}, we summarize that the optimal depth for generating the most confident predictions, as determined by the number of EP, varies for different nodes. 
To cover the optimal depths for all nodes, a simple approach is to choose a large enough range of propagation. However, in further experiments we found that excessively large depth of network may prodced more predictions with high uncertainty, resulting in conflicts or OOD issues of prediction. On these insights, it becomes imperative to devise suitable strategies that ensure prediction confidence. Therefore, our methodology design will predominantly revolve around the modeling and exploitation of prediction uncertainty from various neighborhoods to enable the most accuracy and trustworthy prediction.

\section{Methodology}
In order to enable GNN to make reliable and confident predictions by adaptively exploiting varying hops of neighborhoods, in this section, we introduce a novel uncertainty-aware GNN model termed \underline{\textbf{E}}vidence \underline{\textbf{F}}using \underline{\textbf{G}}raph \underline{\textbf{N}}eural \underline{\textbf{N}}etwork (\ourmethod for short). 
Our core idea is to learn a models that makes predictions at multiple level, each with an associated uncertainty evaluation, which are then fused to generate the final prediction.
The pipeline of \ourmethod is illustrated in Figure~\ref{fig:pipeline}. At the highest level, \ourmethod is composed of two components: \textbf{multi-hop evidence graph learning} that generates accuracy predictions with uncertainties for each node from its neighbors at multiple hops (Sec.~\ref{VA}), and \textbf{cumulative belief fusion (CBF)-based evidence fusion} that combines the multi-hop predictions into a final confident prediction (Sec.~\ref{VB}). To effectively optimize \ourmethod, we introduce three well-crafted loss functions in Sec.~\ref{VC}. Finally, we provide theoretical analyses of the validity of CBF (Sec.~\ref{VD}) and analyze the time complexity of \ourmethod (Sec.~\ref{VE}). 

\begin{figure*}[tbp]
\centerline{\includegraphics[scale=0.55]{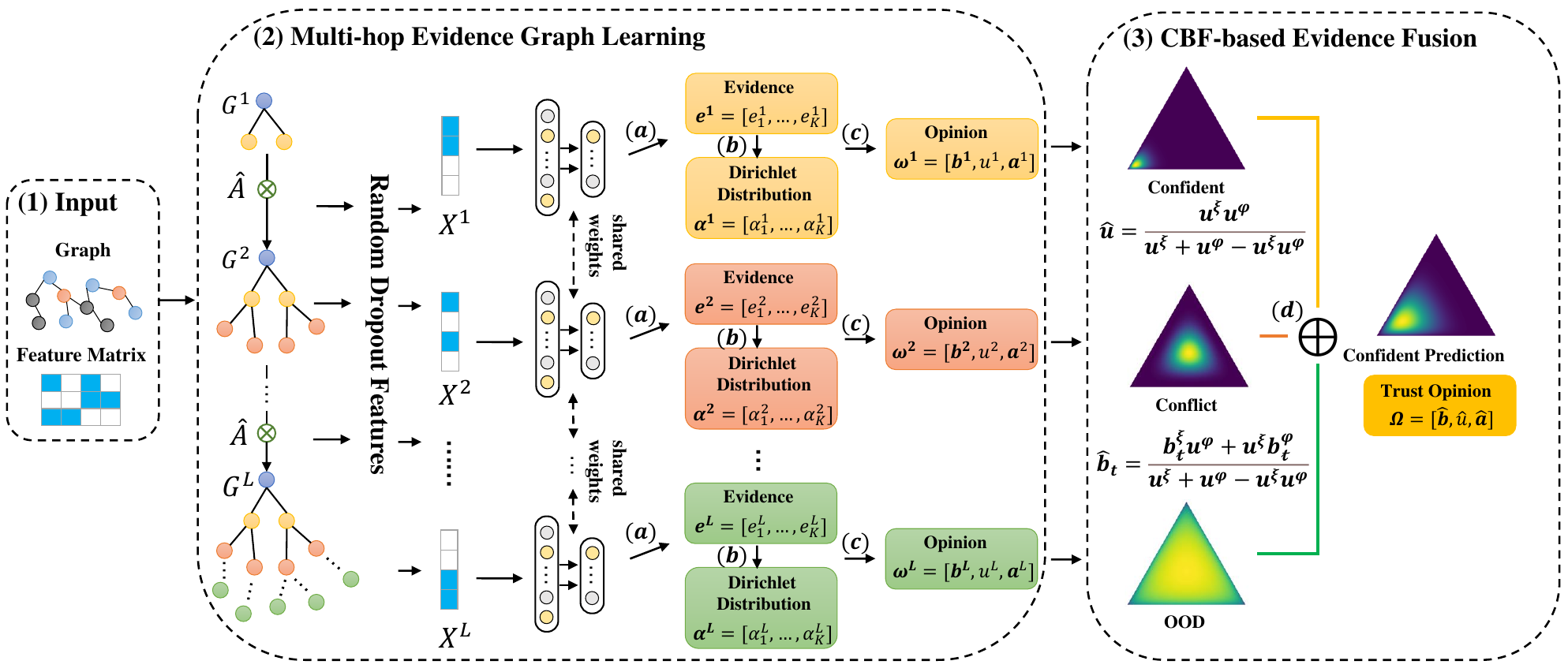}}
\caption{Illustration of proposed method. 
Predictions with associated uncertainties are generated at multiple depths and then fused.
The overall model flow consists of the following steps. (1) Input. Data input, including graph and node features. (2) Multi-hop Evidence Graph Learning. Aggregate node features from different neighborhoods and perform random drops. (a) Then, the aggregate features obtained by different neighborhoods are passed through the corresponding shared evidence generation network to generate evidence vectors supporting each class. (b): Next, we use the evidence to parameterize the parameter ${\alpha}$ of the Dirichlet distribution $\operatorname{Dir}(\mathbf{p} \mid \boldsymbol{\alpha})$ to modeling class probability distribution. (c): The parameterized Dirichlet distribution is mapped to a multinomial opinion to obtain the uncertainty. (3) CBF-based Evidence Fusion. (d): Dynamically fusing predictions with uncertainty in each neighborhood of node based on CBF to maximize the trustworthiness of the final prediction.}
\label{fig:pipeline}
\end{figure*}

\subsection{Multi-hop Evidence Graph Learning}\label{VA}

In Section~\ref{DESIGN}, we conclusively recognize the importance of incorporating multi-hop neighborhood information to simultaneously ensure confident prediction for nodes across diverse classes. To this end, we design a GNN with multi-step propagation to generate node features at different depths; after that, an evidence learning module is designed to obtain evidence from different neighborhoods of nodes. 

To learn graph representations from different hops of neighborhoods, a simple solution is to stack multiple GNN layers. 
However, coupled GNN architecture may suffer from model degradation due to too many ET steps and high model complexity~\cite{zhang2022model}. Therefore, in \ourmethod, we employ a decoupled GNN architecture where EP/ET operations are separated. In specific, multiple steps of feature propagation are performed on the raw input features before conducting feature transformation. The $\ell$-{th} EP can be calculated as follows:
\begin{equation}
\mathbf{X}^{\ell}=\hat{\mathbf{A}} \mathbf{X}^{\ell-1}.
\label{eq.5}
\end{equation}

As discussed before, nodes have different optimal depths, which indicates the strong dependence of nodes on specific neighborhood information. Nevertheless, the model can be over-sensitive to specific features during model training, which hinders the overall performance. To mitigate this sensitivity, we introduce a simple data augmentation strategy, random perturbation, which is applied after the feature propagation step. Specifically, for $\ell$-{th} layer node embedding $\mathbf{X}^\ell$, the perturbed embedding $\tilde{\mathbf{X}}^{\ell}$ is calculated as follows:
\begin{equation}
    \tilde{\mathbf{X}}^{\ell}=\frac{\mathcal{b}}{1-\sigma} \mathbf{X}^{\ell},
\label{eq.62}
\end{equation}

\noindent where $\mathcal{b} \sim \mathrm{Bernoullli}(1-\sigma)$ is the binary mask set of nodes, and $(1-\sigma)$ is the scaling factor to ensure that the expectation of the embedding before and after the perturbation are equal. The perturbed strategy ensures each node has only subsets of information under different neighborhoods, which reduces the dependence on specific neighborhoods and makes the evidence generation more robust.

After multi-step propagation, now we aim to generate the predictions with uncertainty on the basis of propagated features. In \ourmethod, we use a shared evidence generation network to obtain the evidence of nodes in different neighborhoods. In fact, the evidence generation network can be regarded as the ET in our model. According to previous studies~\cite{van2020uncertainty,moon2020confidence}, employing the conventional Softmax function output as the confidence level of prediction often results in overconfidence, especially in cases of errors, as the largest Softmax output determines the final prediction. Additionally, as discussed in Section~\ref{DESIGN}, excessively large EP steps can generate low-quality embeddings. In such situations, evidence theory can differentiate between conflicting predictions and OOD cases (e.g., Figure~\ref{fig:Opinion_example}). This is in contrast to the Softmax function, which mandates a probability distribution summing to $1$ for each class of the sample.
Thus, in order to generate evidence $e_k\geq0$, supporting each class, we need to modify the neural network used for classification~\cite{sensoy2018evidential}. In particular, we replace the Softmax function used for classification with any activation function (e.g., SoftPlus function). Formally, for the node $v_i$ feature vector $\tilde{\mathbf{X}}_i^{\ell}$ of the ${\ell}$-{th} propagation, the node $v_i$ evidence $\mathbf{e}^\ell_i$ is calculated as follows:

\begin{equation}
\mathbf{e}^\ell_i=f_{\Theta}^e\left(\tilde{\mathbf{X}}^{\ell}_i\right)=\left[e_{i 1}^{\ell}, e_{i 2}^{\ell}, \ldots, e_{i K}^{\ell}\right],
\label{eq.6}
\end{equation}

\noindent where $f_{\Theta}^e$ is the shared evidence generation network with a SoftPlus non-linear operation and $e_{i K}^\ell$ refers to the amount of evidence supporting class $K$-{th} obtained by node $v_i$ under the $\ell$-{th} propagation. 

On this basis, according to the theoretical framework in Section~\ref{sec:SL}, we can obtain the $\ell$-{th} layer's opinion $\boldsymbol{\omega}^\ell_i=(\mathbf{b}^\ell_i, u^\ell_i, \mathbf{a}^\ell_i)$, where belief mass $b_{i k}^{\ell}=e_{i k}^{\ell} / S^{\ell}_i$ of different classes and the uncertainty mass $u^{\ell}_i=K / S^{\ell}_i$ of the $K$-class problem through the evidence $\mathbf{e}^\ell_i$. Especially, we can obtain probability distribution ${\mathbf{p}^\ell_i}=\left[{p}^\ell_{i 1}, \ldots, {p}^\ell_{i K}\right], {p}^\ell_{i K}={b^\ell_{i K}}+{a^\ell_{i K}} {u}^\ell_i$ to serve as the unified prediction of our model, with ${u}^\ell_i$ as the prediction uncertainty. As more evidence supporting the $k$-{th} class is observed, the belief mass of assigning the $k$-{th} class increases, leading to a higher probability for that class. In contrast, as the total evidence observed decreases, the overall uncertainty increases, resulting in higher uncertainty mass. 
In the similar way, the node evidence space $\mathbf{e}=\left[\mathbf{e}^1, \mathbf{e}^2, \ldots, \mathbf{e}^L\right]$ for a total of $L$ propagation can be obtained. Correspondingly, the opinion space $\boldsymbol{\omega}=\left[\boldsymbol{\omega}^1, \boldsymbol{\omega}^2, \ldots, \boldsymbol{\omega}^L\right]$.

\subsection{CBF-based Evidence Fusion}\label{VB}
Via multi-hop evidence learning, we can acquire the node evidence at different layers and the opinion with prediction uncertainty. Nevertheless, as we discussed in Section~\ref{DESIGN}, not all layer outputs can yield reliable predictions. Hence, to uniformly leverage the predictions at different layers to produce the optimal predictions, in this subsection, we introduce a CBF-based evidence fusion mechanism that integrates multiple predictions into the most confident one. 

The core idea of CBF is to allow the combination of evidence from different neighborhoods, resulting in a confidence that considers all available evidence. In concrete, let $\boldsymbol{\omega}^L_i$ be $L$-{th} layer’s opinion over the node $v_i$ in GNNs, the joint opinion $\boldsymbol{\Omega}_i$ can be expressed by:

\begin{equation}
\boldsymbol{\Omega}_i=\boldsymbol{\omega}^1_i \oplus \boldsymbol{\omega}^2_i \oplus \ldots \oplus \boldsymbol{\omega}^L_i.
\label{eq.7}
\end{equation}

Then, we can implement the CBF operator as the simple addition of evidence parameters~\cite{josang2018categories}. In more details, an easy calculation rule (the original fusion rules can be found in the Eq.~\ref{eq.15}) can be formulated as follows:

\begin{equation}
\hat{e}_{i k}=\sum_{\ell=1}^L e_{i k}^{\ell}, \quad \hat{a}_{i k}=\frac{1}{K},
\label{eq.8}
\end{equation}

\noindent where $\hat{e}_{i k}$ is the joint evidence. Then, the joint belief mass $\hat{b}_{i k}$ and joint uncertainty mass $\hat{u}_i$ can be obtained:

\begin{equation}
\hat{b}_{i k}=\frac{\hat{e}_{i k}}{\hat{S}_i}=\frac{\hat{\alpha}_{i k}-1}{\hat{S}_i}, \quad \hat{u}_i=1-\sum_{k=1}^K \hat{b}_{i k},
\label{eq.9}
\end{equation}

\noindent where $\hat{S}_i=\sum_{k=1}^K \hat{\alpha}_{i k}$ is the joint Dirichlet strength. 

According to the CBF-based evidence fusion, for a given node $v_i$, the more trustworthy joint opinion $\boldsymbol{\Omega_i}=(\hat{\mathbf{b}}_i, \hat{u}_i, \hat{\mathbf{a}}_i)$ and the joint Dirichlet distribution parameter $\hat{\boldsymbol{\alpha}}_i$ can be obtained. The probability can be computed from 
$\mathbb{E}\left[p_{i j}\right]=\hat{p}_{i j}=\frac{\hat{\alpha}_{i j}}{\hat{S}_i}=\hat{b}_{i j}+\hat{a}_{i j} \hat{u}_i$, and $\hat{u}_i$ is the prediction uncertainty of that node $v_i$. 
In Section~\ref{VD}, we further discuss the advantages of CBF from the theoretical perspective of uncertainty and accuracy. 

\subsection{Model Training and Optimization}\label{VC}

In this subsection, we introduce the objective function to optimize \ourmethod. The objective function is composed of three loss terms, including evidence cross-entropy (ECE) loss, dissonance coefficient loss, and Kullback-Leibler (KL) divergence loss. 

In order to optimize \ourmethod to make accurate predictions for node classification tasks, we establish the ECE loss as our essential loss function. Specifically, for feature $\tilde{\mathbf{X}}_i$ of node $v_i$, the model gets the joint opinion $\boldsymbol{\Omega_i}=(\hat{\mathbf{b}}_i,\hat{u}_i,\hat{\mathbf{a}}_i)$, and derives $\operatorname{Dir}\left(\mathbf{p}_i \mid \hat{\boldsymbol{\alpha}}_i\right)$. Then there is a probability $\hat{p}_{i j}$ and the label $\mathbf{Y}_i$. Accordingly, for joint Dirichlet distribution $\boldsymbol{\hat{\alpha}_i}$, the ECE loss can be calculated as follows:

\begin{equation}
\begin{gathered}
\mathcal{L}_{ECE}\left(\hat{\boldsymbol{\alpha}}_i\right)=\int\left[\sum_{j=1}^K-y_{i j} \log \left(\hat{p}_{i j}\right)\right] \frac{1}{\mathcal{B}\left(\hat{\boldsymbol{\alpha}}_i\right)} \prod_{j=1}^K \hat{p}_{i j}{ }^{\hat{\alpha}_{i j}-1} d \hat{\mathbf{p}}_i \\
=\sum_{j=1}^K y_{i j}\left(\psi\left(\hat{S}_i\right)-\psi\left(\hat{\alpha}_{i j}\right)\right),
\end{gathered}
\label{eq.10}
\end{equation}

\noindent where $\psi(\cdot)$ is the digamma function. Essentially, $\mathcal{L}_{ECE}$ guarantees that correct class labels accumulate more evidence. 

Moreover, there may be in-distribution but conflicting prediction in the joint opinion of samples as shown in Figure~\ref{fig:Opinion_example}~(b) (i.e., $\hat{b}_1=\ldots=\hat{b}_K$). In this case, the model is still unable to make a confident prediction. To quantify the degree of conflict between the beliefs belonging to different class labels, we further leverage a measure termed dissonance coefficient~\cite{josang2018uncertainty}. Then, the dissonance coefficient can be introduced as a regularization term to reduce the conflict between different class labels of beliefs in the joint opinion $\boldsymbol{\Omega}_i$. 
In particular, the dissonance loss can be calculated as follows:

\begin{equation}
\mathcal{L}_{Dis}\left(\hat{\boldsymbol{\alpha}}_i\right)=\sum_{j=1}^K \frac{\hat{b}_{i j} \sum_{q \neq j} \hat{b}_{i j} \operatorname{Bal}\left(\hat{b}_{i q}, \hat{b}_{i j}\right)}{\sum_{q \neq j} \hat{b}_{i q}},
\label{eq.11}
\end{equation}

\noindent where, the relative mass balance $\operatorname{Bal}\left(\hat{b}_{i q}, \hat{b}_{i j}\right)$ between a pair of belief masses $\hat{b}_{i q}$ and $\hat{b}_{i j}$ is expressed by:

\begin{equation}
\operatorname{Bal}\left(\hat{b}_{i q}, \hat{b}_{i j}\right)=\left\{\begin{array}{cc}
1-\frac{\left|\hat{b}_{i q}-\hat{b}_{i j}\right|}{\hat{b}_{i q}+\hat{b}_{i j}} & \text { if } \hat{b}_{i q} \hat{b}_{i j} \neq 0, \\
0 & \text { otherwise }.
\end{array}\right.
\label{eq.12}
\end{equation}

To clarify the role of $\mathcal{L}_{Dis}$, we consider the three examples corresponding to (a), (b), and (c) in Figure~\ref{fig:Opinion_example}, respectively.

\begin{itemize}
    \item \textbf{Example~1:} $\hat{\boldsymbol{\alpha}}_1=(21, 2, 2)$, $\mathcal{L}_{Dis}\left(\hat{\boldsymbol{\alpha}}_1\right)=0.0873$.
    \item \textbf{Example~2:} $\hat{\boldsymbol{\alpha}}_2=(6, 6, 6)$, $\mathcal{L}_{Dis}\left(\hat{\boldsymbol{\alpha}}_2\right)=0.8333$.
    \item \textbf{Example~3:} $\hat{\boldsymbol{\alpha}}_3=(1.1, 1.1, 1.1)$, $\mathcal{L}_{Dis}\left(\hat{\boldsymbol{\alpha}}_3\right)=0.0909$.
\end{itemize}

Based on the three examples above, we find that $\hat{\boldsymbol{\alpha}}_1$ has the lowest dissonance because of the concentration of evidence and few conflicts. For the two conflict scenarios $\hat{\boldsymbol{\alpha}}_2$ and $\hat{\boldsymbol{\alpha}}_3$, the conflict is the largest in $\hat{\boldsymbol{\alpha}}_2$ because it has sufficient evidence for each class label, unlike $\hat{\boldsymbol{\alpha}}_3$ which has less evidence. In addition, for the OOD sample $\hat{\boldsymbol{\alpha}}_3$, the uncertainty $u=\frac{K}{S}$ mainly comes from the vacuity ($S$ is small) caused by the lack of evidence. In terms of $\mathcal{L}_{Dis}$, we can punish the prediction of the conflict, encourage the model to make a confident prediction, and avoid the adverse impact of the conflict scenario on the final result and uncertainty. 

With the guidance of the above two losses, \ourmethod is able to make categorical predictions with less risk of belief conflict. However, for $\hat{u}=1-\sum_{k=1}^K \hat{b}_k$, we consider the extreme case where $\hat{b}_t=0$ and $\sum_{k=1}^K \hat{b}_k=1$, where $t$ is the ground truth class. In this case, when more beliefs was generated in the incorrect labels of the sample, $\hat{u}$ will still be small, but then we should not allow the model to make such a ``false confident prediction''. Therefore, for the wrong class labels of the sample, we should ensure that the total evidence generated by the model for it is zero. To this end, we further incorporate a Kullback-Leibler (KL) divergence loss function, which adjusts the predicted distribution by penalizing those instances with evidence that deviates from the true data distribution. Specifically, the KL divergence loss is calculated as follows: 

\begin{equation}
\begin{aligned}
& \mathcal{L}_{KL}\left(\hat{\boldsymbol{\alpha}}_i\right)=K L\left[\operatorname{Dir}\left(\mathbf{p}_{i} \mid \tilde{\boldsymbol{\alpha}}_i\right)|| \operatorname{Dir}\left(\mathbf{p}_{i} \mid \mathbf{1}\right) \mid\right. \\
& =\log \left(\frac{\Gamma\left(\sum_{j=1}^K \tilde{\alpha}_{i j}\right)}{\Gamma(K) \prod_{j=1}^K \Gamma\left(\tilde{\alpha}_{i j}\right)}\right)+ \\
&\quad +
\sum_{j=1}^K\left(\tilde{\alpha}_{i j}-1\right)\left[\psi\left(\tilde{\alpha}_{i j}\right)-\psi\left(\sum_{j=1}^K \tilde{\alpha}_{i j}\right)\right],
\end{aligned}
\label{eq.13}
\end{equation}

\noindent where $\Gamma(\cdot)$ is the gamma function, and $\psi(\cdot)$ is the digamma function. Additionally, $\operatorname{Dir}\left(\mathbf{p}_i \mid \mathbf{1}\right)=\operatorname{Dir}\left(\mathbf{p}_i \mid\langle 1, \ldots, 1\rangle\right)$ is the uniform Dirichlet distribution, and $\tilde{\boldsymbol{\alpha}}_i=\mathbf{Y}_i+\left(1-\mathbf{Y}_i\right) \odot \hat{\boldsymbol{\alpha}}_i$ is the Dirichlet parameters after removal of the non-misleading evidence from predicted parameters $\hat{\boldsymbol{\alpha}}_i$ for node~$v_i$. With $\mathcal{L}_{KL}$, the model avoids producing excessive evidence for the wrong class label of the sample, which guarantees that the uncertainty quantification is justified.

To sum up, the total objective function of \ourmethod can be obtained by adding three loss terms together: 

\begin{equation}
\mathcal{L}(\hat{\boldsymbol{\alpha}})=\sum_{i=1}^n \mathcal{L}_{ECE}\left(\hat{\boldsymbol{\alpha}}_i\right)+\lambda_{Dis} \mathcal{L}_{Dis}\left(\hat{\boldsymbol{\alpha}}_i\right)+\lambda_{KL} \mathcal{L}_{KL}\left(\hat{\boldsymbol{\alpha}}_i\right),
\label{eq.14}
\end{equation}

\noindent where $\lambda_{KL}, \lambda_{Dis}\geq0$ is the balance factor. The learning objective allows \ourmethod to explore the parameter space while focusing on penalty the evidence corresponding to incorrect classes and the minimization of dissonance in the opinion. The training algorithm of \ourmethod is summarized in Algorithm 1.

\begin{table}
\centering
\begin{tabular}{l} 
\toprule
\textbf{Algorithm 1. }Algorithm for \ourmethod. \\ 
\hline
\begin{tabular}[c]{@{}l@{}}\textbf{Input: }Adjacency matrix $\mathbf{A}$; Feature matrix $\mathbf{X}$; Propagation\\~iteration: $T$; MLP model: $f_{\Theta}^e$; Perturbation probability $\mathcal{b}$.\end{tabular} \\ 
\textbf{Output: }Prediction $\mathbf{Z}$  and its uncertainty $\mathbf{u}$. \\ 
\textbf{Initialize:} $\mathbf{X}^0=\mathbf{X}, \tilde{\mathbf{A}}=\mathbf{A}+\mathbf{I}, \hat{\mathbf{A}}=\widetilde{\mathbf{D}}^{-\frac{1}{2}} \tilde{\mathbf{A}} \widetilde{\mathbf{D}}^{-\frac{1}{2}}$.\\ 
\textbf{Operation 1: Embedding Propagation.} \\ 
\begin{tabular}[c]{@{}l@{}}1 ~\textbf{for} $t$ = 1:$T$ \textbf{do} \\2 ~~~~~Propagation: $\mathbf{X}^t=\hat{\mathbf{A}} \mathbf{X}^{t-1}$\end{tabular} \\ 
\textbf{Operation 2: Evidential Learning.} \\ 
3 ~\textbf{while} not convergence \textbf{do} \\ 
4~~~ ~~\textbf{for} $t$ = 1:$T$ \textbf{do} \\ 
5~~~ ~~~~~~Random perturbations: $\tilde{\mathbf{X}}^t \sim \operatorname{Dropout}\left(\mathbf{X}^t, \mathcal{b}\right)$\\ 
6 ~~~~~~~~~Obtain evidence using shared-MLP: $\mathbf{e}^t=f_{\Theta}^e\left(\tilde{\mathbf{X}}^t\right)$ \\ 
7~~~ ~~~~~~Obtain Dirichlet distribution $\operatorname{Dir}\left(\mathbf{p} \mid \boldsymbol{\alpha}^t\right), \boldsymbol{\alpha}^t=\mathbf{e}^t+1$ \\ 
8 ~~~~~~~~~Obtain multinomial opinion $\boldsymbol{\omega}^t$~with Eq.~\ref{eq.3}. \\
9~~~ ~~\textbf{end for} \\
10~~~~Obtain joint opinion $\boldsymbol{\Omega}$~and $\operatorname{Dir}(\mathbf{p} \mid \hat{\boldsymbol{\alpha}})$ with Eq.~\ref{eq.8},~\ref{eq.9}. \\
11~~~~Calculate overall loss $\mathcal{L}(\hat{\boldsymbol{\alpha}})$~with Eq.~\ref{eq.14}. \\
12~~~~Update parameters via gradient descent. \\
13 \textbf{end while} \\
14 Output $\mathbf{Z}$ and $\mathbf{u}$ with Eq.~\ref{eq.9}. \\
\bottomrule
\end{tabular}
\end{table}\label{Algorithm}

\subsection{Theoretical Analysis of the CBF}\label{VD}
In order to prove the validity of CBF integrating GNNs evidence under different neighborhoods, in this subsection, we provide theoretical analysis from two perspectives of uncertainty and accuracy respectively.
Specifically, CBF satisfies the same four propositions as described in~\cite{han2022trusted}. In contrast, the fusion of Dempster's rule from~\cite{han2022trusted} produces counter-intuitive results~\cite{zadeh1984review}. In addition, the fusion rule from~\cite{han2022trusted} is generally not applicable to fusions involving completely contradictory opinions~\cite{josang2016subjective}. However, nodes can obtain conflicting information from different neighborhoods. Furthermore, in $K$-classified opinion fusion with $n$ nodes and $L$ neighbors, the fusion rule proposed in~\cite{han2022trusted} exhibits a time complexity of $\mathcal{O}(LnK^2)$, whereas CBF demonstrates a time complexity of just $\mathcal{O}(nK)$, regardless of the number of views. Thus, CBF emerges as a more suitable choice for multi-hop neighborhood evidence fusion in GNNs, all while adhering to the same propositions outlined in~\cite{han2022trusted}.

In particular, for the respective opinions $\boldsymbol{\omega}^\xi$ and $\boldsymbol{\omega}^\varphi$ of the same node under views $\xi$ and $\varphi$, when $u^{\xi} \neq 0 \vee u^{\varphi} \neq 0$ we have:

\begin{equation}
\hat{u}=\frac{u^{\xi} u^{\varphi}}{u^{\xi}+u^{\varphi}-u^{\xi} u^{\varphi}}, \quad \hat{b}_t=\frac{b_t^{\xi} u^{\varphi}+u^{\xi} b_t^{\varphi}}{u^{\xi}+u^{\varphi}-u^{\xi} u^{\varphi}}.
\label{eq.15}
\end{equation}

On the basis of the definitions, we can deduce Propositions~\ref{proposition1} and~\ref{proposition2} from the perspective of uncertainty. 

\begin{proposition}
\label{proposition1}
Assuming $e^{\varphi}$ is the evidence of the sample generated by the view $\varphi$ and $\hat{e}$ is evidence obtained by CBF, the uncertainty $\hat{u}$ after fusion is smaller than the uncertainty $u^\varphi$ in view $\varphi$. %Prove as follows:
\end{proposition}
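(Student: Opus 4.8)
The plan is to express both $\hat u$ and $u^\varphi$ in terms of the Dirichlet strengths and then compare them directly. Recall from Section~\ref{sec:SL} that for a single view, $u^\varphi = K / S^\varphi$ where $S^\varphi = K + \sum_k e_k^\varphi$, so in particular $S^\varphi \geq K$ and $u^\varphi \in (0,1]$. Under CBF (Eq.~\ref{eq.8}), the joint evidence is $\hat e_k = \sum_\ell e_k^\ell$, hence the joint strength is $\hat S = K + \sum_k \hat e_k = K + \sum_\ell \sum_k e_k^\ell = K + \sum_\ell (S^\ell - K)$, and the joint uncertainty is $\hat u = K / \hat S$. First I would reduce to the two-view case $\{\xi, \varphi\}$, which is exactly the setting of Eq.~\ref{eq.15}; the general $L$-view statement then follows by associativity of CBF (adding evidence is associative) and induction, or simply by noting $\hat S = K + \sum_{\ell=1}^L (S^\ell - K) \geq S^\varphi$ directly.

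For the two-view case I would verify that the closed form $\hat u = u^\xi u^\varphi / (u^\xi + u^\varphi - u^\xi u^\varphi)$ in Eq.~\ref{eq.15} is consistent with $\hat u = K/\hat S$: substituting $u^\xi = K/S^\xi$, $u^\varphi = K/S^\varphi$ and $\hat S = S^\xi + S^\varphi - K$, a short algebraic check confirms the two expressions agree. Then the inequality $\hat u \leq u^\varphi$ is equivalent to $\hat S \geq S^\varphi$, i.e. to $S^\xi - K \geq 0$, which holds because $S^\xi = K + \sum_k e_k^\xi$ and all evidence terms $e_k^\xi \geq 0$ (established when the evidence network is defined, Eq.~\ref{eq.6}). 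Equivalently, working from Eq.~\ref{eq.15} directly: $\hat u \leq u^\varphi \iff u^\xi u^\varphi \leq u^\varphi(u^\xi + u^\varphi - u^\xi u^\varphi) \iff 0 \leq u^\varphi(1 - u^\xi)(? )$ — the cleaner route is to divide through by $u^\varphi > 0$ (the hypothesis $u^\xi \neq 0 \vee u^\varphi \neq 0$ together with positivity of strengths guarantees the denominator is positive and we may take $u^\varphi>0$ without loss), reducing to $u^\xi \leq u^\xi + u^\varphi - u^\xi u^\varphi$, i.e. $0 \leq u^\varphi(1 - u^\xi)$, which is immediate since $u^\varphi \geq 0$ and $u^\xi \leq 1$.

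The only subtlety — and the one place to be careful — is the degenerate boundary: the statement as phrased compares $\hat u$ with $u^\varphi$, but symmetry shows $\hat u \leq \min(u^\xi, u^\varphi)$, with equality iff the other view contributes no evidence ($S^\xi = K$, i.e. $u^\xi = 1$). I would state the conclusion as $\hat u \leq u^\varphi$ with equality characterized exactly, and note the well-definedness of the denominator in Eq.~\ref{eq.15} under the stated hypothesis $u^\xi \neq 0 \vee u^\varphi \neq 0$ (the denominator $u^\xi + u^\varphi - u^\xi u^\varphi = 1 - (1-u^\xi)(1-u^\varphi)$ is positive precisely when not both are zero). I do not anticipate a genuine obstacle here; the result is essentially the monotonicity of $K/S$ in the accumulated evidence, and the main work is just checking the closed-form expression in Eq.~\ref{eq.15} is the faithful specialization of the additive rule in Eq.~\ref{eq.8}.
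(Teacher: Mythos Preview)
Your proposal is correct. The paper's own proof is the one-line bound you also sketch from Eq.~\ref{eq.15}: since the denominator satisfies $u^\xi + u^\varphi - u^\xi u^\varphi \geq u^\xi$ (equivalently $u^\varphi(1-u^\xi) \geq 0$), one immediately gets $\hat u \leq u^\xi u^\varphi / u^\xi = u^\varphi$. Your alternative route through Dirichlet strengths---writing $\hat u = K/\hat S$, $u^\varphi = K/S^\varphi$, and observing $\hat S = S^\varphi + (S^\xi - K) \geq S^\varphi$---is a genuinely different and arguably cleaner argument: it bypasses the closed form in Eq.~\ref{eq.15} entirely, makes the monotonicity in accumulated evidence transparent, and extends to $L$ views in one line via $\hat S = K + \sum_\ell (S^\ell - K) \geq S^\varphi$ without any induction. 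The paper's route has the advantage of working directly from the stated fusion rule and needing no translation between the opinion and Dirichlet parameterizations. Your extra observations---equality exactly when $u^\xi = 1$, and positivity of the denominator $u^\xi + u^\varphi - u^\xi u^\varphi = 1-(1-u^\xi)(1-u^\varphi)$ precisely when not both uncertainties vanish---are correct and sharpen the statement beyond what the paper records.
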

\begin{proof}
\begin{equation}
\begin{aligned}
\hat{u} & =\frac{u^{\xi} u^{\varphi}}{u^{\xi}+u^{\varphi}-u^{\xi} u^{\varphi}} \\
& \leq \frac{u^{\xi} u^{\varphi}}{u^{\xi}} \\
& = u^{\varphi}.
\end{aligned}
\label{eq.16}
\end{equation}

\end{proof}

\begin{proposition}
\label{proposition2}
For any two views $\xi$ and $\varphi$, the fusion uncertainty $\hat{u}$ is positively correlated with the single uncertainty $u^\varphi$ and $u^\xi$. %Prove as follows:
\end{proposition}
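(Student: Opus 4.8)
The plan is to interpret ``positively correlated'' as monotonicity: I will show that, viewed as a function of the two uncertainty masses, $\hat{u}$ is nondecreasing in each argument while the other is held fixed, and strictly increasing whenever the other argument is nonzero. Write $x=u^{\xi}$ and $y=u^{\varphi}$, so that by Eq.~\ref{eq.15} we have
\begin{equation}
\hat{u}(x,y)=\frac{xy}{x+y-xy},
\label{eq.prop2setup}
\end{equation}
and recall that $x,y\in[0,1]$ by Eq.~\ref{eq.3}, with the standing hypothesis $x\neq 0 \vee y\neq 0$.

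First I would handle the domain and the denominator. The key algebraic identity is $x+y-xy = 1-(1-x)(1-y)$; since $x,y\in[0,1]$ we have $(1-x)(1-y)\in[0,1]$, and this product equals $1$ only when $x=y=0$. Hence under the hypothesis $x\neq 0\vee y\neq 0$ the denominator of~\eqref{eq.prop2setup} is strictly positive, so $\hat{u}$ is well defined and differentiable there.

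Next I would differentiate. Treating $x$ as a constant and differentiating~\eqref{eq.prop2setup} with respect to $y$, the numerator of the quotient-rule derivative is $x(x+y-xy)-xy(1-x)$, which simplifies (the $xy$ and $x^2y$ terms cancel in pairs) to $x^{2}$. Therefore
\begin{equation}
\frac{\partial \hat{u}}{\partial y}=\frac{x^{2}}{(x+y-xy)^{2}}\ \ge 0,
\label{eq.prop2deriv}
\end{equation}
with strict inequality whenever $x\neq 0$. By the symmetry of~\eqref{eq.prop2setup} in $x$ and $y$, the analogous computation gives $\partial \hat{u}/\partial x = y^{2}/(x+y-xy)^{2}\ge 0$, strict when $y\neq 0$. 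Thus $\hat{u}$ is monotonically nondecreasing in both $u^{\xi}$ and $u^{\varphi}$, i.e.\ the fused uncertainty is positively correlated with each of the per-view uncertainties, which is the claim.

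I do not anticipate a serious obstacle here; the only point requiring care is the bookkeeping in the numerator of~\eqref{eq.prop2deriv} (making sure the cross terms cancel so that it collapses to $x^{2}$) and the explicit argument that the denominator cannot vanish on the stated domain, which is exactly what the hypothesis $u^{\xi}\neq 0\vee u^{\varphi}\neq 0$ guarantees. If a stronger, purely algebraic (derivative-free) phrasing were desired, one could instead fix $x$ and compare $\hat{u}(x,y_1)$ with $\hat{u}(x,y_2)$ for $y_1\le y_2$ by cross-multiplying and reducing to $x^{2}(y_2-y_1)\ge 0$, but the derivative computation above is the cleanest route.
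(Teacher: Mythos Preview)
Your proof is correct, but the paper's argument takes a different and shorter route: it simply rewrites the fusion formula by dividing numerator and denominator by $u^{\xi}u^{\varphi}$ to obtain
\[
\hat{u}=\frac{1}{\frac{1}{u^{\xi}}+\frac{1}{u^{\varphi}}-1},
\]
from which the monotonicity in each argument is immediate by inspection (each $1/u$ is decreasing, so the denominator is decreasing, hence $\hat{u}$ is increasing). Your derivative computation establishes the same monotonicity more explicitly and with more care: you verify the denominator never vanishes on the stated domain, you handle the degenerate case where one uncertainty is zero (giving a nondecreasing rather than strictly increasing conclusion), and you actually exhibit the rate $\partial\hat{u}/\partial y = x^{2}/(x+y-xy)^{2}$. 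The paper's rewriting is slicker and reveals the harmonic-mean-like structure of the fusion rule, but tacitly assumes both $u^{\xi}$ and $u^{\varphi}$ are nonzero (the division by $u^{\xi}u^{\varphi}$ fails otherwise), a boundary case your approach covers cleanly.
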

\begin{proof}
\begin{equation}
\begin{aligned}
\hat{u} & =\frac{u^{\xi} u^{\varphi}}{u^{\xi}+u^{\varphi}-u^{\xi} u^{\varphi}} \\
& =\frac{1}{\frac{1}{u^{\xi}}+\frac{1}{u^{\varphi}}-1}.
\end{aligned}
\label{eq.17}
\end{equation}
\end{proof}

Proposition~\ref{proposition1} reveals that after fusing evidence from multiple neighborhoods, the uncertainty of the final prediction will be less than or equal to the uncertainty of any single neighborhood. However, Proposition~\ref{proposition2} shows that when the prediction uncertainty of the evidence of the two neighborhoods involved in fusion is large, the prediction uncertainty after fusion will also be large, which is in line with intuition.

Moreover, from the perspective of accuracy, we can deduce  Propositions~\ref{proposition3} and \ref{proposition4}.

\begin{proposition}
\label{proposition3}
Under the conditions $b_t^{\xi} \geq b_m^{\varphi}$, where $t$ is the index of ground-truth class and $b_m^{\varphi}$ is the largest belief mass in view $\varphi$. Integrating another opinion $\omega^\xi$ makes the joint opinion satisfy $\hat{b}_t \geq b_t^{\varphi}$.
\end{proposition}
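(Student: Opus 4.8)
\textbf{Proof proposal for Proposition~\ref{proposition3}.}

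The plan is to work directly with the closed-form CBF fusion rule in Eq.~\ref{eq.15} and show that the map $b_t^\varphi \mapsto \hat{b}_t$ never decreases the belief in the ground-truth class, given the hypothesis $b_t^\xi \ge b_m^\varphi \ge b_t^\varphi$ (the last inequality holding because $b_m^\varphi$ is by definition the largest belief mass in view~$\varphi$). First I would record the two facts I will need: from Eq.~\ref{eq.3}, $b_t^\varphi + u^\varphi \le 1$ and $b_t^\xi + u^\xi \le 1$, so each belief mass is bounded above by $1$ minus the corresponding uncertainty; and from Proposition~\ref{proposition2}'s derivation, the common denominator $D := u^\xi + u^\varphi - u^\xi u^\varphi = 1-(1-u^\xi)(1-u^\varphi)$ is strictly positive whenever $u^\xi\neq 0\vee u^\varphi\neq 0$, so the quantities are well defined and sign manipulations are safe.

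The core step is the inequality $\hat{b}_t \ge b_t^\varphi$, i.e.
\begin{equation}
b_t^{\xi} u^{\varphi}+u^{\xi} b_t^{\varphi} \;\ge\; b_t^{\varphi}\left(u^{\xi}+u^{\varphi}-u^{\xi} u^{\varphi}\right),
\label{eq.prop3main}
\end{equation}
since $D>0$ lets me clear the denominator without flipping the sign. Cancelling the common term $u^\xi b_t^\varphi$ from both sides reduces Eq.~\ref{eq.prop3main} to
\begin{equation}
b_t^{\xi} u^{\varphi} \;\ge\; b_t^{\varphi} u^{\varphi} - u^{\xi} u^{\varphi} b_t^{\varphi} \;=\; u^{\varphi}\, b_t^{\varphi}\,(1-u^{\xi}).
\label{eq.prop3reduced}
\end{equation}
If $u^\varphi>0$ I may divide through by it, leaving $b_t^\xi \ge b_t^\varphi (1-u^\xi)$; this follows at once because $b_t^\xi \ge b_m^\varphi \ge b_t^\varphi$ and $(1-u^\xi)\le 1$, so $b_t^\varphi(1-u^\xi)\le b_t^\varphi \le b_t^\xi$. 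If $u^\varphi = 0$, then both sides of Eq.~\ref{eq.prop3reduced} vanish and the inequality is trivial (and in that degenerate case one should note $\hat b_t = b_t^\varphi$ directly). I would present the argument as this chain of equivalences ending in the hypothesis, then conclude $\hat b_t \ge b_t^\varphi$.

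I do not anticipate a genuine obstacle here — the statement is an elementary algebraic inequality once Eq.~\ref{eq.15} is in hand. The one place to be careful is the use of the hypothesis: the claim only supplies $b_t^\xi \ge b_m^\varphi$, and I must invoke the definition of $b_m^\varphi$ as the \emph{maximum} belief mass in view $\varphi$ to get $b_m^\varphi \ge b_t^\varphi$, which is what actually feeds the final comparison. A secondary subtlety is the edge case $u^\varphi=0$ (and, symmetrically, ensuring $D\neq 0$), which should be dispatched explicitly so the division steps are justified; beyond that the proof is a two-line computation.
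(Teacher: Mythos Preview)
Your proposal is correct and follows essentially the same approach as the paper: both reduce to the single use of $b_t^{\xi}\ge b_m^{\varphi}\ge b_t^{\varphi}$ together with nonnegativity of the uncertainties. The paper's version keeps the fraction form and bounds numerator and denominator in a short chain, whereas you cross-multiply and cancel, but the substance is identical; your explicit handling of the $u^{\varphi}=0$ edge case and of why $D>0$ is a small bonus in rigor over the paper's presentation.
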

% Prove as follows:
\begin{proof}
\begin{equation}
\begin{aligned}
\hat{b}_t & =\frac{b_t^{\xi} u^{\varphi}+u^{\xi} b_t^{\varphi}}{u^{\xi}+u^{\varphi}-u^{\xi} u^{\varphi}} \\
& \geq \frac{b_t^{\varphi}\left(u^{\varphi}+u^{\xi}\right)}{u^{\xi}+u^{\varphi}-u^{\xi} u^{\varphi}} \\
& \geq \frac{b_t^{\varphi}\left(u^{\varphi}+u^{\xi}\right)}{u^{\xi}+u^{\varphi}}=b_t^{\varphi}.
\end{aligned}
\label{eq.18}
\end{equation}
\end{proof}

\begin{proposition}
\label{proposition4}
When $u^\varphi$ is small, $b_t^{\varphi}-\hat{b}_t$ will be limited, and it will have a positive correlation with $u^\varphi$. As a special case, when $u^\varphi$ is small enough (i.e., $u^\varphi$=0), integrating another opinion will not reduce the performance (i.e.,$b_t^{\varphi}=\hat{b}_t$). %Prove as follows:
\end{proposition}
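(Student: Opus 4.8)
The plan is to analyze the quantity $b_t^{\varphi}-\hat{b}_t$ directly using the CBF fusion formula in Eq.~\ref{eq.15}, treating $u^{\varphi}$ as the variable of interest and keeping the other view's opinion $(b_t^{\xi}, u^{\xi})$ fixed. Starting from $\hat{b}_t=\dfrac{b_t^{\xi} u^{\varphi}+u^{\xi} b_t^{\varphi}}{u^{\xi}+u^{\varphi}-u^{\xi} u^{\varphi}}$, I would form the difference
\[
b_t^{\varphi}-\hat{b}_t=\frac{b_t^{\varphi}\left(u^{\xi}+u^{\varphi}-u^{\xi} u^{\varphi}\right)-\left(b_t^{\xi} u^{\varphi}+u^{\xi} b_t^{\varphi}\right)}{u^{\xi}+u^{\varphi}-u^{\xi} u^{\varphi}}=\frac{u^{\varphi}\left(b_t^{\varphi}-b_t^{\varphi}u^{\xi}-b_t^{\xi}\right)}{u^{\xi}+u^{\varphi}-u^{\xi} u^{\varphi}},
\]
so the numerator carries an explicit factor of $u^{\varphi}$. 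The first step is therefore this algebraic simplification, which cleanly exhibits the $u^{\varphi}$ dependence and already proves the ``special case'': when $u^{\varphi}=0$ the whole expression vanishes, giving $\hat b_t=b_t^{\varphi}$ and hence no performance drop.

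Second, I would bound the difference for small but nonzero $u^{\varphi}$. Since $b_t^{\varphi}(1-u^{\xi})-b_t^{\xi}$ lies in a bounded range (all beliefs and uncertainties are in $[0,1]$ by Eq.~\ref{eq.3}), and the denominator $u^{\xi}+u^{\varphi}-u^{\xi}u^{\varphi}=u^{\xi}(1-u^{\varphi})+u^{\varphi}\ge u^{\xi}$ is bounded below by $u^{\xi}>0$ (using the hypothesis $u^{\xi}\neq 0$ implicit in the well-definedness of Eq.~\ref{eq.15}), we get $|b_t^{\varphi}-\hat b_t|\le u^{\varphi}/u^{\xi}$, which is ``limited'' and tends to $0$ as $u^{\varphi}\to 0$. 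To make the ``positive correlation with $u^{\varphi}$'' precise, I would note that for fixed $(b_t^{\xi},u^{\xi})$ the map $u^{\varphi}\mapsto b_t^{\varphi}-\hat b_t$ can be written as $g(u^{\varphi})=c\,u^{\varphi}/(u^{\xi}+u^{\varphi}(1-u^{\xi}))$ with $c=b_t^{\varphi}(1-u^{\xi})-b_t^{\xi}$ a constant in $u^{\varphi}$; differentiating gives $g'(u^{\varphi})=c\,u^{\xi}/(u^{\xi}+u^{\varphi}(1-u^{\xi}))^2$, which has the sign of $c$, so $b_t^{\varphi}-\hat b_t$ is monotone in $u^{\varphi}$ — increasing whenever $c>0$ (the regime where fusion actually hurts) — establishing the claimed correlation.

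The main obstacle is interpretive rather than computational: the statement mixes a worst-case bound (``limited''), a monotonicity claim (``positive correlation''), and a boundary case, and one must be careful about the sign of $c=b_t^{\varphi}(1-u^{\xi})-b_t^{\xi}$, since $b_t^{\varphi}-\hat b_t$ is only nonnegative (an actual decrease in the true-class belief) when $c\ge 0$; when $c<0$ fusion in fact helps, consistent with Proposition~\ref{proposition3}. I would resolve this by restricting attention to the meaningful regime $c\ge 0$ for the ``positive correlation'' part, and by stating the $u^{\varphi}/u^{\xi}$ bound unconditionally for the ``limited'' part, so that the proposition reads coherently. The remaining steps — substituting $u^{\varphi}=0$ and plugging into Eq.~\ref{eq.15} to confirm $\hat b_t=b_t^{\varphi}$ — are routine.
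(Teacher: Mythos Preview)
Your proposal is correct and proves the proposition, but it differs from the paper's route. The paper does not first simplify the difference into closed form; instead it runs a short chain of relaxations directly on $b_t^{\varphi}-\hat b_t$: first drop the nonnegative term $b_t^{\xi}u^{\varphi}$ from the numerator of $\hat b_t$, then enlarge the denominator $u^{\xi}+u^{\varphi}-u^{\xi}u^{\varphi}$ to $u^{\xi}+u^{\varphi}$, arriving at the one-line bound
\[
b_t^{\varphi}-\hat b_t \;\le\; b_t^{\varphi}\,\frac{1}{\,u^{\xi}/u^{\varphi}+1\,},
\]
from which ``limited'', ``positive correlation with $u^{\varphi}$'', and the $u^{\varphi}=0$ special case can all be read off by inspection (no derivative needed). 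Compared with your exact factorization $b_t^{\varphi}-\hat b_t = u^{\varphi}\,c/(u^{\xi}+u^{\varphi}(1-u^{\xi}))$ followed by the bound $|b_t^{\varphi}-\hat b_t|\le u^{\varphi}/u^{\xi}$ and a monotonicity check via $g'(u^{\varphi})$, the paper's argument is quicker and yields the slightly sharper constant $b_t^{\varphi}/(u^{\xi}+u^{\varphi})$ in place of your $1/u^{\xi}$; on the other hand, your exact expression makes the sign of the difference transparent (you explicitly isolate the regime $c\ge 0$ where fusion can hurt, tying back to Proposition~\ref{proposition3}), a nuance the paper's inequality chain leaves implicit since its upper bound is vacuously true when the left-hand side is negative.
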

\begin{proof}
\begin{equation}
\begin{aligned}
b_t^{\varphi}-\hat{b}_t & =b_t^{\varphi}-\frac{b_t^{\xi} u^{\varphi}+u^{\xi} b_t^{\varphi}}{u^{\xi}+u^{\varphi}-u^{\xi} u^{\varphi}} \\
& \leq b_t^{\varphi}-\frac{u^{\xi} b_t^{\varphi}}{u^{\xi}+u^{\varphi}-u^{\xi} u^{\varphi}} \\
& \leq b_t^{\varphi}-\frac{u^{\xi} b_t^{\varphi}}{u^{\xi}+u^{\varphi}} \\
& =b_t^{\varphi}\left(\frac{1}{\frac{u^{\xi}}{u^{\varphi}}+1}\right).
\end{aligned}
\label{eq.19}
\end{equation}
\end{proof}

For the benefits of fusion, Proposition~\ref{proposition3} reveals that fusing evidence from different neighborhoods can potentially improve the accuracy of the model. For the negative effects that fusing different neighborhood information can have on some nodes, Proposition~\ref{proposition4} ensures under mild conditions, the model is limited in terms of the performance degradation that can occur when another neighborhood evidence is incorporated into the original neighborhood.

In conclusion, \ourmethod distinctly demonstrates theoretical advantages in both uncertainty and accuracy.

\subsection{Model Complexity Analysis}\label{VE}
In this subsection, we discuss the time complexity of \ourmethod. The complexity of \ourmethod is mainly composed of EP and ET. The complexity of EP is $\mathcal{O}(Ld|E|)$, where $L$ denotes propagation step, $d$ is the dimension of node feature, and $|E|$ denotes edge count. The complexity of its ET (i.e., two-layer shared-weights MLP) is $\mathcal{O}\left(n d_h (d+K)\right)$, where $d_h$ refers to its hidden layer size, $n$ is the number of nodes, and $K$ is the number of classes. To sum up, the total computational complexity of \ourmethod is $\mathcal{O}\left(L d (|E|)+n d_h (d+K)\right)$, which is linear with the sum of node number and edge number.

\section{Experiments}
\begin{table}
\centering
\caption{The statistics of datasets.}
\resizebox{1\columnwidth}{!}{
\begin{tabular}{l|ccccc} 
\toprule
\textbf{Dataset} & \textbf{Nodes} & \textbf{Features} & \textbf{Edges} & \textbf{Classes} & \textbf{Train/Val/Test} \\ 
\midrule
\textbf{Cora} & 2708 & 1433 & 5278 & 7 & 140/500/1000 \\ 
\textbf{Citeseer} & 3327 & 3703 & 4552 & 6 & 120/500/1000 \\ 
\textbf{Pubmed} & 19717 & 500 & 44324 & 3 & 60/500/1000 \\ 
\textbf{Computers} & 13381 & 767 & 245778 & 10 & 200/300/12881 \\ 
\textbf{Photo} & 7487 & 745 & 119043 & 8 & 160/240/7087 \\
\textbf{Actor} & 7600 & 932 & 30019 & 5 & 100/150/7350 \\
\textbf{Chameleon} & 2277 & 2325 & 36101 & 5 & 100/150/2027 \\
\textbf{Squirrel} & 5201 & 2089 & 217073 & 5 & 100/150/4951 \\
\bottomrule
\end{tabular}
}
\label{t1}
\end{table}

In this section, we evaluate \ourmethod with extensive experiments. We first describe the experimental setup, including datasets, baseline methods, and implementation details. Then, we present the experimental results on performance comparison and analytical experiments. 

\subsection{Experimental Setup}

\subsubsection{Datasets} 
To demonstrate the effectiveness of our model, we conducted comparative experiments on eight datasets. These include homophilic graphs, specifically, three citation networks (Cora, Citeseer, and Pubmed) and two co-purchase networks (Amazon Computers and Amazon Photo)~\cite{zhang2022graph}, as well as heterophilic graphs (Actor, Chameleon, and Squirrel)~\cite{li2024permutation}. We applied standard fixed dataset splits on all datasets, following the evaluation protocol in~\cite{zhang2022graph}. The detailed information about the datasets we adopted during the experiments is in Table~\ref{t1}.
\begin{table*}
\centering
\caption{Test accuracy on node classification benchmarks. $\first{First}$, \second{second}, and \third{third} indicate the top-three performance, respectively.}
\resizebox{\textwidth}{!}{
\begin{tabular}{l|cccccccc} 
\toprule
\textbf{Method} & \textbf{Cora} & \textbf{Citeseer} & \textbf{Pubmed} & \textbf{Computers} & \textbf{Photo} & \textbf{Actor} & \textbf{Chameleon} & \textbf{Squirrel} \\ 
\midrule
\textbf{GCN} & $81.8{\scriptstyle\pm0.5}$ & $70.8{\scriptstyle\pm0.5}$ & $79.3{\scriptstyle\pm0.7}$ & $82.4{\scriptstyle\pm0.4}$ & $91.2{\scriptstyle\pm0.6}$ & $23.5{\scriptstyle\pm0.5}$ & $28.2{\scriptstyle\pm1.2}$ & $23.3{\scriptstyle\pm0.3}$ \\ 
\textbf{GAT} & $83.0{\scriptstyle\pm0.7}$ & $72.5{\scriptstyle\pm0.7}$ & $79.0{\scriptstyle\pm0.3}$ & $80.1{\scriptstyle\pm0.6}$ & $90.8{\scriptstyle\pm1.0}$ & $25.7{\scriptstyle\pm0.8}$ & $24.7{\scriptstyle\pm2.4}$ & $22.6{\scriptstyle\pm0.3}$ \\ 
\textbf{JK-Net} & $81.8{\scriptstyle\pm0.5}$ & $70.7{\scriptstyle\pm0.7}$ & $78.8{\scriptstyle\pm0.7}$ & $82.0{\scriptstyle\pm0.6}$ & $91.9{\scriptstyle\pm0.7}$ & $24.9{\scriptstyle\pm0.4}$ & $27.7{\scriptstyle\pm2.0}$ & $24.6{\scriptstyle\pm0.3}$ \\ 
\textbf{ResGCN} & $82.2{\scriptstyle\pm0.6}$ & $70.8{\scriptstyle\pm0.7}$ & $78.3{\scriptstyle\pm0.6}$ & $81.1{\scriptstyle\pm0.7}$ & $91.3{\scriptstyle\pm0.9}$ & $23.8{\scriptstyle\pm2.8}$ & $21.3{\scriptstyle\pm3.9}$ & $22.4{\scriptstyle\pm0.7}$ \\ 
\textbf{APPNP} & $83.3{\scriptstyle\pm0.5}$ & $71.8{\scriptstyle\pm0.5}$ & $\third{80.1{\scriptstyle\pm0.2}}$ & $81.7{\scriptstyle\pm0.3}$ & $91.4{\scriptstyle\pm0.3}$ & $23.1{\scriptstyle\pm0.5}$ & $30.0{\scriptstyle\pm0.4}$ & $24.1{\scriptstyle\pm0.2}$ \\ 
\textbf{AP-GCN} & $83.4{\scriptstyle\pm0.3}$ & $71.3{\scriptstyle\pm0.5}$ & $79.7{\scriptstyle\pm0.3}$ & $\third{83.7{\scriptstyle\pm0.6}}$ & $92.1{\scriptstyle\pm0.3}$ & $20.7{\scriptstyle\pm2.9}$ & $38.3{\scriptstyle\pm4.4}$ & $\first{30.2{\scriptstyle\pm1.8}}$ \\ 
\textbf{SGC} & $81.0{\scriptstyle\pm0.2}$ & $71.3{\scriptstyle\pm0.5}$ & $78.9{\scriptstyle\pm0.5}$ & $82.2{\scriptstyle\pm0.9}$ & $91.6{\scriptstyle\pm0.7}$ & $22.9{\scriptstyle\pm0.2}$ & $28.7{\scriptstyle\pm0.3}$ & $24.0{\scriptstyle\pm0.2}$ \\ 
\textbf{SIGN} & $82.1{\scriptstyle\pm0.3}$ & $72.4{\scriptstyle\pm0.8}$ & $79.5{\scriptstyle\pm0.5}$ & $83.1{\scriptstyle\pm0.8}$ & $91.7{\scriptstyle\pm0.7}$ & $\third{28.5{\scriptstyle\pm0.1}}$ & $34.9{\scriptstyle\pm0.4}$ & $24.2{\scriptstyle\pm0.2}$ \\ 
\textbf{S\textsuperscript{2}GC} & $82.7{\scriptstyle\pm0.3}$ & $73.0{\scriptstyle\pm0.2}$ & $79.9{\scriptstyle\pm0.3}$ & $83.1{\scriptstyle\pm0.7}$ & $91.6{\scriptstyle\pm0.6}$ & $23.5{\scriptstyle\pm0.2}$ & $28.7{\scriptstyle\pm0.4}$ & $24.0{\scriptstyle\pm0.3}$ \\ 
\textbf{GAMLP} & $\second{83.9{\scriptstyle\pm0.6}}$ & $\first{73.9{\scriptstyle\pm0.6}}$ & $\second{80.8{\scriptstyle\pm0.5}}$ & $\second{84.2{\scriptstyle\pm0.5}}$ & $\second{92.6{\scriptstyle\pm0.8}}$ & $22.2{\scriptstyle\pm1.4}$ & $\third{44.1{\scriptstyle\pm1.1}}$ & $\third{29.3{\scriptstyle\pm2.6}}$ \\ 
\textbf{AGNN} & $80.2{\scriptstyle\pm0.6}$ & $70.4{\scriptstyle\pm0.5}$ & $77.1{\scriptstyle\pm0.5}$ & $75.7{\scriptstyle\pm2.4}$ & $88.9{\scriptstyle\pm1.5}$ & $24.0{\scriptstyle\pm3.0}$ & $\first{45.0{\scriptstyle\pm1.4}}$ & $27.6{\scriptstyle\pm0.4}$ \\ 
\textbf{Flip-APPNP} & $81.5{\scriptstyle\pm0.6}$ & $70.4{\scriptstyle\pm0.5}$ & $79.8{\scriptstyle\pm0.6}$ & $80.8{\scriptstyle\pm0.1}$ & $91.9{\scriptstyle\pm0.3}$ & $\second{29.9{\scriptstyle\pm1.1}}$ & $26.2{\scriptstyle\pm1.8}$ & $23.6{\scriptstyle\pm0.6}$ \\ 
\midrule
\textbf{Drop-GCN} & $82.8{\scriptstyle\pm0.8}$ & $72.6{\scriptstyle\pm0.7}$ & $79.0{\scriptstyle\pm0.3}$ & $81.2{\scriptstyle\pm0.5}$ & $92.0{\scriptstyle\pm0.1}$ & $20.5{\scriptstyle\pm0.6}$ & $31.1{\scriptstyle\pm1.0}$ & $22.8{\scriptstyle\pm0.2}$ \\ 
\textbf{S-BGCN-T} & $78.3{\scriptstyle\pm1.5}$ & $\third{73.2{\scriptstyle\pm0.5}}$ & $79.1{\scriptstyle\pm0.2}$ & $78.3{\scriptstyle\pm1.5}$ & $81.8{\scriptstyle\pm1.3}$  & $21.6{\scriptstyle\pm0.6}$ & $31.4{\scriptstyle\pm0.4}$ & $22.4{\scriptstyle\pm1.1}$ \\ 
\textbf{GKDE} & $\third{83.8{\scriptstyle\pm0.7}}$ & $73.0{\scriptstyle\pm0.7}$ & $79.1{\scriptstyle\pm0.2}$ & $78.1{\scriptstyle\pm2.0}$ & $88.3{\scriptstyle\pm1.6}$  & $19.7{\scriptstyle\pm1.1}$ & $31.5{\scriptstyle\pm0.3}$ & $23.3{\scriptstyle\pm0.2}$ \\ 
\textbf{CaGCN} & $83.4{\scriptstyle\pm0.8}$ & $72.8{\scriptstyle\pm1.1}$ & $79.3{\scriptstyle\pm1.5}$ & $82.0{\scriptstyle\pm1.6}$ & $\third{92.3{\scriptstyle\pm0.6}}$  & $25.7{\scriptstyle\pm1.0}$ & $29.3{\scriptstyle\pm0.4}$ & $23.0{\scriptstyle\pm2.0}$ \\ 
\midrule
\textbf{\ourmethod} & $\first{85.2{\scriptstyle\pm0.3}}$ & $\second{73.3{\scriptstyle\pm0.4}}$ & $\first{82.4{\scriptstyle\pm0.2}}$ & $\first{84.5{\scriptstyle\pm0.4}}$ & $\first{93.8{\scriptstyle\pm0.1}}$  & $\first{30.0{\scriptstyle\pm0.3}}$ & $\second{44.3{\scriptstyle\pm0.8}}$ & $\second{29.5{\scriptstyle\pm1.2}}$ \\
\bottomrule
\end{tabular}
}
\label{t2}
\end{table*}

\subsubsection{Baselines} We compared \ourmethod with ten state-of-the-art GNNs, including GCN~\cite{kipf2016semi}, GAT~\cite{velivckovic2017graph}, JK-Net~\cite{xu2018representation}, ResGCN~\cite{li2019deepgcns}, APPNP~\cite{gasteiger2018predict}, AP-GCN~\cite{spinelli2020adaptive}, SGC~\cite{wu2019simplifying}, SIGN~\cite{frasca2020sign}, $\mathrm{S^{2}GC}$~\cite{zhu2020simple}, GAMLP~\cite{zhang2022graph}, AGNN~\cite{chen2023agnn}, and Flip-APPNP~\cite{choi2024beyond}.
We also consider four uncertainty-aware GNNs, Drop-GAT~\cite{gal2016dropout,zhao2020uncertainty,ryu2019uncertainty}, S-BGAT-T~\cite{zhao2020uncertainty}, GKDE~\cite{zhao2020uncertainty}, and CaGCN~\cite{wang2021confident}. 
Among them, Drop-GAT and S-BGAT-T (reference~\cite{zhao2020uncertainty} Supplementary material Table 8) have superior performance compared to GCN-based Drop-GCN and S-BGCN-T. 
In Drop-GAT, Monte-Carlo Dropout is incorporated into the GAT model to learn probabilistic uncertainty. S-BGAT-T is a subjective logic GAT based on Bayesian framework with teacher network. GKDE is a graph-based kernel Dirichlet distribution estimation method. CaGCN is a GCN model based on confidence calibration. For the above baselines, reference was made to the experimental results reported in paper~\cite{zhang2022graph,zhao2020uncertainty}. During reproduction, the experiments are conducted based on the hyperparameters provided in the original papers.

\subsubsection{Implementation Details} 
For each experiment, we reported an average performance of 10 runs. The hyperparameters of all models are adjusted based on the loss and accuracy on the validation set. For hyperparameter settings, we perform a specific set of grid searches to select the key hyperparameters in \ourmethod. The search space is as follows: 
\begin{itemize}
    \item Learning rate: \{5e-3, 1e-2, 2e-2\}
    \item Weight decay: \{1e-3, 5e-3, 1e-2, 2e-2\}
    \item Hidden size: \{32, 64, 256\}
    \item Dropout rate: \{0., 0.2, 0.4, 0.6, 0.8\}
    \item Perturbation probability $\mathcal{b}$: \{0.1, 0.3, 0.5, 0.7\}
    \item Propagation iterations $T$: \{2, 4, 6, 8, 16, 32\}
    \item $\lambda_{KL}$: \{0.01, 0.05, 0.1\}
    \item  $\lambda_{Dis}$: \{0.1, 0.3, 0.5\}
\end{itemize}

The experiments are conducted on a machine with Intel(R) Xeon(R) Platinum 8255C CPU@2.50GHz and a single RTX 3090 GPU with 24GB GPU memory. The operating system of the machine is Ubuntu 20.04. As for software versions, we use Python 3.8, Pytorch 2.1.2, and CUDA 12.1.

\subsection{Performance of Node Classification}
In this experiment, we compared the node classification performance of \ourmethod and baseline GNNs. As can be seen from Table~\ref{t2}, \ourmethod achieved state-of-the-art performance on two citation datasets and second-best accuracy on the remaining one. Moreover, it also outperformed other methods on two Amazon datasets. Compared to SGC, SIGN, and $\mathrm{S^{2}GC}$ which have the same decoupled architecture as \ourmethod ($EP \to EP \cdots ET \to ET$), \ourmethod improved the accuracy by 2.4\% to 5.2\%. This indicates the benefits of \ourmethod within the same GNN architecture. \ourmethod also surpassed uncertainty-aware GNNs, which consider the problem of trustworthy prediction by GNNs. For CaGCN, \ourmethod leads by 4\% on the Pubmed datasets. These results show the remarkable effectiveness of \ourmethod based on evidence learning and evidence fusion in handling graph data with uncertainty.

\begin{figure*} [!t]
	\centering
	\subfloat[Cora\label{fig8:a}]{
		\includegraphics[scale=0.18]{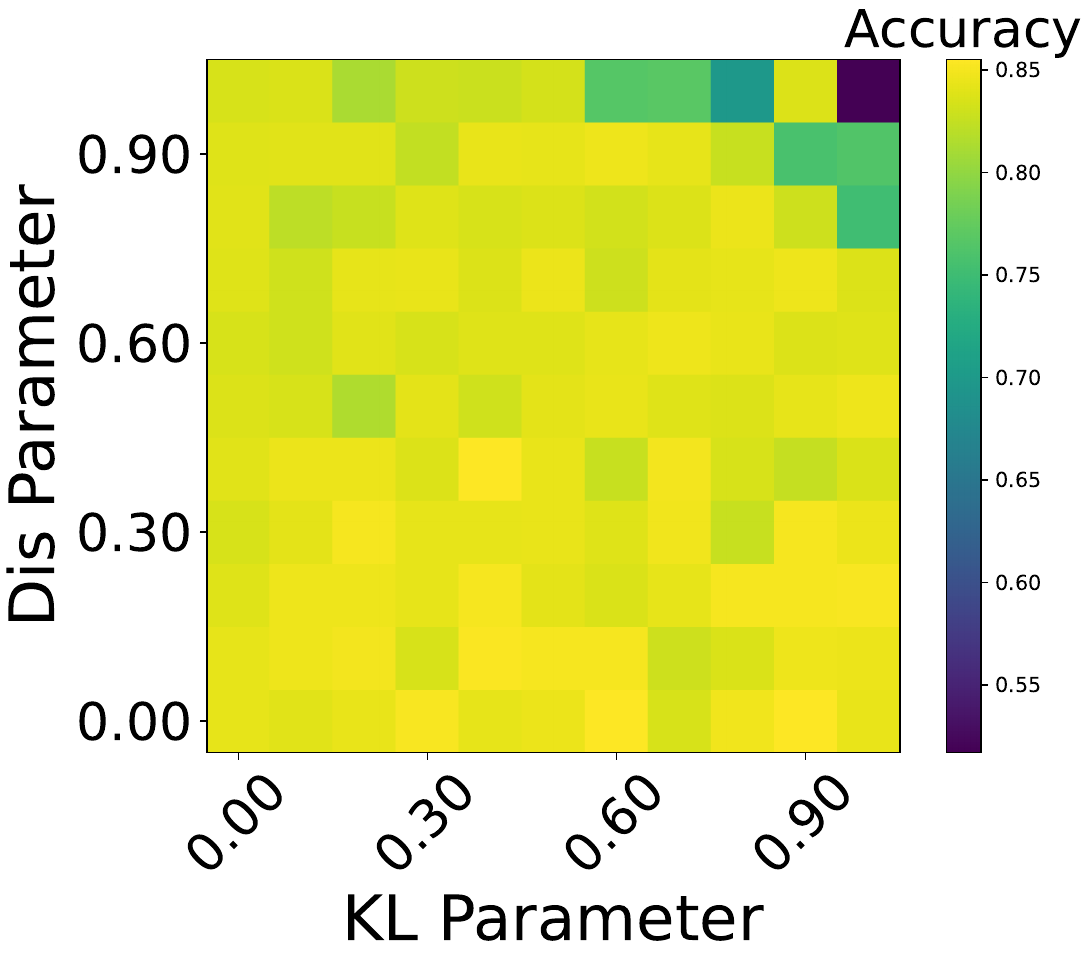}}\hfill
	\subfloat[Citeseer\label{fig8:b}]{
		\includegraphics[scale=0.18]{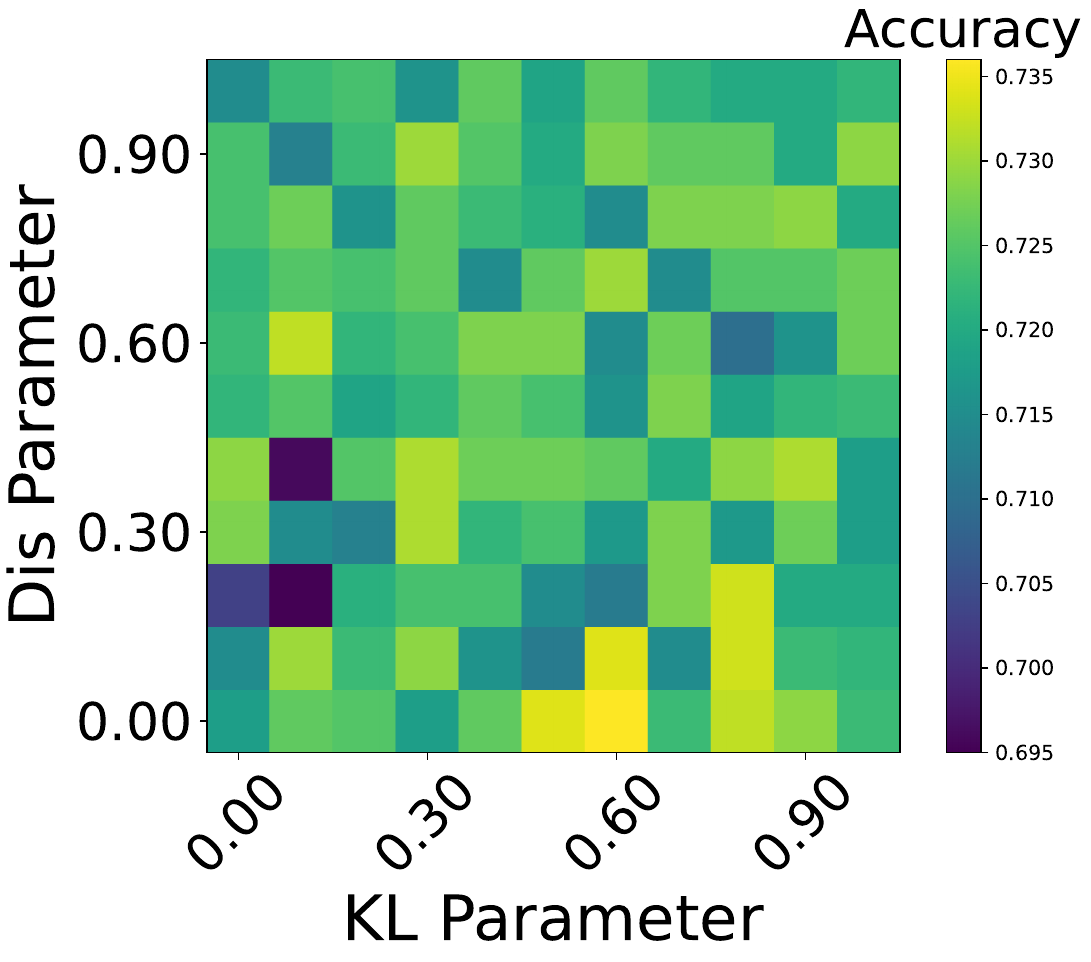}}\hfill
	\subfloat[Pubmed\label{fig8:c}]{
		\includegraphics[scale=0.18]{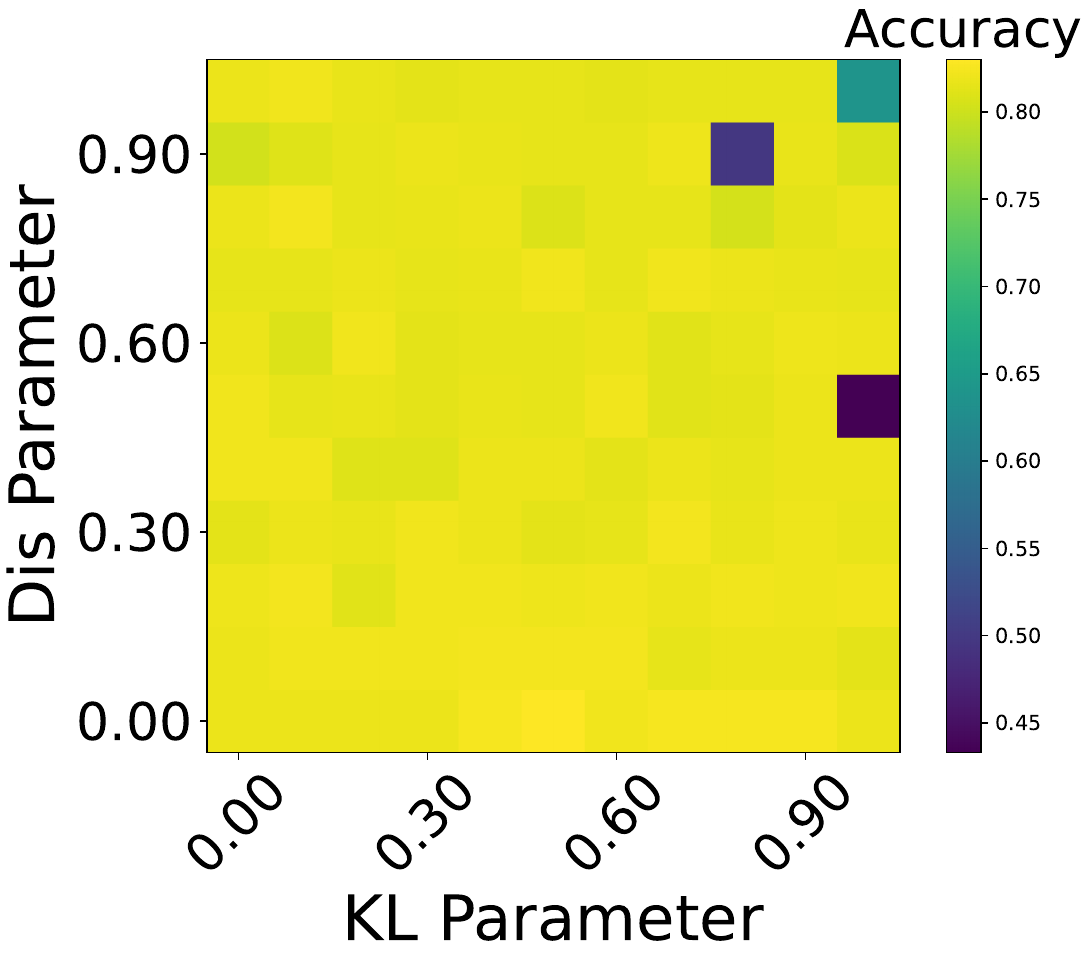}}\hfill
	\subfloat[Computers\label{fig8:d}]{
		\includegraphics[scale=0.18]{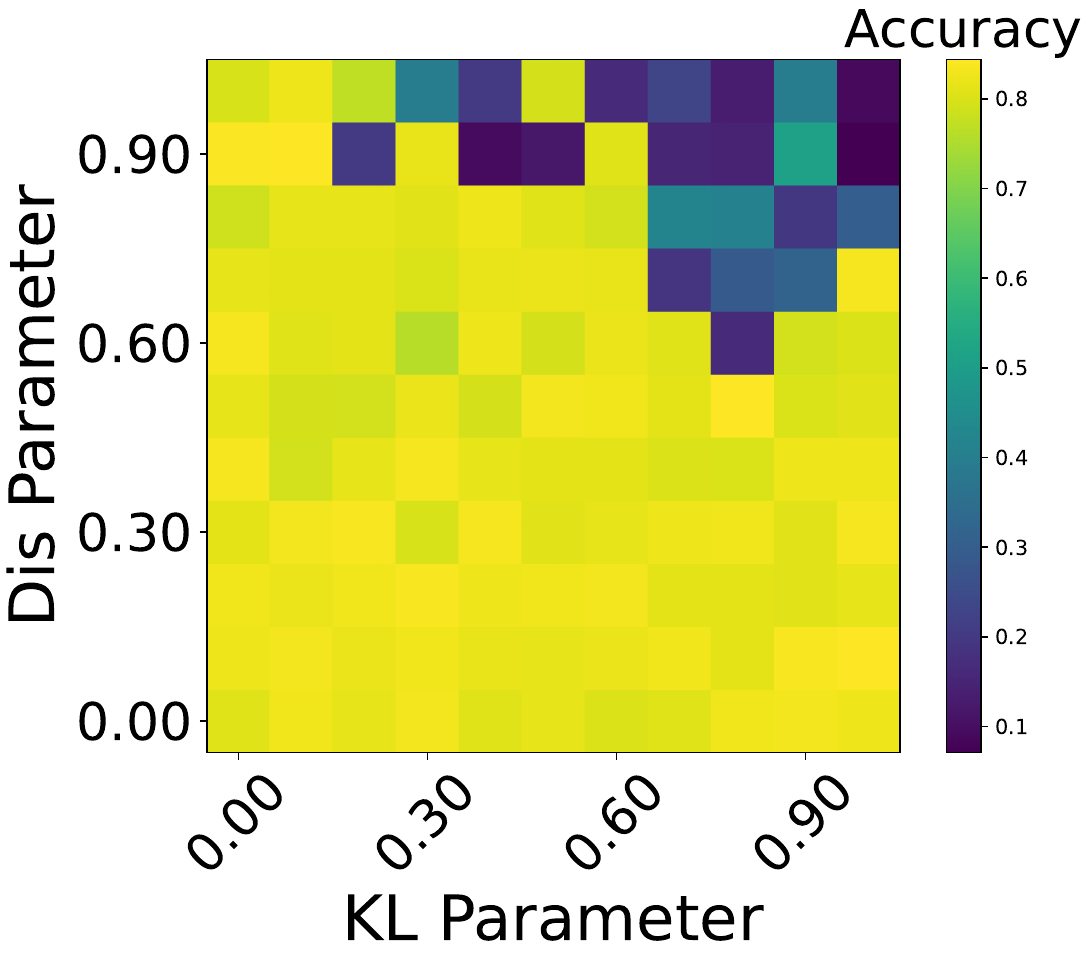}}\hfill
	\subfloat[Photo\label{fig8:e}]{
		\includegraphics[scale=0.18]{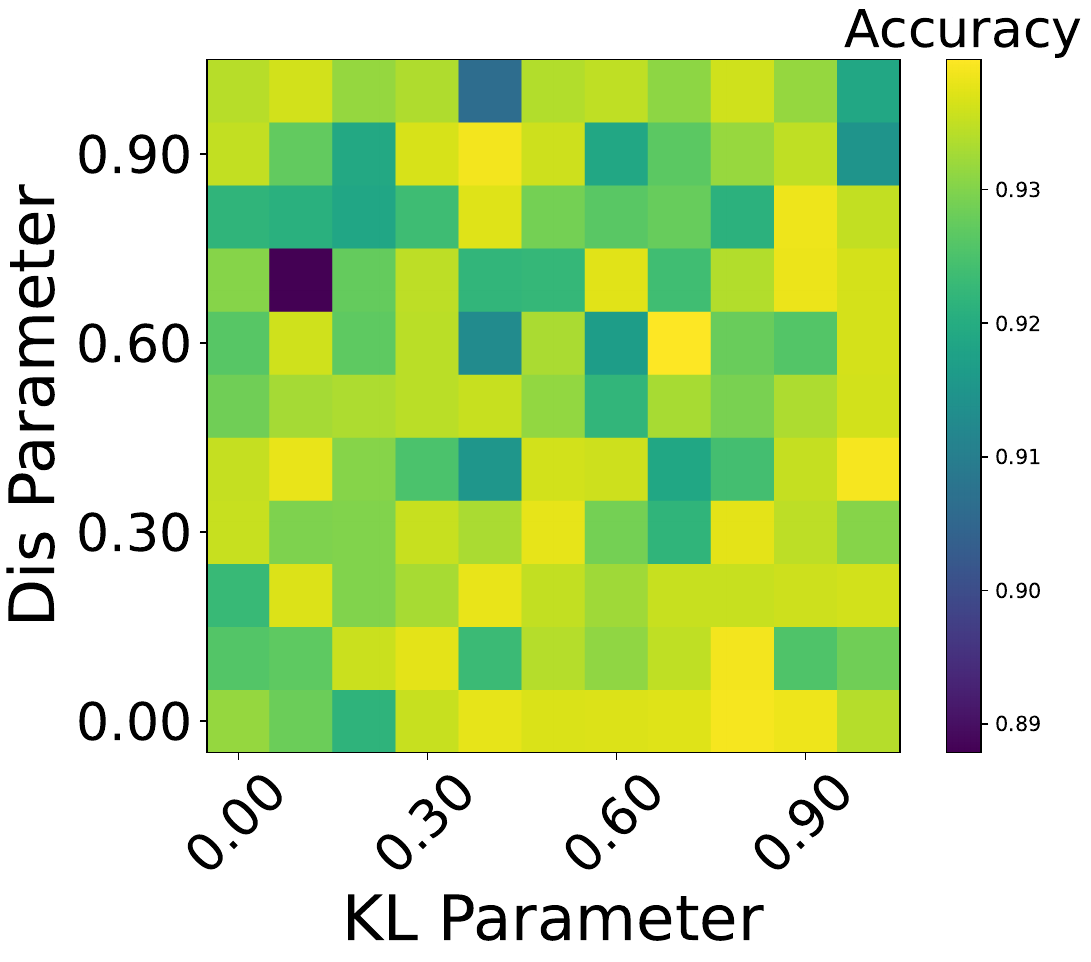}}
	\caption{Heat map visualization of the effects of two parameters $\lambda_{KL}$ and $\lambda_{Dis}$ on model performance.}
	\label{fig:param_heatmap}
\end{figure*}

\begin{figure*} [!t]
	\centering
	\subfloat[Cora\label{fig9:a}]{
		\includegraphics[scale=0.18]{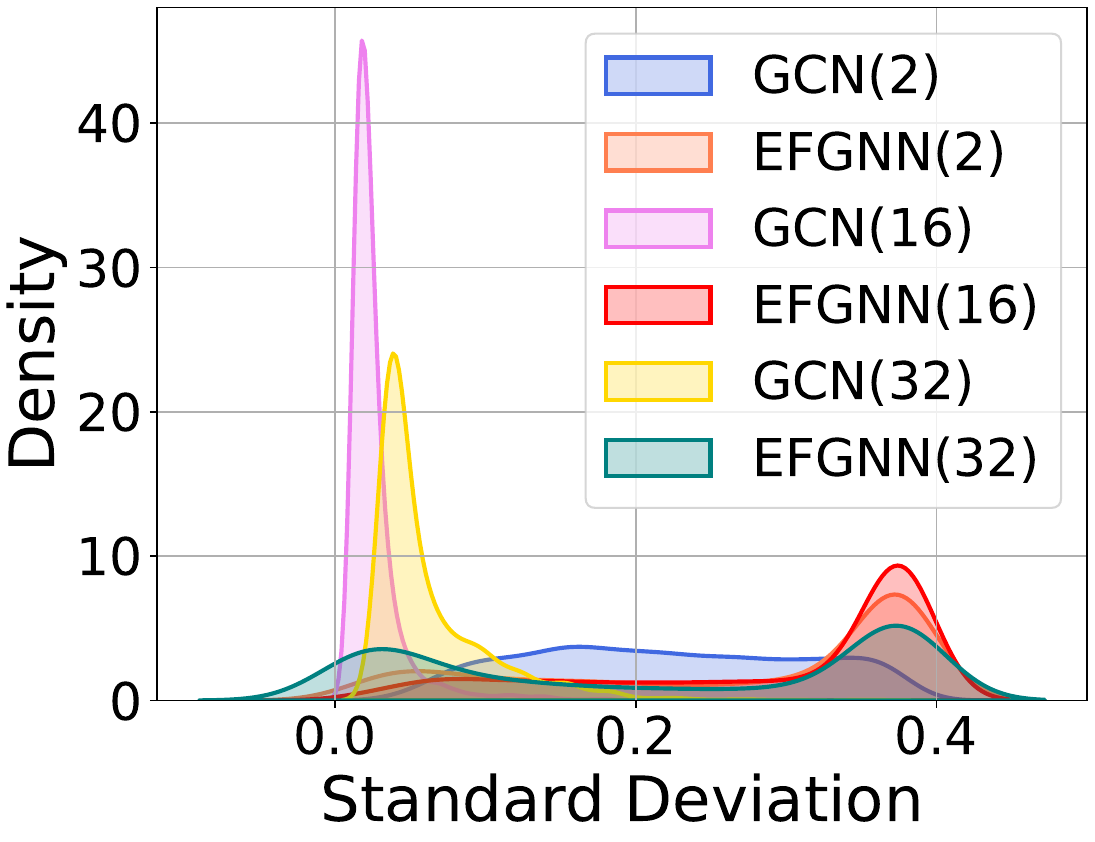}}\hfill
	\subfloat[Citeseer\label{fig9:b}]{
		\includegraphics[scale=0.18]{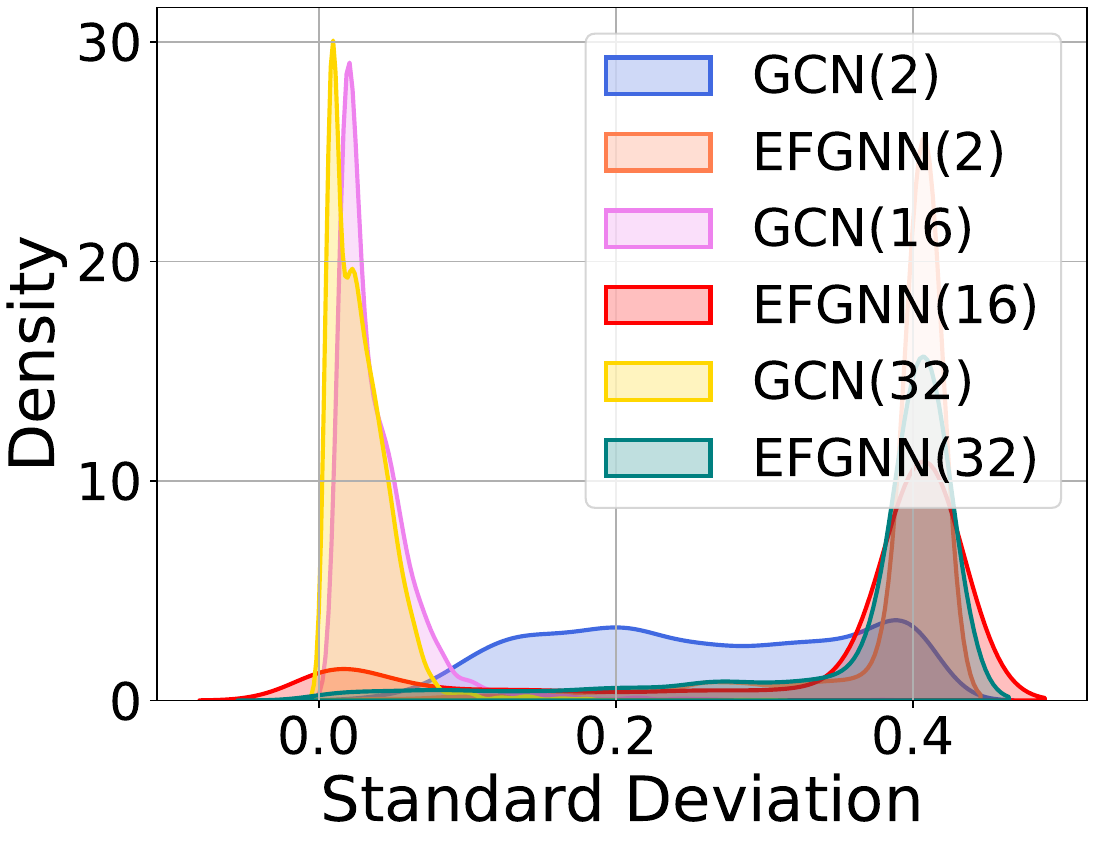}}\hfill
	\subfloat[Pubmed\label{fig9:c}]{
		\includegraphics[scale=0.18]{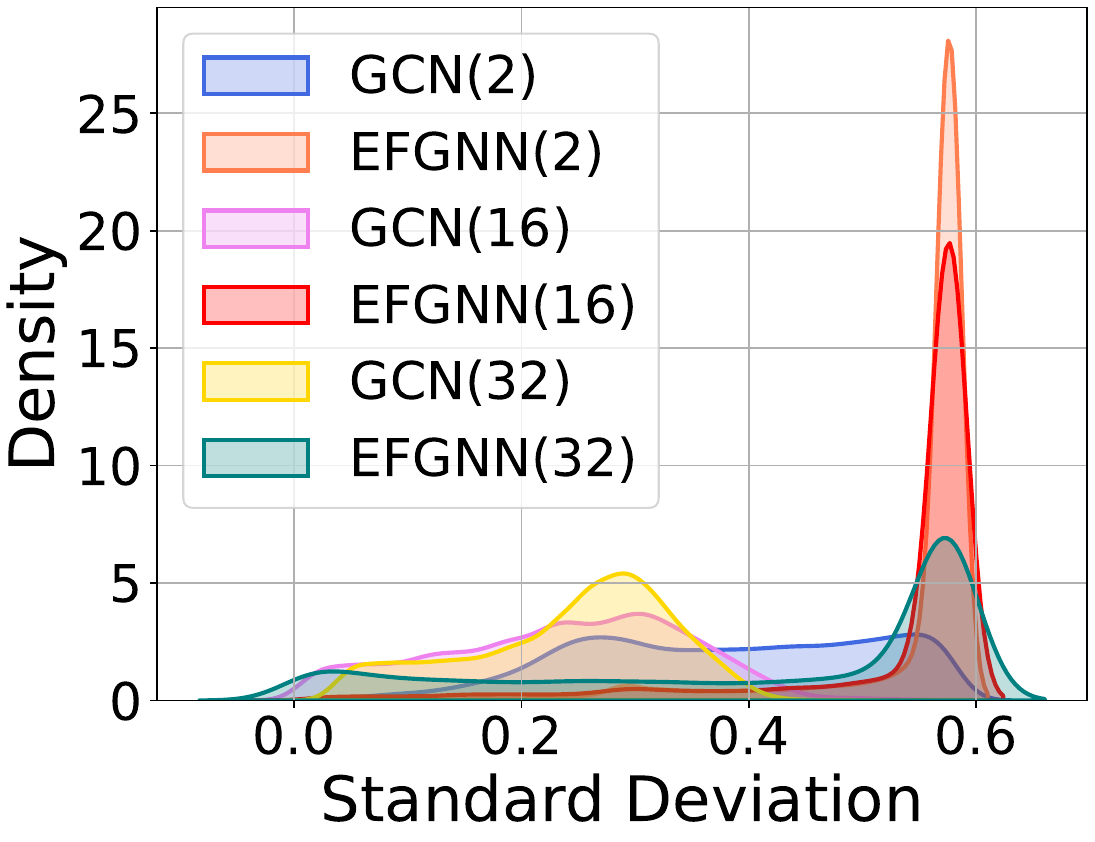}}\hfill
	\subfloat[Computers\label{fig9:d}]{
		\includegraphics[scale=0.18]{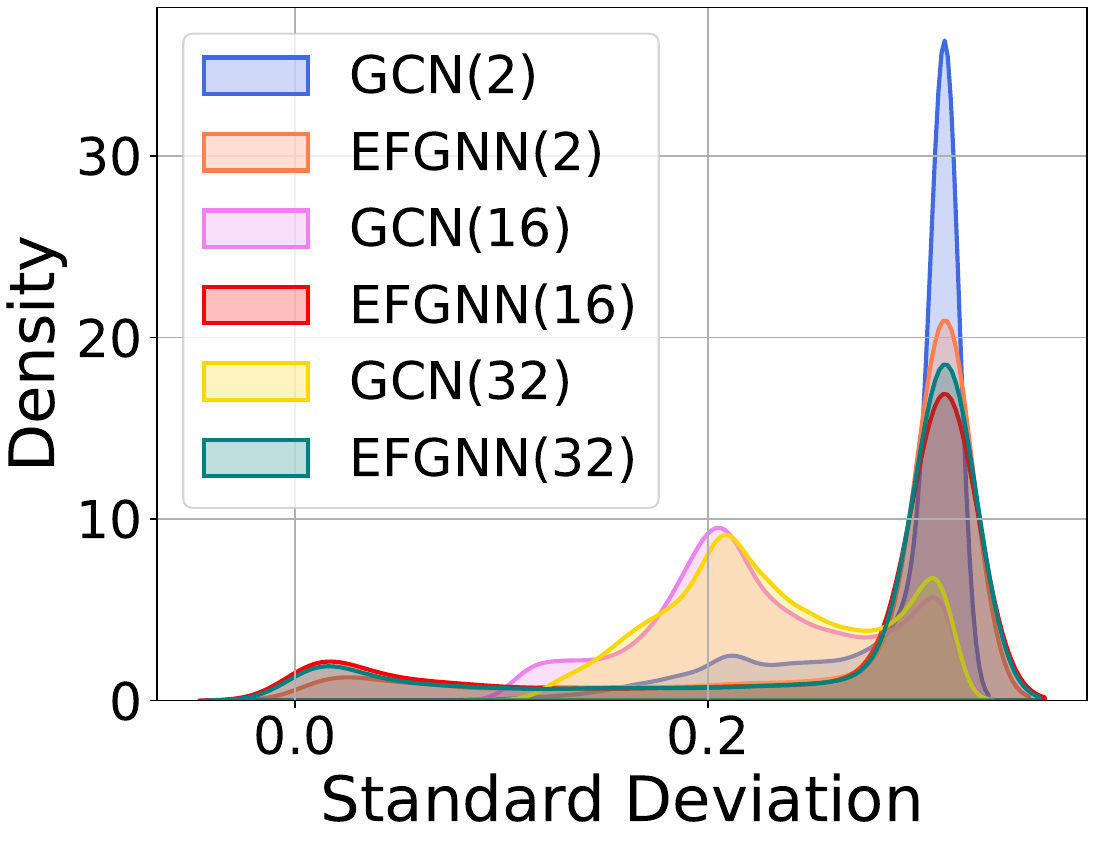}}\hfill
	\subfloat[Photo\label{fig9:e}]{
		\includegraphics[scale=0.18]{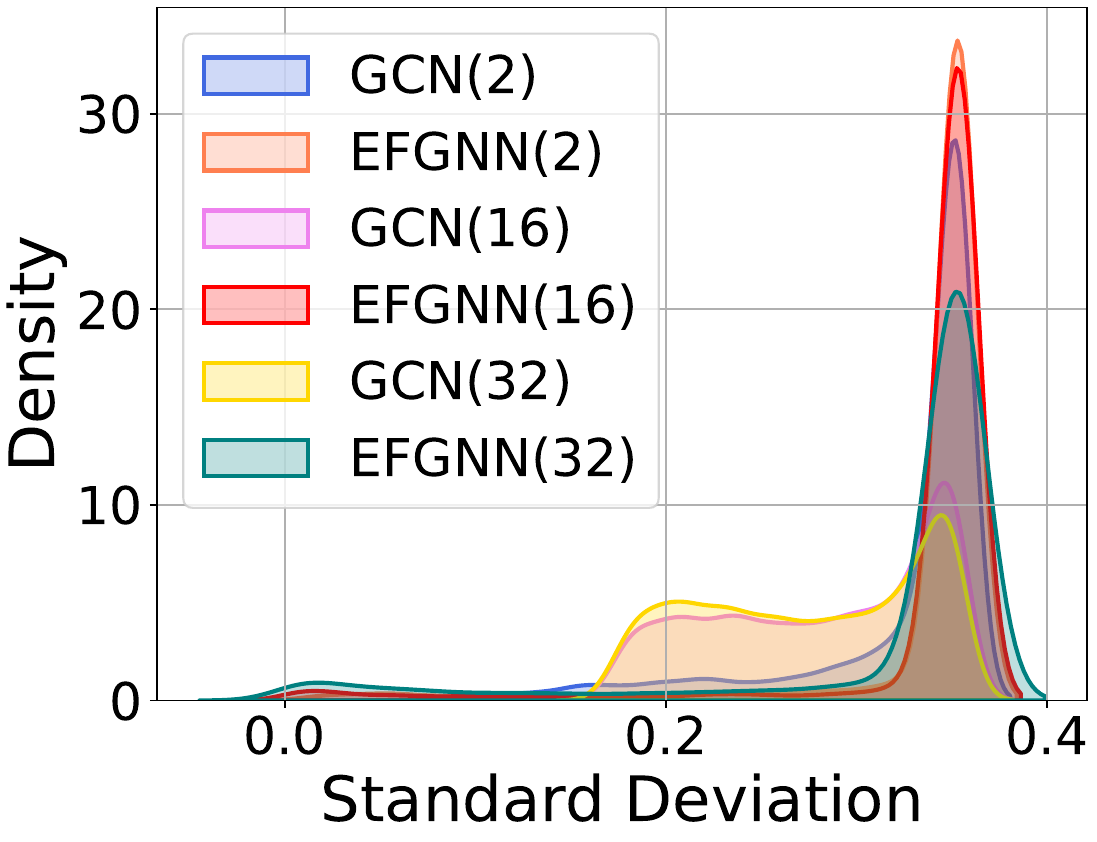}}
	\caption{The standard deviation of the class probability distribution of multi-layer (2, 16, or 32 layers) GCN and \ourmethod.}
	\label{fig:std_dis_compare}
\end{figure*}

\subsection{Ablation Study}
In order to analyze the contribution of each component and key designs in \ourmethod, we perform experiments on several variants of \ourmethod. We first remove the regularization terms by setting $\lambda_{KL}$ and $\lambda_{Dis}$ to be 0, denoted as ``w/o KL'' and ``w/o Dis'', respectively. Also, we create a variant without CBF (denoted as ``w/o CBF'') that uses the final layer output for prediction. The performance comparison is demonstrated in Table~\ref{t3}. 
From the results, it is easy to see that the performance degradation of ``w/o CBF'' is the largest among several variants, indicating the significance of evidence fusion. In addition, both KL divergence and dissonance regularization terms have positive effects on model performance. 

Furthermore, we set $\lambda_{KL}$ and $\lambda_{Dis}$ to values between 0-1 and visualized their effects in the form of heatmaps. As we can see from Figure~\ref{fig:param_heatmap}, when $\lambda_{KL}$ and $\lambda_{Dis}$ are too large, the accuracy of the model is low. This shows that focusing on the penalty for wrong classes and dissonance of evidence can cause the model to fail to find the optimal parameter space. Moreover, when the values of $\lambda_{KL}$ and $\lambda_{Dis}$ are mediate, the performance of \ourmethod is generally better.

In addition, Table~\ref{t4} provides an example of an 8-layer \ourmethod, with EP-$i$ representing only the evidence generated by the features after the $i$ aggregation as the output. We can witness that the performance of the complete \ourmethod is better than the performance of evidence classification using only any single output. Meanwhile, from EP-1 to EP-8, the optimal performance is reached at different depths for different datasets, which verifies the importance of considering multiple depths.

\begin{table}
\centering
\caption{Classification accuracy of \ourmethod and its variant. $\first{First}$, \second{second}, and \third{third} indicate the top-three performance, respectively.}
\resizebox{1\columnwidth}{!}{
\begin{tabular}{l|ccccc} 
\toprule
\textbf{Method} & \textbf{Cora} & \textbf{Citeseer} & \textbf{Pubmed} & \textbf{Computers} & \textbf{Photo} \\
\midrule
\textbf{w/o KL} & $\second{84.1}$ & $\second{71.7}$ & $\second{82.1}$ & $\second{80.9}$ & $\second{93.6}$ \\
\textbf{w/o Dis} & $\third{83.8}$ & 70.5 & $\third{81.7}$ & $\third{80.3}$ & $\third{93.1}$ \\
\textbf{w/o CBF} & 82.0 & $\third{71.3}$ & 80.5 & 78.3 & 87.7 \\
\midrule
\textbf{\ourmethod} & $\first{85.2}$ & $\first{73.3}$ & $\first{82.4}$ & $\first{84.5}$ & $\first{93.8}$ \\
\bottomrule
\end{tabular}
}
\label{t3}
\end{table}

\begin{table}
\centering
\caption{Classification accuracy of \ourmethod and the variants using each layer output for final prediction. $\first{First}$, \second{second}, and \third{third} indicate the top-three performance, respectively.}
\resizebox{1\columnwidth}{!}{
\begin{tabular}{l|ccccc} 
\toprule
\textbf{Method} & \textbf{Cora} & \textbf{Citeseer} & \textbf{Pubmed} & \textbf{Computers} & \textbf{Photo} \\ 
\midrule
\textbf{EP-0} & 71.9 & 64.0 & 76.5 & 77.1 & 88.2 \\ 
\textbf{EP-1} & 81.1 & 69.5 & 79.4 & $\third{83.9}$ & $\third{92.3}$ \\ 
\textbf{EP-2} & $\second{83.3}$ & $\second{72.5}$ & $\third{81.0}$ & $\second{84.0}$ & $\second{92.9}$ \\ 
\textbf{EP-3} & $\third{83.1}$ & $\third{72.3}$ & 80.5 & 83.3 & 92.1 \\ 
\textbf{EP-4} & 82.4 & 71.8 & 80.9 & 82.8 & 91.5 \\ 
\textbf{EP-5} & 82.5 & 71.7 & $\second{81.1}$ & 82.3 & 90.6 \\ 
\textbf{EP-6} & 82.3 & 71.6 & $\second{81.1}$ & 81.3 & 89.7 \\ 
\textbf{EP-7} & 82.4 & 71.4 & 80.7 & 80.2 & 88.8 \\ 
\textbf{EP-8} & 82.2 & 71.3 & 80.5 & 79.4 & 87.7 \\ 
\midrule
\textbf{\ourmethod} & $\first{85.2}$ & $\first{73.3}$ & $\first{82.4}$ & $\first{84.5}$ & $\first{93.8}$ \\
\bottomrule
\end{tabular}
}
\label{t4}
\end{table}

\subsection{Model Prediction Uncertainty Varies of Depth}

To verify the effectiveness of \ourmethod against the performance degradation problem of deep GNNs, we set the depths of 2, 8, 16, 32 and 64 layers respectively and conducted experiments with the GCNs with corresponding layers on five datasets. The results in Table~\ref{t5} demonstrate that the optimal propagation steps of \ourmethod are correlated with graph density. For relatively dense datasets, the optimal range typically lies between 4 to 8 hops, whereas for sparse datasets, it tends to span 10 to 16 hops.
Moreover, as the number of layers increases, the performance of GCN experiences a substantial decline. Conversely, the performance of EFGNN remains relatively stable over a wide range of propagation steps. This underscores that \ourmethod, built upon dependable multi-scale evidence fusion, effectively mitigates the degradation issue that plagues conventional GNNs as layer count increases. The distinct difference highlights the effectiveness of \ourmethod in leveraging long-range dependency to make predictions.

\begin{table}
\centering
\caption{Classification accuracy of GCN / \ourmethod with different depths. $\first{First}$, \second{second}, and \third{third} indicate the top-three performance, respectively.}
\resizebox{1\columnwidth}{!}{
\begin{tabular}{l|lllll} 
\toprule
\textbf{Dataset} & \multicolumn{1}{c}{\begin{tabular}[c]{@{}c@{}}\textbf{2}\\\end{tabular}} & \multicolumn{1}{c}{\begin{tabular}[c]{@{}c@{}}\textbf{8}\\\end{tabular}} & \multicolumn{1}{c}{\begin{tabular}[c]{@{}c@{}}\textbf{16}\\\end{tabular}} & \multicolumn{1}{c}{\begin{tabular}[c]{@{}c@{}}\textbf{32}\\\end{tabular}} & \multicolumn{1}{c}{\begin{tabular}[c]{@{}c@{}}\textbf{64}\\\end{tabular}} \\ 
\midrule
\textbf{Cora} & 81.8/80.9 & 69.5/85.2 & 69.4/$\third{85.3}$ & 60.3/$\first{85.9}$ & 28.7/$\second{85.4}$ \\ 
\textbf{Citeseer} & 70.8/71.5 & 30.2/$\second{73.3}$ & 18.3/$\first{74.5}$ & 25.0/$\third{73.0}$ & 20.0/72.8 \\ 
\textbf{Pubmed} & 79.3/79.2 & 61.2/$\third{82.4}$ & 40.9/$\first{83.0}$ & 22.4/$\second{82.5}$ & 35.3/82.2 \\ 
\textbf{Computers} & 82.4/$\second{84.3}$ & 67.7/$\first{84.5}$ & 28.3/$\third{83.2}$ & 17.0/83.0 & 13.6/82.8 \\ 
\textbf{Photo} & 91.2/$\third{93.2}$ & 82.4/$\first{93.8}$ & 69.2/$\second{93.5}$ & 38.3/$\third{93.2}$ & 17.1/$\third{93.2}$ \\
\bottomrule
\end{tabular}
}
\label{t5}
\end{table}

\begin{figure} [tbp]
	\centering
 \subfloat[Citeseer]{
		\includegraphics[scale=0.17]{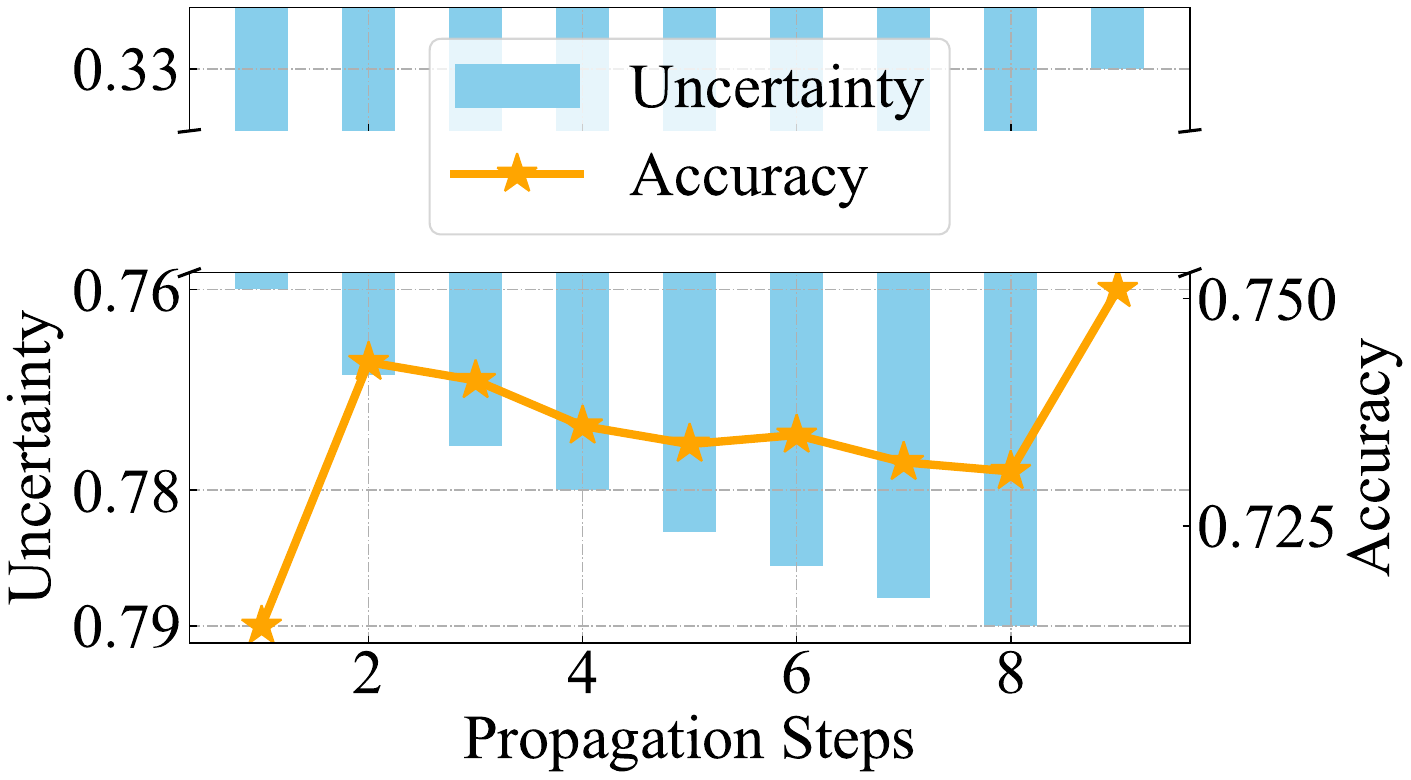}}\hfill
	\subfloat[Squirrel]{
		\includegraphics[scale=0.17]{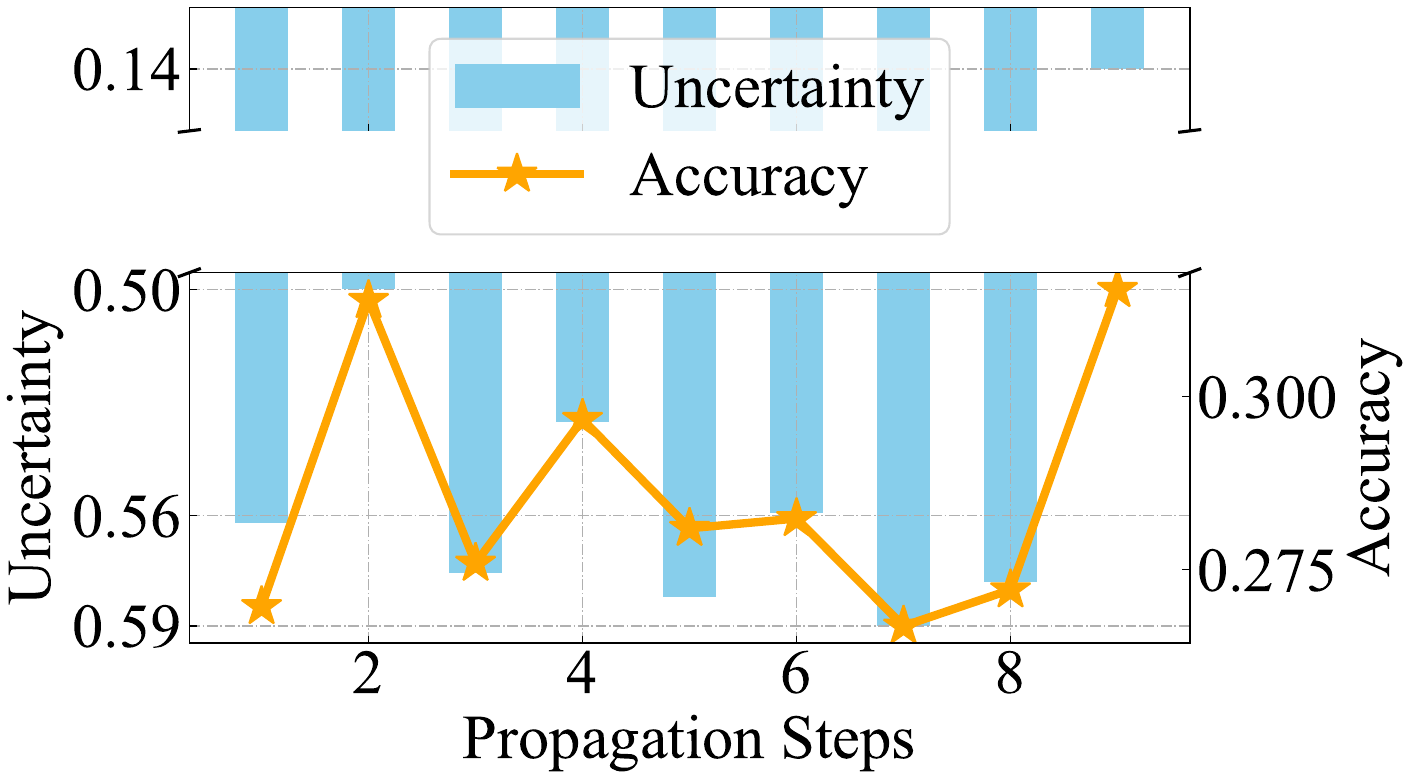}}
	\caption{The accuracy and uncertainty of \ourmethod in different propagation steps.}
    \label{fig:hop&ua}
\end{figure}

\begin{figure*} [tbp]
	\centering
 \subfloat[Cora\label{subfig:acc_u_cora}]{
		\includegraphics[scale=0.17]{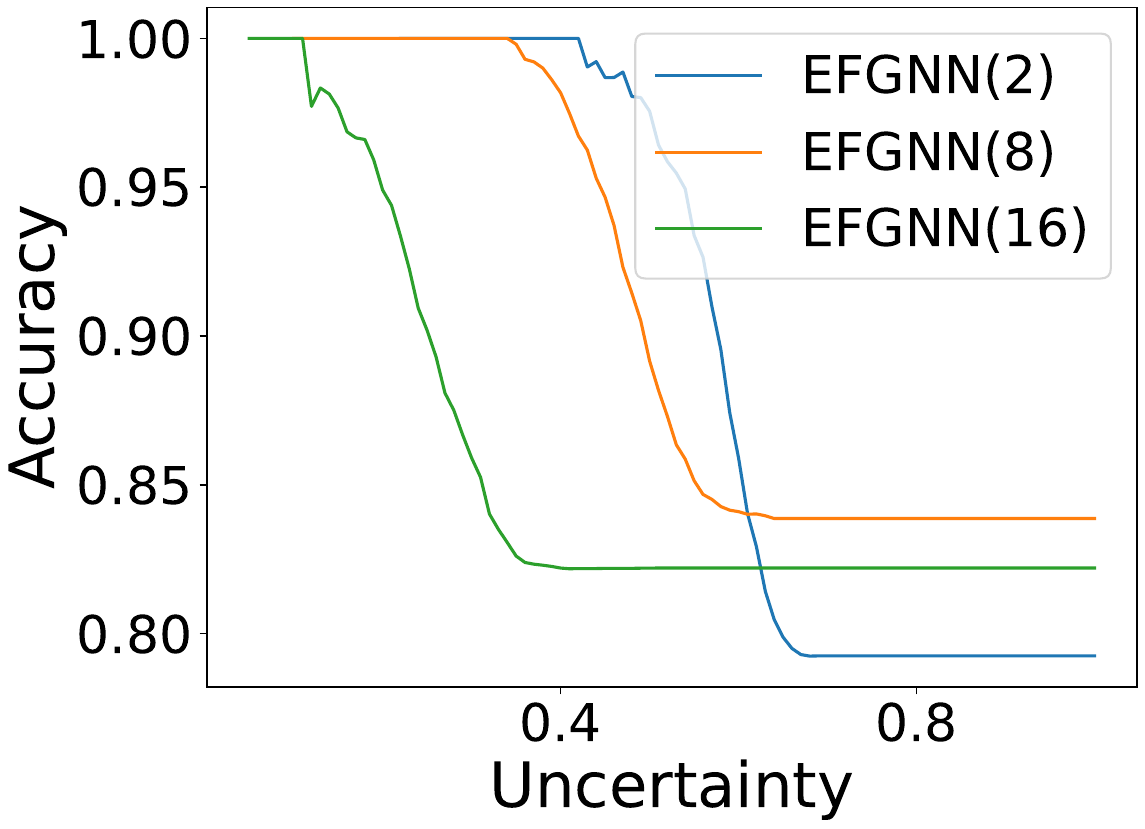}}\hfill
	\subfloat[Citeseer\label{subfig:acc_u_citeseer}]{
		\includegraphics[scale=0.17]{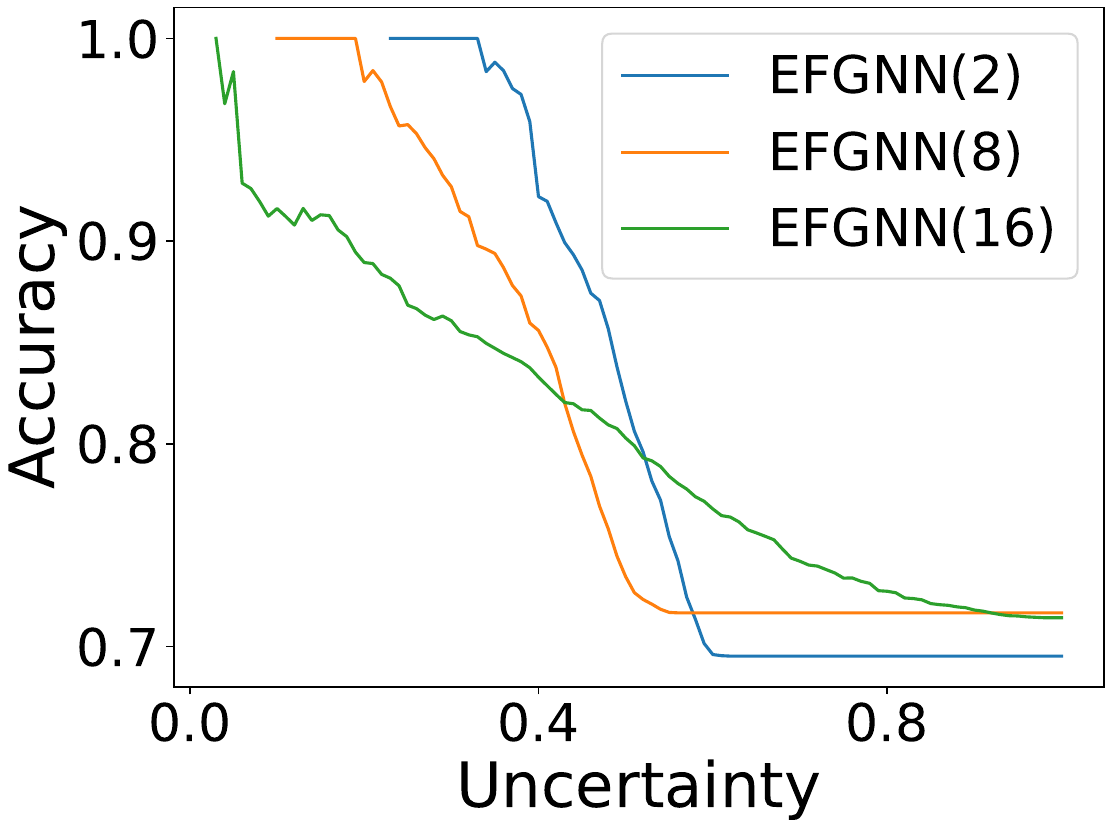}}\hfill
	\subfloat[Pubmed\label{subfig:acc_u_pubmed}]{
		\includegraphics[scale=0.17]{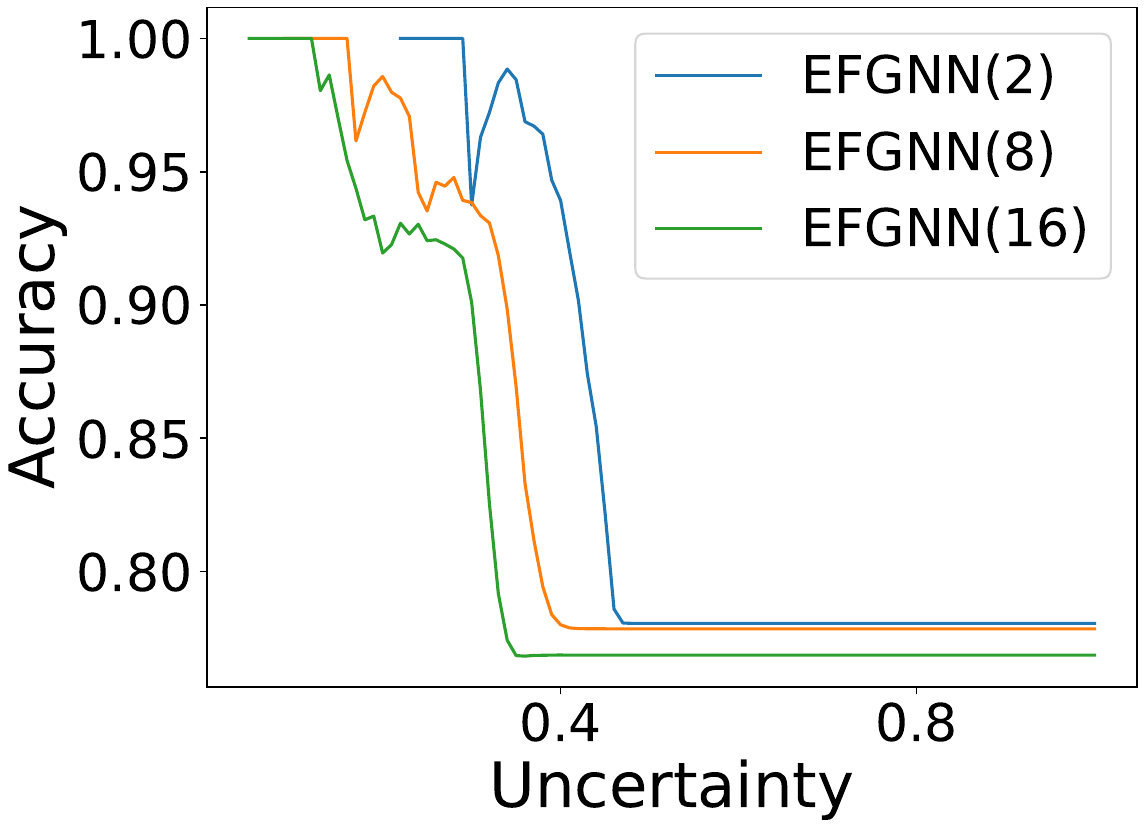}}\hfill
	\subfloat[Computers\label{subfig:acc_u_computers}]{
		\includegraphics[scale=0.17]{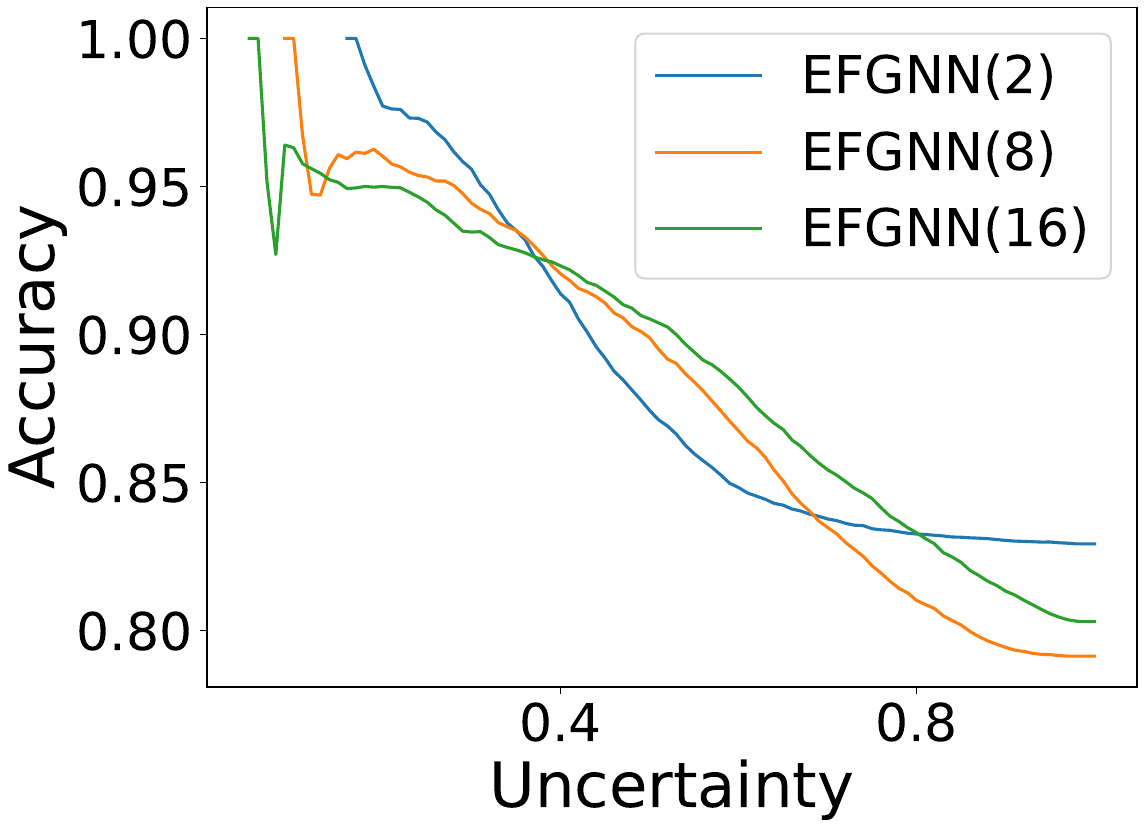}}\hfill
	\subfloat[Photo\label{subfig:acc_u_photo}]{
		\includegraphics[scale=0.17]{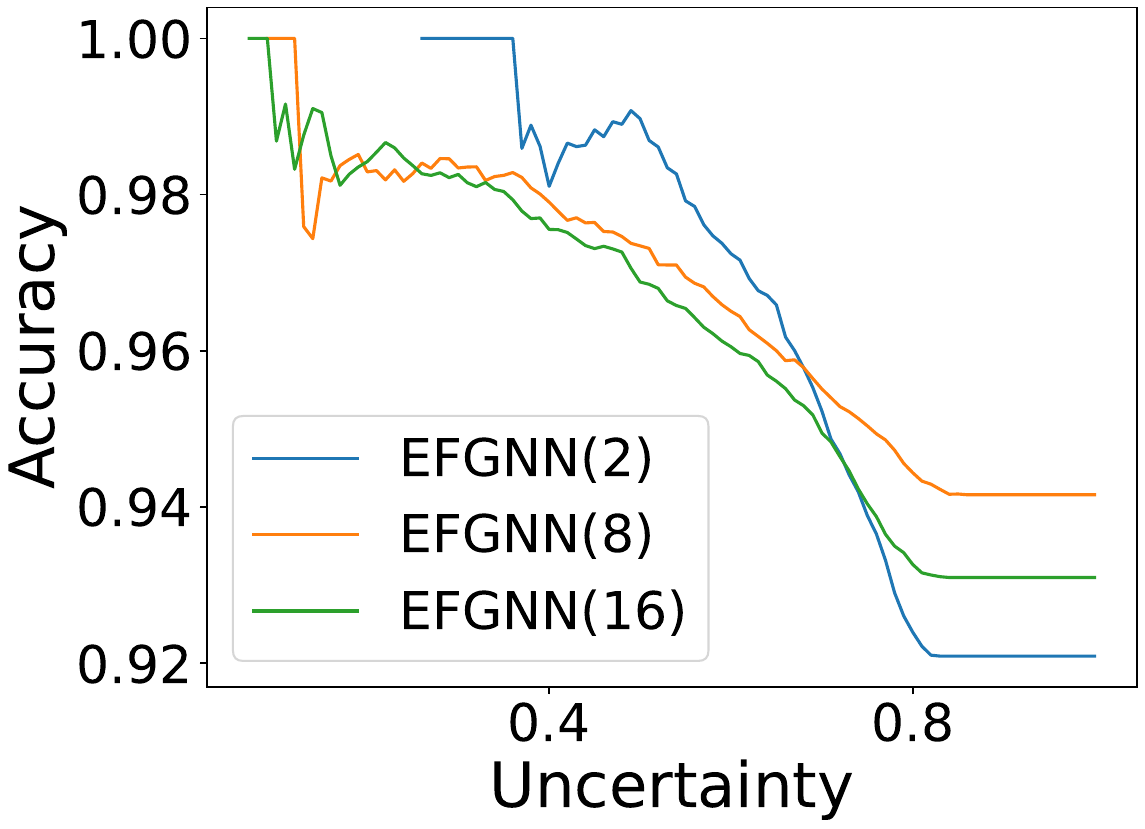}}
    \\

    \subfloat[Cora\label{subfig:noise_u_dis_cora}]{
		\includegraphics[scale=0.17]{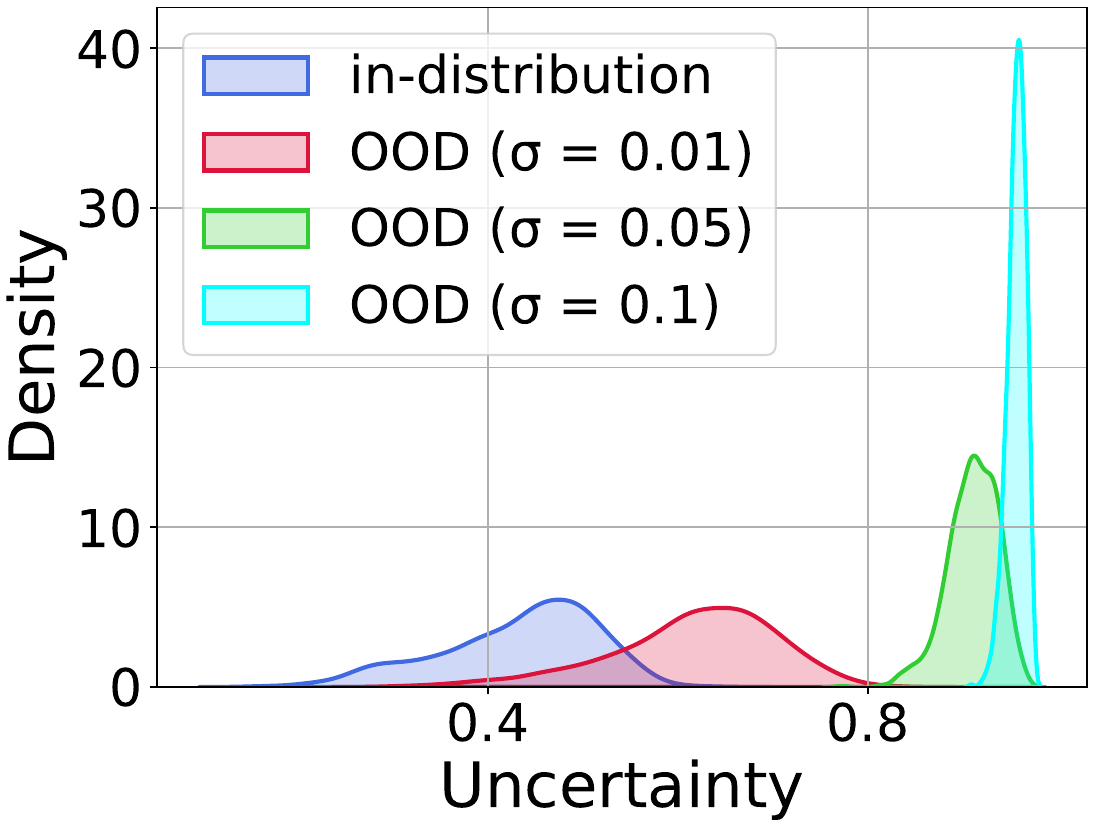}}\hfill
	\subfloat[Citeseer\label{subfig:noise_u_dis_citeseer}]{
		\includegraphics[scale=0.17]{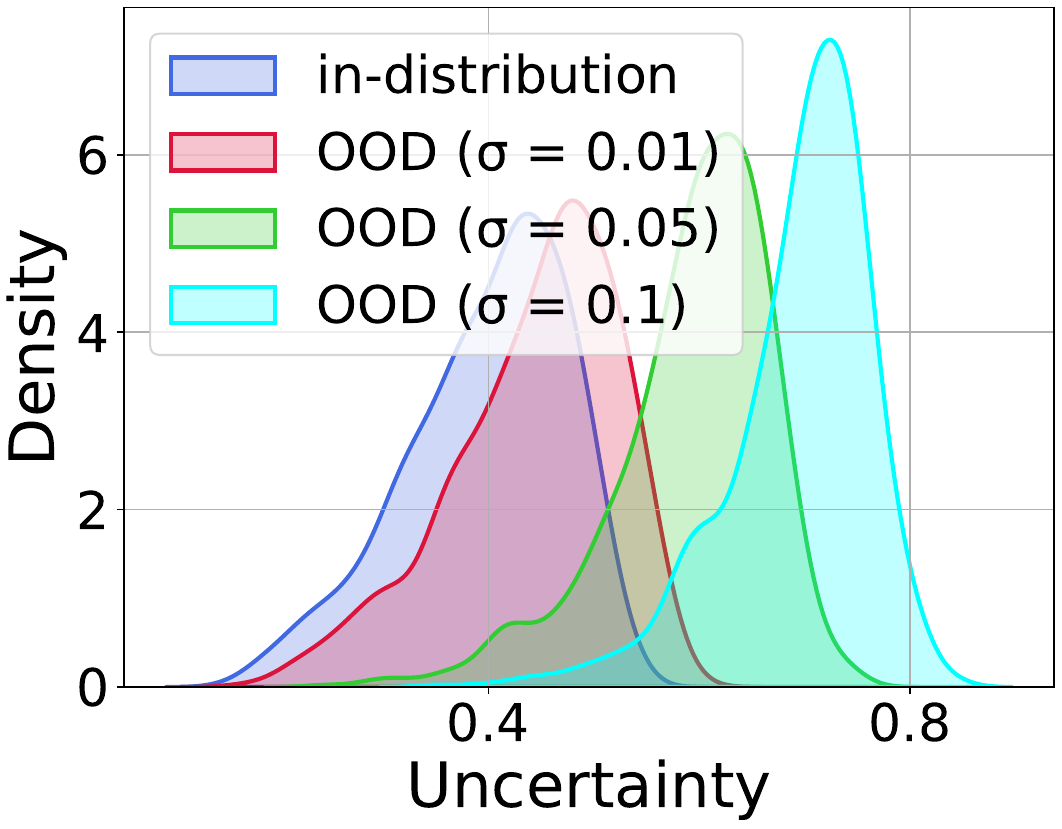}}\hfill
	\subfloat[Pubmed\label{subfig:noise_u_dis_pubmed}]{
		\includegraphics[scale=0.17]{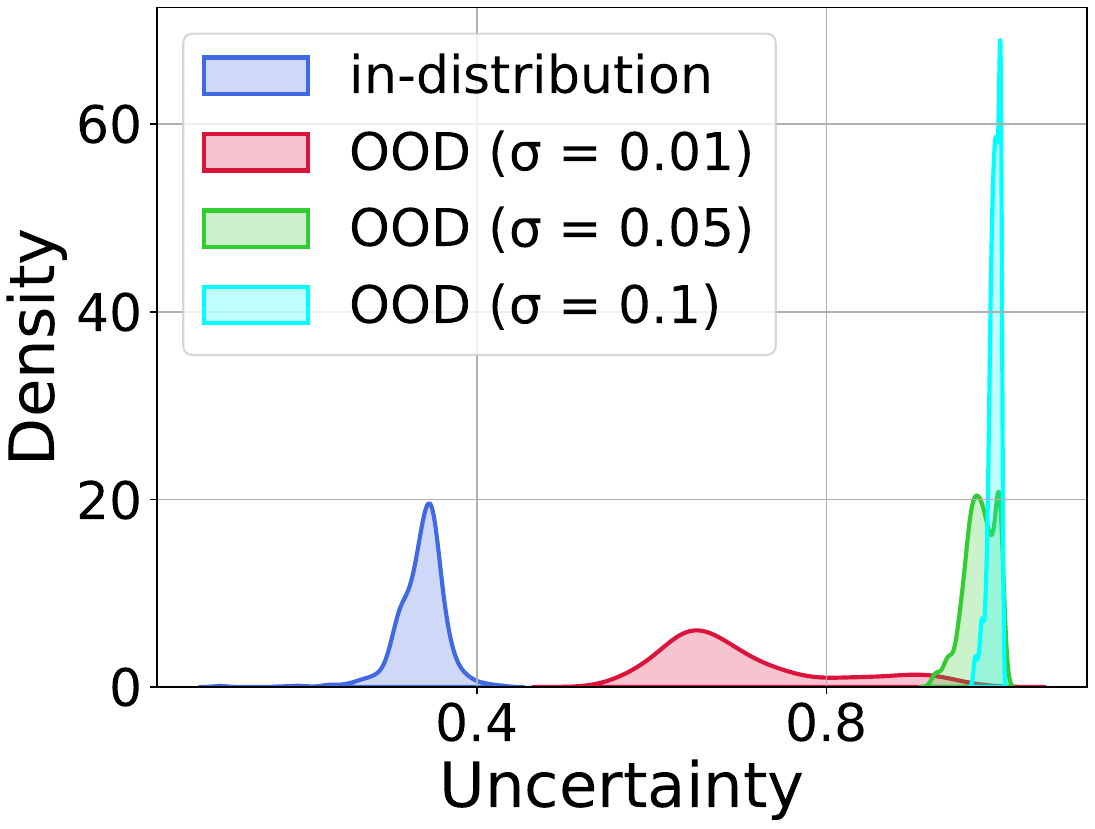}}\hfill
	\subfloat[Computers\label{subfig:noise_u_dis_computers}]{
		\includegraphics[scale=0.17]{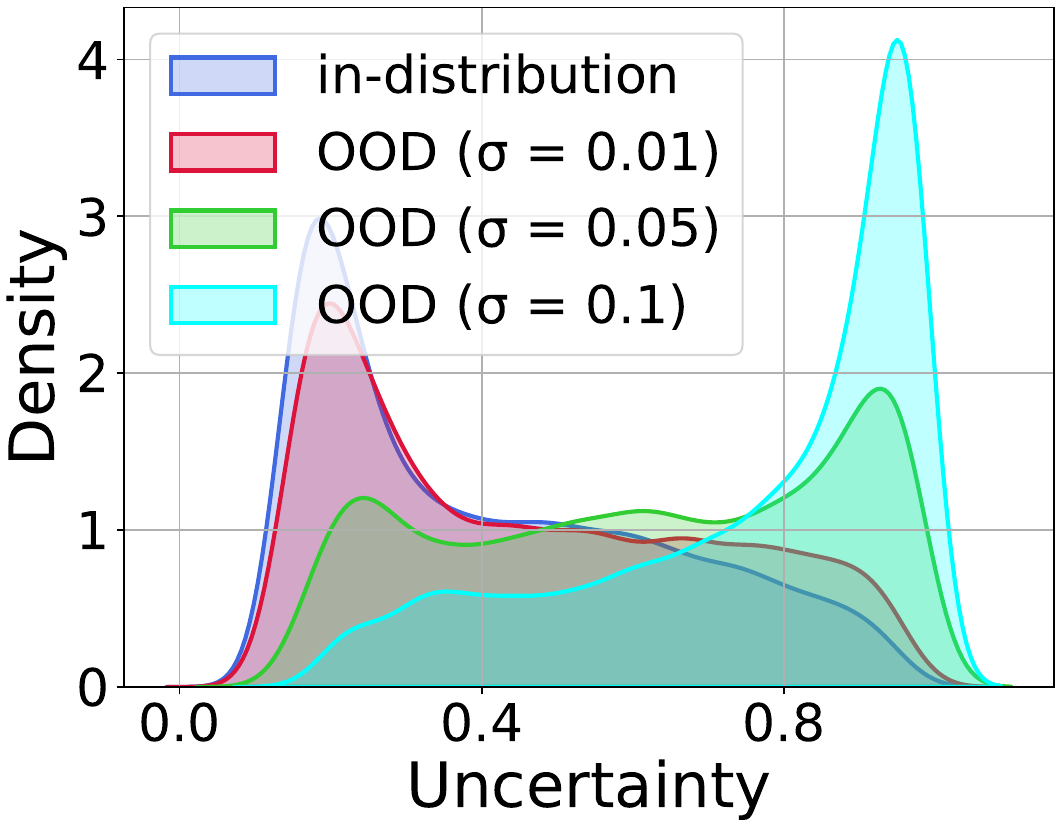}}\hfill
	\subfloat[Photo\label{subfig:noise_u_dis_photo}]{
		\includegraphics[scale=0.17]{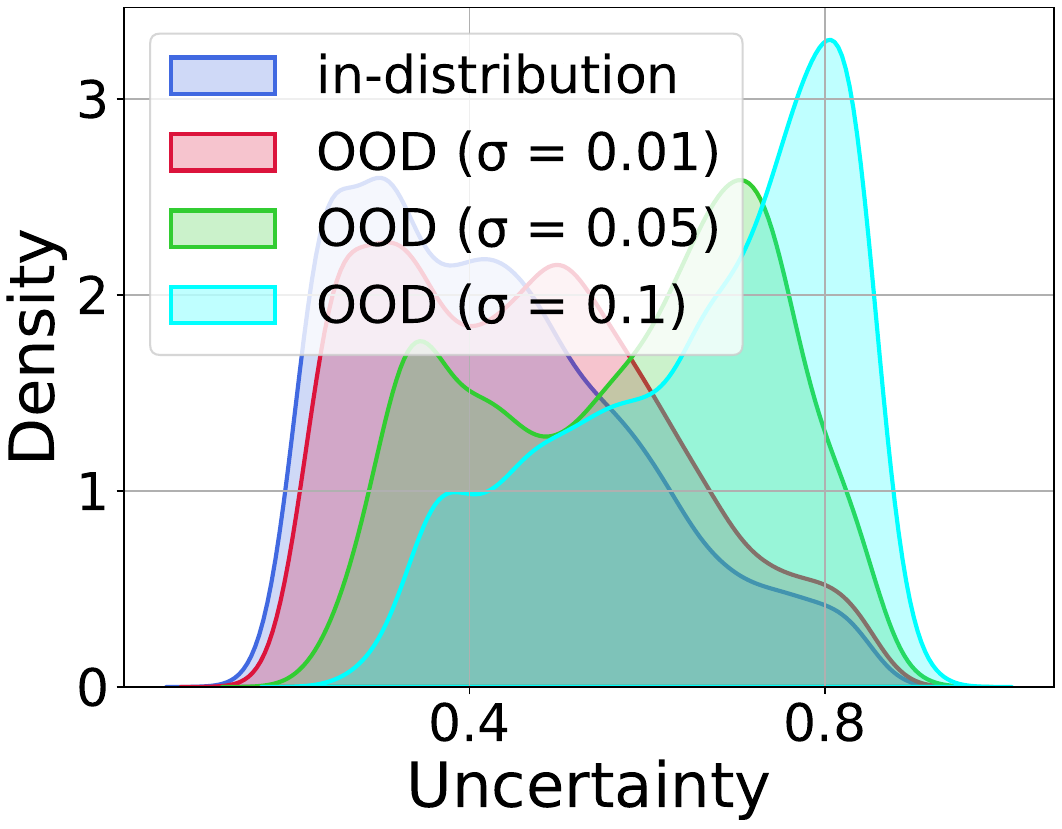}}
	\caption{Validity assessment of uncertainty on 5 datasets. (a)-(e): uncertainty threshold and accuracy; (f)-(j): uncertainty density distribution.}
	\label{fig:Uncertainty_Val}
\end{figure*}

Figure~\ref{fig:std_dis_compare} visualizes the density map of the standard deviation of the final output class probability distribution of GCN and \ourmethod. For fair comparisons, the evidence distribution of \ourmethod output is also processed by $\operatorname{SoftMax}$ when the depth is 2, 16, and 32. The results on all datasets demonstrate that the standard deviation of 2-layer GCNs is comparable to that of \ourmethod. However, with the number of layers increasing, the standard deviation of the 16-layer and 32-layer GCNs is positioned at the far left in the graph. This suggests that deep GCN models cannot make prediction with sufficient confidence. In contrast, \ourmethod has amassed substantial evidence for its prediction, granting it ample confidence through evidence fusion.

To further investigate the relative importance of evidence learned at different hops, we analyzed the average uncertainty of node predictions across various neighborhood distances. Specifically, we computed the mean uncertainty for each hop and visualized this alongside the corresponding model performance. As shown in Figure~\ref{fig:hop&ua}, we have the following observations:
the uncertainty associated with information across different hops fluctuates dynamically, and these variations are inconsistent across datasets. Uncertainty is moderately correlated with accuracy, with higher uncertainty typically indicating lower accuracy. Regardless of the variability in uncertainty across individual hops, the fused uncertainty is consistently the lowest, corresponding to the highest accuracy.
These results underscore that even high-uncertainty neighborhoods provide useful signals. Therefore, long-range propagation coupled with our CBF aggregation, which is based on evidence theory and thus fixed as a principled mechanism proves essential for effectively integrating multi-hop information to achieve optimal performance.

\subsection{Uncertainty Effectiveness Evaluation}

To validate the reliability of the uncertainty, we initially examine the impact of prediction uncertainty on model performance. To achieve this, we choose samples with uncertainties falling within certain threshold values across all node predictions and then compute the accuracy. As shown in Figure~\ref{fig:Uncertainty_Val}~(a)-(e), this reveals a consistent pattern between \ourmethod's classification accuracy and the uncertainty threshold across the five datasets. Specifically, as the uncertainty threshold decreases, the selected predictions are the most reliable ones, and the prediction accuracy of the model is improved; conversely, as the uncertainty threshold increases, the prediction accuracy diminishes. This means that specific thresholds of uncertainty can be set to enable more reliable predictions in scenarios with higher security, so the output of the model (the classification prediction and its corresponding uncertainty) is reasonable and supports trustworthy predictions.

In addition, we visually compare the uncertainty of the OOD (containing Gaussian noise) with that of the in-distribution (original) sample. Similar to work~\cite{geng2021uncertainty}, the noise vector (i.e., $\epsilon$) is sampled from the Gaussian distribution $N(0,I)$ and polluted the test set with $\eta$ intensity, i.e. $\widetilde{\mathbf{X}}=\mathbf{X}+\eta \boldsymbol{\epsilon}$. As shown in Figure~\ref{fig:Uncertainty_Val} (f)-(j), the model usually gives a higher estimate of uncertainty for the noise data outside the distribution. This shows that the model is effective in characterizing uncertainty, since it can distinguish between these data well.

\begin{figure} [h]
\centering{\includegraphics[scale=0.6]{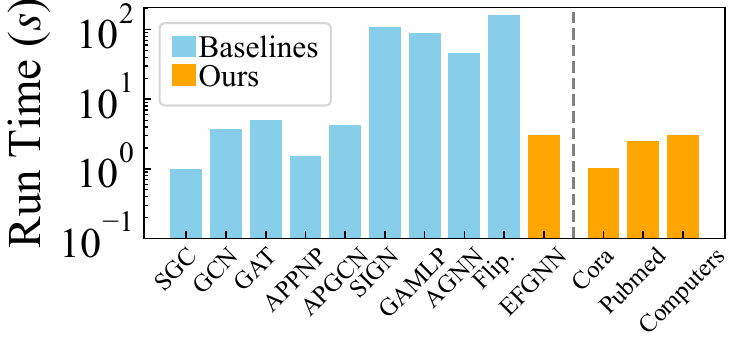}}
\caption{Efficiency comparison among various baseline methods on the Computers dataset.
And efficiency of \ourmethod across datasets of varying scales.}
\label{fig:depth}
\end{figure}

\subsection{Efficiency Analysis}
To further investigate the efficiency of \ourmethod, we conducted a study assessing the computational impact of our multi-hop evidence learning and CBF process. Specifically, by comparing EFGNN against a range of representative GNN architectures, including simpler models (SGC, GCN, GAT, APPNP) and more complex ones (SIGN, GAMLP, AGNN) by running each method for 100 epochs on the Computers dataset under identical conditions (see Figure~\ref{fig:depth}).
\textbf{1) Comparative efficiency with other GNNs.}
Despite the additional steps required for multi-hop evidence learning, the runtime of EFGNN is very competitive when compared to simpler GNN architectures. In contrast, complex models such as SIGN and Flip-APPNP incur significantly higher computational costs. This demonstrates that EFGNN strikes a favorable balance between efficiency and performance.
\textbf{2) Flexibility with dataset size.}
Our experiments also reveal that the computational cost scales moderately with the size of the dataset. For instance, the runtime increases from 1.03 seconds on the Cora dataset (2,708 nodes) to 3.04 seconds on Computers (13,381 nodes). This moderate increase confirms that the overhead introduced by our advanced multi-hop evidence learning and CBF process remains manageable even as graph size grows.

\section{Conclusion}
In this paper, we propose a robust and trustworthy graph neural network model, \ourmethod, based on uncertainty-aware for graph semi-supervised learning. \ourmethod focuses on fusing the evidence from different hops of neighborhoods to achieve reliable prediction. Our proposed model can effectively identify the high-risk prediction and use the evidence fusion mechanism to generate the most confident prediction. In addition, the model can reflect the reliability of the current prediction in the final classification, making possible failure prediction transparent. Both theoretical analysis and experimental results confirm the effectiveness of the \ourmethod in prediction accuracy and uncertainty estimation.

Furthermore, link prediction and graph-level classification also encounter challenges stemming from uncertain predictions. Incorporating evidence-based learning could significantly boost both the reliability and credibility of models in these domains. In future work, we plan to extend our framework to these tasks by refining the evidence fusion strategy, a refinement we believe may broaden the approach's applicability and yield deeper insights into robust graph representation learning.

% \section*{Acknowledgment}

\bibliographystyle{IEEEtran}
\bibliography{bibtex/IEEEexample}

\end{document}